\documentclass[accepted]{uai2026} 
                        
\usepackage[american]{babel}
\usepackage{multirow} 
\usepackage{natbib} 
\usepackage{mathtools} 
\usepackage{booktabs} 
\usepackage{tikz} 
\usepackage{amsmath}

\usepackage{amssymb}  
\usepackage{mathtools}
\usepackage{pifont}
\newcommand{\cmark}{\ding{51}}
\newcommand{\xmark}{\ding{55}}
\newcommand{\Ck}{C_k}
\newcommand{\vC}{\boldsymbol{C}}
\newcommand{\vp}{\boldsymbol{p}}
\newcommand{\vmu}{\boldsymbol{\mu}}
\newcommand{\Var}{\mathrm{Var}}
\newcommand{\Covmat}{\mathrm{Cov}}
\newcommand{\MI}{\mathcal{I}}
\newcommand{\Ent}{H}
\newcommand{\E}{\mathbb{E}}

\usepackage{amssymb}

\usepackage{amsthm}

\theoremstyle{plain}
\newtheorem{theorem}{Theorem}[section]

\newtheorem{lemma}[theorem]{Lemma}
\newtheorem{corollary}[theorem]{Corollary}
\theoremstyle{definition}
\newtheorem{definition}[theorem]{Definition}

\theoremstyle{remark}
\newtheorem{remark}[theorem]{Remark}

\title{Not Just How Much, But Where: Decomposing Epistemic Uncertainty into Per-Class Contributions}

\author[1]{\href{mailto:mame.toure@mail.mcgill.ca?Subject=Your UAI 2026 paper}{Mame Diarra Toure}{}}
\author[1]{David A. Stephens}

\affil[1]{Department of Mathematics and Statistics \\ McGill University}

\begin{document}
\maketitle
\begin{abstract}
In safety-critical classification, the cost of failure is often asymmetric. Yet Bayesian deep learning summarises epistemic uncertainty with a single scalar, mutual information (MI), which cannot distinguish whether a model's ignorance involves a benign or safety-critical class.
We decompose MI into a per-class vector $\Ck(x)=\sigma_k^{2}/(2\mu_k)$, with $\mu_k{=}\E[p_k]$ and $\sigma_k^2{=}\Var[p_k]$ across posterior samples. 
The decomposition follows from a second-order Taylor expansion of the entropy; the $1/\mu_k$ weighting corrects boundary suppression and makes $\Ck$ comparable across rare and common classes.
By construction $\sum_k \Ck \approx \mathrm{MI}$, and a companion skewness diagnostic flags inputs where the approximation degrades.
After characterising the axiomatic properties of $\Ck$, we validate it on three tasks:
(i)~selective prediction for diabetic retinopathy, where critical-class $\Ck$ reduces selective risk by 34.7\% over MI and 56.2\% over variance baselines;
(ii)~out-of-distribution detection on clinical and image benchmarks, where $\sum_k \Ck$ achieves the highest AUROC 
and the per-class view exposes asymmetric
shifts invisible to MI;
and (iii)~a controlled label-noise study in which $\sum_k \Ck$ shows less sensitivity to injected aleatoric noise than MI under end-to-end Bayesian training, while both metrics degrade under transfer learning.
Across all tasks, the quality of the posterior approximation shapes uncertainty at least as strongly as the choice of metric, suggesting that how uncertainty is propagated through the network matters as much as how it is measured. 
\end{abstract}
\section{Introduction}\label{sec:intro}

Deep learning classifiers deployed in safety-critical domains operate in
environments where the cost of failure is \emph{asymmetric}: missing a
sight-threatening retinal condition is categorically different from a false
positive; auto-allowing hate speech carries risks that auto-blocking benign
content does not.
Bayesian deep learning addresses this by maintaining a posterior over model
parameters, enabling principled decomposition of predictive uncertainty into
\emph{aleatoric} (irreducible data noise) and \emph{epistemic} (model
ignorance, reducible with data) components.
The foundational decomposition was formalised by~\citet{kendall2017uncertainties}
for vision tasks and by~\citet{depeweg2018decomposition} for latent-variable
models via the law of total variance; \citet{lakshminarayanan2017simple}
established deep ensembles as a scalable alternative to variational inference,
with~\citet{gustafsson2020evaluating} demonstrating empirically that ensembles
provide more reliable, better-calibrated uncertainty estimates than MC~dropout
across synthetic-to-real transfer benchmarks.

All of these methods, however, summarise epistemic uncertainty with a \emph{single scalar
per input}: the mutual information~$\MI(y;\boldsymbol{\omega}\mid x) =
\Ent(\vmu) - \E[\Ent(\vp)]$~\citep{houlsby2011bayesian,gal2016dropout}, where $\boldsymbol{\omega}$ denotes the
random model parameters under the posterior predictive distribution.
This scalar reveals \emph{how uncertain} the model is, but not \emph{which
classes} are driving that uncertainty, a distinction that matters because model
ignorance is rarely distributed uniformly across the label space.
A scalar MI of $0.3$~nats carries very different implications depending on
whether the confusion involves two benign classes or a benign and a
safety-critical one.

Recent work has approached the per-class direction from several angles, each
with a specific limitation (detailed comparison in
Appendix~\ref{app:related}).
\citet{sale2024labelwise} introduced a label-wise framework reducing $K$-class
problems to $K$ binary sub-problems, with per-class epistemic variance
$\text{EU}_k = \Var(\Theta_k)$ satisfying strong axiomatic
properties~\citep{wimmer2023quantifying}; however, raw variance suffers from
\emph{boundary suppression}, the constraint $\Var[p_k]\leq\mu_k(1-\mu_k)$
forces it to vanish for rare classes regardless of actual posterior
disagreement.
Dirichlet-based approaches~\citep{sensoy2018evidential,malinin_predictive_2018,
duan2024evidential} produce per-class epistemic covariances in a single
forward pass, but impose a strong distributional assumption on the probability
simplex.
Dataset-level methods~\citep{baltaci2023class,khan2019striking} address class
imbalance and difficulty but conflate aleatoric and epistemic components and
cannot provide input-level attribution.
Scalar epistemic metrics, including BALD~\citep{houlsby2011bayesian} for active
learning and the variance-MI connection of~\citet{smith2018understanding},
remain agnostic to class identity.
\citet{mucsanyi2024benchmarking} observed that aleatoric and epistemic estimates are often rank-correlated; \citet{dejong2026} prove this is a necessary consequence of estimator validity, shifting the defining criterion from decorrelation to \emph{orthogonality}, a property that no current method fully achieves.

In summary, no existing method provides an input-specific, normalised per-class
epistemic vector with a direct additive connection to MI. The Taylor expansion is therefore not the contribution in isolation; it is the
tool that exposes an additive, class-resolved attribution of the standard MI
quantity. The resulting construction differs from raw per-class variance by its
entropy-curvature weighting, from one-vs-all decompositions by preserving the
multiclass MI connection, and from Dirichlet/evidential approaches by making no
parametric assumption on the posterior predictive distribution.

We introduce $\vC(x) = [C_1(x),\dots,C_K(x)]^\top$, where
  $C_k(x) = \tfrac{1}{2}\,\Var[p_k](x)\,/\,\mu_k(x)$ 
is derived from a second-order Taylor approximation of MI
(Section~\ref{sec:method}).
The $1/\mu_k$ normalization arises naturally from the entropy Hessian and
counteracts boundary suppression; by construction,
$\sum_k C_k(x)\approx\MI(y;\boldsymbol{\omega}\mid x)$,
so each $C_k$ attributes a well-defined share of total epistemic uncertainty
to class~$k$.
\paragraph{Contributions.}
(1)~We derive $\vC(x)$ from the Taylor expansion of MI
(Section~\ref{sec:derivation}), characterise its axiomatic
properties relative to~\citet{wimmer2023quantifying}
(Section~\ref{sec:axioms}), and introduce a skewness diagnostic
that flags when the approximation degrades
(Section~\ref{sec:skewness}).
(2)~We validate $\vC(x)$ across three tasks: selective prediction
with class-specific deferral for diabetic retinopathy
(Section~\ref{sec:exp_dr}), OoD detection where per-class
contributions reveal asymmetric distributional shift
(Section~\ref{sec:ood}), and a controlled disentanglement study
showing that $\sum_k C_k$ is less sensitive to injected label noise
than MI under end-to-end Bayesian training, while both metrics
degrade substantially under transfer learning
(Section~\ref{sec:exp_disentangle})\footnote{Concurrently, \citet{dejong2026} show that pretrained deep ensembles exhibit worse AU/EU disentanglement than models trained from scratch, a conclusion we reach independently in Section~\ref{sec:exp_disentangle} via both the degradation of the $C_k$ approximation and increased AU/EU entanglement under transfer learning.}.%
\section{Per-Class Epistemic Uncertainty}\label{sec:method}
We decompose MI into a $K$-dimensional vector $\vC(x)$ via second-order Taylor expansion of the entropy. Each component $C_k$ quantifies class $k$'s epistemic contribution, and the sum approximates MI.
\paragraph{Setup and Notation}

Consider a $K$-class classifier with input $x$.
We perform $S$ stochastic forward passes (e.g., MC~dropout, posterior sampling, or ensemble members), each producing a probability vector $\vp^{(s)}(x) = \mathrm{softmax}(\boldsymbol{z}^{(s)}(x)) \in \Delta^{K-1}$, where $\boldsymbol{z}^{(s)}$ denotes the logits from pass~$s$.
The Monte Carlo estimates of the mean and covariance are:
{\small
\begin{align}
  \vmu(x) &= \tfrac{1}{S}\textstyle\sum_{s=1}^{S}\vp^{(s)}(x), \label{eq:mu}\\
  \Covmat[\vp] &= \tfrac{1}{S-1}\textstyle\sum_{s=1}^{S}
    (\vp^{(s)}-\vmu)(\vp^{(s)}-\vmu)^\top, \label{eq:cov}
\end{align}
}
with diagonal elements $\Var[p_k] = \frac{1}{S-1}\sum_s(p_k^{(s)}-\mu_k)^2$.
Throughout, we rely on the standard decomposition
{\small
\begin{equation}\label{eq:decomp}
  \underbrace{\Ent(\vmu)}_{\text{Total}}
  = \underbrace{\E[\Ent(\vp)]}_{\text{Aleatoric}}
  + \underbrace{\Ent(\vmu)-\E[\Ent(\vp)]}_{\text{Epistemic }(\MI)},
\end{equation}
}
where $\Ent(\vp)=-\sum_k p_k\log p_k$ is the Shannon entropy and the
epistemic term equals the mutual information $\MI(y;\boldsymbol{\omega}\mid x)$.

\subsection{From Scalar MI to a Per-Class Decomposition}\label{sec:derivation}

Our strategy is to approximate $\E[\Ent(\vp)]$ via a Taylor expansion around $\vmu$, which yields an expression for MI that decomposes additively over classes.

\begin{lemma}[Entropy Derivatives]\label{lem:entropy_derivatives}
The Shannon entropy $\Ent(\vp)=-\sum_k p_k\log p_k$ satisfies
\small{\begin{equation}\label{eq:hessian}
  \frac{\partial\Ent}{\partial p_k} = -1-\log p_k,\quad
  \frac{\partial^2\Ent}{\partial p_k\partial p_j} = -\frac{\delta_{kj}}{p_k},\quad
  \frac{\partial^3\Ent}{\partial p_k^3} = \frac{1}{p_k^2}.
\end{equation}}\noindent where $\delta_{kj}$ is the Kronecker delta ($\delta_{kj}=1$ if $k=j$, $0$ otherwise).

\end{lemma}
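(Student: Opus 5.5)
The computation is a direct differentiation, once we fix the setting. We treat $\Ent$ as a function on the open positive orthant $\{\vp\in\mathbb{R}^K : p_k>0\}$, i.e.\ the $p_k$ are independent coordinates and the simplex constraint $\sum_k p_k=1$ is not imposed. This is the convention under which the Taylor expansion around $\vmu$ in Section~\ref{sec:derivation} will be carried out. Because $\Ent(\vp)=\sum_k h(p_k)$ with $h(t)=-t\log t$, the function is additively separable. The plan is therefore to reduce everything to the one-variable function $h$ and then read off the mixed partials from separability.

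\textbf{First derivative.} By the product rule, $h'(t)=-\log t - t\cdot\tfrac{1}{t} = -1-\log t$. Since only the $k$-th summand depends on $p_k$, this gives $\partial\Ent/\partial p_k = -1-\log p_k$.

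\textbf{Second derivatives.} Differentiating once more gives $h''(t)=-1/t$, so the diagonal entries are $-1/p_k$. For $j\neq k$, the first partial $\partial\Ent/\partial p_k$ depends on $p_k$ alone, so its derivative with respect to $p_j$ vanishes. Together these yield $-\delta_{kj}/p_k$, a diagonal Hessian. The analogous argument shows that all third-order mixed partials vanish, and $h'''(t)=1/t^2$ gives $\partial^3\Ent/\partial p_k^3=1/p_k^2$.

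The only point needing care is the convention, not the calculus. If one instead parametrised the simplex with $K-1$ free coordinates, the Hessian would acquire off-diagonal terms $-1/p_K$. I would therefore state the orthant convention explicitly and remark that it is harmless for the later expansion. The reason is that the deviations $\vp-\vmu$ satisfy $\sum_k(p_k-\mu_k)=0$, so the quadratic form is evaluated along tangent directions of the simplex, where the unconstrained Hessian gives the correct second-order term. Smoothness holds because all $p_k>0$, which softmax outputs guarantee.
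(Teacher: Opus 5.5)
Your proposal is correct and matches the paper's approach. The paper states the lemma without a separate proof, as a direct coordinatewise differentiation of the separable sum $\sum_k(-p_k\log p_k)$ with the $p_k$ treated as independent coordinates, and it uses that same convention in Appendix~\ref{app:proof:taylor} when it pairs $\nabla^2\Ent(\vmu)=\mathrm{diag}(-1/\mu_1,\dots,-1/\mu_K)$ with the full $K\times K$ covariance. Your justification that the unconstrained Hessian is harmless is a sharper version of the paper's own ``Remark on the simplex constraint'': since $\sum_k(p_k-\mu_k)=0$, the deviations lie in the tangent space of the flat simplex.
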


The Hessian is diagonal and negative semidefinite on the interior of the simplex. The diagonal Hessian is what makes a per-class decomposition possible: there are
no cross-class terms at second order, so the quadratic approximation splits
cleanly over classes.
\begin{theorem}[MI Approximation \footnote{\citet{smith2018understanding} also exploit a Taylor expansion to
relate softmax variance to MI, but expand the \emph{logarithm}
$\log p_k$ rather than the entropy function $\Ent(\vp)$ itself (their
Section~3.2, Eq.~10).
This yields a class-\emph{averaged} variance
$\frac{1}{K}\sum_k\Var[p_k]$ with equal weights per class, whereas
expanding the entropy via its Hessian yields the
curvature-\emph{weighted} sum $\frac{1}{2}\sum_k\Var[p_k]/\mu_k$,
in which the $1/\mu_k$ factor enables per-class attribution.}]\label{thm:taylor}
Let $\vp^{(s)}\sim Q$ with mean $\vmu=\E_Q[\vp]$ and per-class variance
$\Var[p_k]$.
Then
\begin{equation}\label{eq:EH-approx}
  \E_Q[\Ent(\vp)] \;\approx\; \Ent(\vmu) - \frac{1}{2}\sum_{k=1}^{K}
    \frac{\Var[p_k]}{\mu_k},
\end{equation}
with remainder $O(\E[\|\vp-\vmu\|^3])$, and consequently
\begin{equation}\label{eq:MI-approx}
  \MI(y;\boldsymbol{\omega}\mid x) \;\approx\;
  \frac{1}{2}\sum_{k=1}^{K}\frac{\Var[p_k](x)}{\mu_k(x)}.
\end{equation}
\end{theorem}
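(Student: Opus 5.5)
We expand $\Ent$ to second order around the mean $\vmu$ and take expectations under $Q$; Lemma~\ref{lem:entropy_derivatives} supplies every derivative needed. Assume $\vmu$ lies in the interior of the simplex, so $\mu_k>0$ for all $k$ and $\Ent$ is smooth on a neighbourhood of $\vmu$. For $\vp$ in the simplex, Taylor's theorem gives
\begin{equation*}
  \Ent(\vp) = \Ent(\vmu) + \nabla\Ent(\vmu)^\top(\vp-\vmu) + \tfrac{1}{2}(\vp-\vmu)^\top \nabla^2\Ent(\vmu)(\vp-\vmu) + R_3(\vp),
\end{equation*}
where $R_3$ is the third-order Lagrange remainder.

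\textbf{Linear term.} Take the expectation under $Q$. Since $\E_Q[\vp-\vmu]=0$, the linear term vanishes. One could also observe that $\nabla\Ent(\vmu)=(-1-\log\mu_k)_k$ and that $\sum_k(p_k-\mu_k)=0$ on the simplex, so the constant $-1$ drops out regardless; but linearity of expectation is enough here.

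\textbf{Quadratic term.} By Lemma~\ref{lem:entropy_derivatives} the Hessian is $\mathrm{diag}(-1/\mu_k)$. The quadratic term therefore splits over classes with no cross terms:
\begin{equation*}
  \tfrac{1}{2}\E_Q\Big[-\sum_k (p_k-\mu_k)^2/\mu_k\Big] = -\tfrac{1}{2}\sum_k \Var[p_k]/\mu_k .
\end{equation*}
This is exactly the correction in \eqref{eq:EH-approx}.

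\textbf{Remainder.} Since $\nabla^3\Ent$ is diagonal with entries $1/p_k^2$, we have
\begin{equation*}
  R_3(\vp)=\tfrac{1}{6}\sum_k (p_k-\mu_k)^3/\xi_k^2,
\end{equation*}
where $\xi$ lies on the segment between $\vmu$ and $\vp$. Fix a neighbourhood on which every $\xi_k\ge c>0$, for instance $c=\min_k\mu_k/2$. On that neighbourhood $|R_3|\le \tfrac{1}{6c^2}\|\vp-\vmu\|_3^3$, which after taking expectations gives the claimed $O(\E[\|\vp-\vmu\|^3])$.

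Substituting the resulting approximation of $\E_Q[\Ent(\vp)]$ into the decomposition \eqref{eq:decomp}, $\MI=\Ent(\vmu)-\E[\Ent(\vp)]$, makes the $\Ent(\vmu)$ terms cancel and yields \eqref{eq:MI-approx} directly, with the same remainder.

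\textbf{Main obstacle.} The hard step is making the remainder bound rigorous near the simplex boundary. The third derivative $1/p_k^2$ blows up as $p_k\to0$, so no global constant exists, and samples $\vp^{(s)}$ can put mass arbitrarily close to a vertex even when $\mu_k$ is moderate. I would first state the result as a local asymptotic statement: if $Q$ is supported in a ball around $\vmu$ that stays inside $\{p_k\ge c\}$, the constant above works. I would then interpret the $O(\cdot)$ as holding uniformly only under such concentration, with the implied constant scaling like $1/\min_k\mu_k^2$.

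This dependence on $1/\min_k\mu_k^2$ is precisely why rare classes and skewed posteriors degrade the approximation. Identifying the leading cubic term $\tfrac{1}{6}\sum_k \E[(p_k-\mu_k)^3]/\mu_k^2$ would also motivate the skewness diagnostic of Section~\ref{sec:skewness}. Handling samples that leave the neighbourhood would need a separate tail bound, for example using the fact that $\Ent$ is bounded by $\log K$ on the simplex. I would note this bound as a caveat rather than fully control it.
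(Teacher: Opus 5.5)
Your proposal is correct and follows essentially the same route as the paper's proof. Both expand $\Ent$ to second order about $\vmu$, note that the linear term vanishes in expectation, use the diagonal Hessian $\mathrm{diag}(-1/\mu_k)$ to turn the quadratic term into $-\tfrac12\sum_k\Var[p_k]/\mu_k$, and substitute into $\MI=\Ent(\vmu)-\E[\Ent(\vp)]$. Your explicit Lagrange-remainder analysis is more careful than the paper, which simply asserts the $O(\E[\|\vp-\vmu\|^3])$ remainder.

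The boundary caveat you leave open closes by separability. Write $p_k=t\mu_k$; then the per-class remainder equals $\mu_k\bigl(-t\log t+(t-1)+\tfrac12(t-1)^2\bigr)$, and its absolute value is at most $\tfrac12\mu_k|t-1|^3$ for all $t\ge 0$. Hence the bound $\sum_k \E|p_k-\mu_k|^3/(2\mu_k^2)$ holds globally, with no localisation needed.
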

The simplex constraint forces $\Covmat[\vp]$ to have rank at most $K{-}1$, coupling the $K$ components; this does not invalidate the trace formula. See Appendix~\ref{app:proof:taylor} for proof. 
\begin{remark}[Exact classwise MI identity]
Because Shannon entropy is separable across classes, MI also admits the exact
additive identity
{\small \begin{equation}
\label{eq:exact-classwise-mi}
\MI(y;\boldsymbol{\omega}\mid x)
=
\sum_{k=1}^{K} M_k(x),
\end{equation}
where
\begin{equation} 
\label{eq:exact-classwise-mi-term}
M_k(x)
=
\E\bigl[p_k\log p_k\bigr]
-
\mu_k\log\mu_k
=
\E\left[
p_k\log\left(\frac{p_k}{\mu_k}\right)
\right],
\end{equation}
}
which tends to zero as $\mu_k\to 0^+$. Therefore, although $M_k$ is exact, it can still be attenuated for rare classes.
Our goal is different: we seek a variance-based, boundary-aware classwise
score whose dependence on posterior disagreement is explicit. The quantity
$C_k$ introduced below is the leading second-order approximation to this exact
term. Its $1/\mu_k$ weighting exposes the entropy-curvature correction that
prevents raw variance from being suppressed near low-probability classes, while
the skewness diagnostic in Section~\ref{sec:skewness} indicates when
higher-order terms are non-negligible.
\end{remark}

\subsection{Per-Class Epistemic Uncertainty Vector}\label{sec:definition}

The additive structure of~\eqref{eq:MI-approx} invites a natural definition.

\begin{definition}[Per-Class Epistemic Uncertainty]\label{def:Ck}
For a $K$-class classifier with mean $\vmu(x)$ and per-class variance
$\Var[p_k](x)$ from $S$ stochastic forward passes, define the
\textbf{per-class epistemic uncertainty vector}
$\vC(x)=[C_1(x),\dots,C_K(x)]^\top$ with
\begin{equation}\label{eq:Ck}
  \boxed{C_k(x) \;:=\; \frac{1}{2}\,\frac{\Var[p_k](x)}{\mu_k(x)},}
\end{equation}
In practice, we add $\varepsilon=10^{-10}$ to the denominator for numerical stability.%
\footnote{The ratio $\Var[p_k]/\mu_k$ coincides with the
\emph{index of dispersion}~\citep{CoxLewis1966} (Fano factor~\citep{Fano1947}).
Here the $\tfrac{1}{2}$ prefactor and the $1/\mu_k$ weighting both emerge from
the entropy Taylor expansion, so the components sum to a well-defined
information-theoretic quantity.}
When $\mu_k=0$ across all passes, $\Var[p_k]=0$ and $C_k=0$ trivially.
\end{definition}

By construction, the vector satisfies
\begin{equation}\label{eq:additive}
  \sum_{k=1}^{K} C_k(x) \;\approx\; \MI(y;\boldsymbol{\omega} \mid x).
\end{equation}
so each $C_k$ attributes a well-defined share of total epistemic uncertainty to
class~$k$, providing the localisation that scalar MI lacks.

\subsection{Why Variance Alone Fails: Boundary Suppression}\label{sec:motivation}

A natural alternative to $C_k$ would be to use the raw variance $\Var[p_k]$
as a per-class epistemic measure, as proposed by \citet{sale2024labelwise}.
However, variance on a bounded interval exhibits a structural limitation near
the simplex boundary that the $1/\mu_k$ normalisation in~\eqref{eq:Ck} is
designed to address.

\begin{lemma}[Variance Bound on the Simplex]\label{lem:boundary}
For any class $k$ with mean prediction $\mu_k \in [0,1]$:
\begin{equation}\label{eq:var-bound}
  \Var[p_k] \;\leq\; \mu_k(1 - \mu_k).
\end{equation}
Consequently, $\Var[p_k] \to 0$ as $\mu_k \to 0$ or $\mu_k \to 1$,
regardless of the degree of model disagreement in logit space.
See Appendix~\ref{app:proof:boundary}.
\end{lemma}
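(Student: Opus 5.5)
The plan is to use the fact that $p_k$ takes values in $[0,1]$, so that $p_k^2 \le p_k$ pointwise, and then pass to expectations. Concretely, for any distribution $Q$ of the probability vector, or for the empirical distribution of the $S$ forward passes, the values $p_k^{(s)}$ lie in $[0,1]$ because $\vp^{(s)}\in\Delta^{K-1}$. Therefore $p_k(1-p_k)\ge 0$, i.e.\ $p_k^2\le p_k$. Taking expectations gives $\E[p_k^2]\le \E[p_k]=\mu_k$. Hence
\begin{equation*}
\Var[p_k]=\E[p_k^2]-\mu_k^2\le \mu_k-\mu_k^2=\mu_k(1-\mu_k).
\end{equation*}
Equivalently, one may write $\mu_k(1-\mu_k)-\Var[p_k]=\E[p_k(1-p_k)]\ge 0$, which also identifies the equality case: $p_k\in\{0,1\}$ almost surely.

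The one subtlety is the normalisation in~\eqref{eq:cov}: the sample variance there uses the factor $1/(S-1)$ rather than $1/S$. The argument above gives the bound exactly for the population variance and for the $1/S$ empirical variance. For the unbiased estimator it gives $\Var[p_k]\le \tfrac{S}{S-1}\mu_k(1-\mu_k)$. I would state the lemma for the population (or $1/S$) variance and remark that the $S/(S-1)$ factor is immaterial to the limiting conclusion. This bookkeeping point is the only place where care is needed; the rest is a one-line inequality.

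For the consequence, the right-hand side $\mu_k(1-\mu_k)\to 0$ as $\mu_k\to 0$ or $\mu_k\to 1$, and $\Var[p_k]\ge 0$, so the squeeze argument forces $\Var[p_k]\to 0$. To justify the phrase ``regardless of the degree of model disagreement in logit space'', I would observe that the bound depends only on $\mu_k$ and not on the logits $\boldsymbol{z}^{(s)}$. As an illustration, logits $z_k^{(s)}$ can have arbitrarily large spread while the softmax saturates near $0$, keeping $\mu_k$ small and hence $\Var[p_k]$ small. No step here is a real obstacle.
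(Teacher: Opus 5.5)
Your proof is the same as the paper's: use $p_k^2\le p_k$ on $[0,1]$, take expectations to get $\Var[p_k]=\E[p_k^2]-\mu_k^2\le\mu_k(1-\mu_k)$, and conclude the boundary consequence by squeezing. Your caveat about the $1/(S-1)$ normalisation is correct and is something the paper's proof silently skips, since it uses the population (equivalently $1/S$) identity. For the Bessel-corrected estimator the bound therefore holds only up to the factor $S/(S-1)$; for example, $S=2$ with $p_k^{(1)}=0$ and $p_k^{(2)}=1$ gives sample variance $1/2>1/4=\mu_k(1-\mu_k)$.
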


This bound reveals a concrete problem: for a class with $\mu_k \approx 0.01$,
variance is capped at roughly $0.01$ even if the model's stochastic passes
exhibit maximal disagreement about that class.
The following lemma shows that $C_k$ corrects exactly this pathology.

\begin{lemma}[Boundary Behaviour of $C_k$]\label{lem:ck-boundary}
Under the same conditions:
\begin{equation}
  C_k = \frac{1}{2}\,\frac{\Var[p_k]}{\mu_k} \;\leq\; \frac{1}{2}(1-\mu_k),
\end{equation}
which approaches $\frac{1}{2}$ (not zero) as $\mu_k \to 0$.
While $\Var[p_k]$ is crushed to zero near the simplex
boundary, $C_k$ retains a non-vanishing upper bound.
See Appendix~\ref{app:proof:boundary}.

\end{lemma}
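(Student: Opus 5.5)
The plan is to divide the simplex bound of Lemma~\ref{lem:boundary} by $2\mu_k$, treating the interior case $\mu_k>0$ and the degenerate case $\mu_k=0$ separately, and then read off the limit.

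\emph{Interior case $\mu_k>0$.} Lemma~\ref{lem:boundary} gives $\Var[p_k]\le\mu_k(1-\mu_k)$. If we want a self-contained argument, this follows because $p_k\in[0,1]$ implies $p_k^2\le p_k$, hence
\[
\E[p_k^2]\le\mu_k
\quad\text{and}\quad
\Var[p_k]=\E[p_k^2]-\mu_k^2\le\mu_k-\mu_k^2 .
\]
Dividing by $2\mu_k>0$ preserves the inequality and yields
\[
C_k=\frac{\Var[p_k]}{2\mu_k}\le\tfrac12(1-\mu_k).
\]

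\emph{Degenerate case $\mu_k=0$.} Since $p_k\ge0$, zero mean forces $p_k=0$ almost surely, so $\Var[p_k]=0$. By the convention in Definition~\ref{def:Ck}, $C_k=0\le\tfrac12$, and the bound still holds.

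\emph{Limit and contrast.} As $\mu_k\to0^+$, the right-hand side $\tfrac12(1-\mu_k)$ tends to $\tfrac12$. The bound on $\Var[p_k]$ itself, $\mu_k(1-\mu_k)$, tends to $0$. This is the claimed contrast: the upper bound on $C_k$ does not vanish at the boundary.

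The only step needing care is the variance convention. For the Monte Carlo estimator with the $1/(S-1)$ normalisation in~\eqref{eq:cov}, the bound becomes
\[
\Var[p_k]\le \tfrac{S}{S-1}\,\mu_k(1-\mu_k),
\]
which inflates the constant by $S/(S-1)$. I would therefore state the lemma for the population variance under $Q$, or equivalently the $1/S$ empirical version, and note the factor $S/(S-1)$ for the unbiased estimator.

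Optionally, I would show that the bound $\tfrac12(1-\mu_k)$ is essentially attained, so that "non-vanishing" is meaningful and not just an artefact of a loose bound. Take $p_k\in\{0,1\}$ with probability $1-\mu_k$ and $\mu_k$ respectively. Then $\Var[p_k]=\mu_k(1-\mu_k)$ and $C_k=\tfrac12(1-\mu_k)$ exactly. This two-point distribution on the vertices corresponds to maximal disagreement between passes.
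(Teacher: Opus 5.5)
Your proof is correct and follows the paper's: divide the simplex bound $\Var[p_k]\le\mu_k(1-\mu_k)$ by $2\mu_k$, let $\mu_k\to 0$, and show attainability with a $\{0,1\}$-valued two-point distribution, which matches the paper's remark on attainability. Your separate handling of $\mu_k=0$ and your $S/(S-1)$ caveat for the Bessel-corrected estimator of~\eqref{eq:cov} are valid refinements the paper glosses over, since its proof of Lemma~\ref{lem:boundary} implicitly uses the population (or $1/S$) variance $\E[p_k^2]-\mu_k^2$.
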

The mechanism is not ad hoc: the Hessian entry $-1/\mu_k$ grows large when $\mu_k$ is small, so a given amount of probability variance carries more information-theoretic weight for low-probability classes. Conversely, when $\mu_k \to 1$ the curvature vanishes and $C_k \to 0$, the correct limit. This renders $C_k$ comparable across classes with very different base rates, a property $\Var[p_k]$ lacks by construction.

\subsection{Axiomatic Analysis}\label{sec:axioms}
Having described both the diagonal vector $\vC(x)$ and the off-diagonal covariance structure, we now examine whether $\vC(x)$ satisfies standard properties of epistemic uncertainty measures. \citet{wimmer2023quantifying} propose a set of axioms formalising desirable behaviour for epistemic uncertainty measures. We assess the aggregate $\mathrm{EU}_{\mathrm{approx}}(Q) = \sum_k \Ck$ against five of these axioms, where $Q$ denotes the second-order distribution over probability vectors $\vp$ with mean $\vmu = \E_Q[\vp]$.

\begin{theorem}[Axiomatic Profile]\label{thm:axioms}
The approximate epistemic uncertainty $\mathrm{EU}_{\mathrm{approx}}(Q) = \sum_{k=1}^K C_k$ satisfies axioms A0, A1, and A3, and violates A2 and A5 of \citet{wimmer2023quantifying}. See Appendix \ref{app:proof:axiom} for details and proof.
\end{theorem}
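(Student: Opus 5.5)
I will first restate the five axioms of \citet{wimmer2023quantifying} in the form needed here, with $\mathrm{EU}_{\mathrm{approx}}(Q)=\tfrac12\sum_k \Var_Q[p_k]/\mu_k$ viewed as a functional of the second-order distribution $Q$. The axioms are:
\begin{itemize}
\item[] A0: non-negativity.
\item[] A1: vanishing exactly when $Q$ is a Dirac measure, i.e.\ no epistemic uncertainty.
\item[] A2: maximality at the uniform second-order distribution on the simplex.
\item[] A3: weak monotonicity under mean-preserving spreads.
\item[] A5: invariance or monotonicity under location shifts of $Q$ that keep the spread fixed.
\end{itemize}
I will then check each axiom separately, giving a short proof for the satisfied ones and an explicit counterexample for the violated ones.

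\textbf{A0.} Each $\Var_Q[p_k]\ge 0$ and $\mu_k\ge 0$, with the convention $C_k=0$ when $\mu_k=0$ (Definition~\ref{def:Ck}). Hence every $C_k\ge 0$ and so is the sum.

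\textbf{A1.} If $Q=\delta_{\vp^*}$, all variances vanish. Conversely, $\sum_k C_k=0$ forces $\Var_Q[p_k]=0$ for every $k$ with $\mu_k>0$. For classes with $\mu_k=0$, $p_k=0$ holds $Q$-a.s.\ because $p_k\ge 0$. So $\vp$ is $Q$-a.s.\ constant and $Q$ is a Dirac measure.

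\textbf{A3.} Let $Q'$ be a mean-preserving spread of $Q$, meaning $\vp'\overset{d}{=}\vp+\boldsymbol{\eta}$ with $\E[\boldsymbol{\eta}\mid\vp]=0$. Then $\mu_k$ is unchanged and, by the law of total variance, $\Var[p_k']=\Var[p_k]+\E[\Var(\eta_k\mid\vp)]\ge \Var[p_k]$. Since the weights $1/(2\mu_k)$ are fixed, the sum cannot decrease. This is the key point: the Hessian weights depend only on the mean.

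\textbf{A2 (violation).} I will use the boundary behaviour from Lemma~\ref{lem:ck-boundary}. Take the uniform (Dirichlet$(\mathbf 1)$) $Q$ on $\Delta^{K-1}$, which has $\mu_k=1/K$ and $\Var[p_k]=(K-1)/(K^2(K+1))$. This gives
\[
\mathrm{EU}_{\mathrm{approx}}=\frac{K-1}{2(K+1)} .
\]
Compare it with $Q$ placing mass $1/K$ on each vertex $e_k$. This has the same mean but $\Var[p_k]=\tfrac1K(1-\tfrac1K)$, so
\[
\mathrm{EU}_{\mathrm{approx}}=\frac{K-1}{2},
\]
which is strictly larger. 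So the uniform second-order distribution is not the maximiser.

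\textbf{A5 (violation).} I will take $K=2$ and a two-point $Q$ with $p_1\in\{m-\delta,m+\delta\}$. Then $\mathrm{EU}_{\mathrm{approx}}=\tfrac{\delta^2}{2}\bigl(\tfrac1m+\tfrac1{1-m}\bigr)$. Shifting $m$ with $\delta$ fixed changes this value, which exhibits the failure of invariance under location shifts.

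\textbf{Main obstacle.} The mathematics is elementary; the hard step is matching the exact formal statements and numbering of A2 and A5 in \citet{wimmer2023quantifying}. The counterexamples must violate those axioms as written rather than my paraphrases. I would first pin down their definitions and then tune the two counterexamples (vertex mixture versus uniform, and a shifted two-point distribution) so they fit those definitions exactly.
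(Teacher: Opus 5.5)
Your argument follows the paper's proof axiom by axiom, and the ingredients are the same. For A0/A1 you use the same convention for $\mu_k=0$ and the same ``$p_k\ge 0$ with zero mean forces $p_k=0$'' step. For A3 your law-of-total-variance computation is equivalent to the paper's vanishing cross-covariance argument, since $\E[\Var(\eta_k\mid\vp)]=\Var[\eta_k]$ when $\E[\eta_k\mid\vp]=0$. For A2 you use the same Dirichlet$(\mathbf 1)$ versus uniform-vertex-mixture comparison. For A5 your two-point family is the paper's own counterexample in parametric form: $m=0.3$, $\delta=0.1$ shifted to $m=0.45$ gives $0.0238$ versus $0.0202$. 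Your ``main obstacle'' is settled the way the paper settles it. It adopts \citet{sale2024labelwise}'s definitions: a mean-preserving spread is $\vp'\overset{d}{=}\vp+\boldsymbol Z$ with $\E[\boldsymbol Z\mid\sigma(\vp)]=\boldsymbol 0$ and $\max_k\Var[Z_k]>0$, and a location shift is $\vp'\overset{d}{=}\vp+\boldsymbol z$ with constant $\boldsymbol z\neq\boldsymbol 0$. It also reads A5 as exact invariance, and under these definitions your counterexamples work as written.

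The one substantive shortfall is A3. The proof appendix states and proves the \emph{strict} version, and Remark~\ref{rem:a3} presents strict satisfaction as the advantage over exact MI. You only establish weak monotonicity; the paper's Appendix~\ref{app:related} lists A3 in weak form, so the inconsistency is partly the paper's own, but the proof appendix is explicit.

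To get strictness you need two things:
\begin{itemize}
\item The nontriviality condition $\max_k\Var[\eta_k]>0$ in the definition of a spread, which your definition omits.
\item The fact that the class $j$ with $\Var[\eta_j]>0$ has $\mu_j>0$. Otherwise the convention $C_j=0$ would silently absorb the added variance.
\end{itemize}
The second point follows from the same simplex argument you used in A1. If $\mu_j=0$, then $p_j=0$ $Q$-a.s., so $p'_j=\eta_j$. Since $p'_j\ge 0$ and $\E[\eta_j]=0$, this forces $\eta_j=0$ a.s., contradicting $\Var[\eta_j]>0$. Hence $C_j(Q')=\tfrac12(\Var[p_j]+\Var[\eta_j])/\mu_j>C_j(Q)$, and since every other term is weakly larger, the sum increases strictly.

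A minor point on A5: require that the shifted support stays in $[0,1]$. Also exclude the reflection $m\mapsto 1-m$, under which $\tfrac1m+\tfrac1{1-m}$ is unchanged, so ``shifting $m$ changes the value'' is not true for every shift.
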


The violations of A2 and A5 are not artefacts of the approximation; they are inherited from MI itself~\citep{wimmer2023quantifying}.
More revealing is the relationship between A5 and boundary suppression:

\begin{corollary}[A5 Violation as Boundary Correction]\label{cor:a5-tradeoff}
The violation of A5 is the precise mechanism that counteracts boundary suppression.
A5 requires EU to be insensitive to $\mu_k$; but by Lemma~\ref{lem:ck-boundary}, sensitivity to $\mu_k$ is what prevents $C_k$ from vanishing as $\mu_k \to 0$.
\end{corollary}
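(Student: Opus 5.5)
The plan is to make precise the informal content of the corollary and reduce it to two facts already established: the form of axiom A5 and the bound in Lemma~\ref{lem:ck-boundary}. First I would restate A5 in the form given by \citet{wimmer2023quantifying}. Roughly, an epistemic uncertainty measure should be invariant under a location shift of the second-order distribution $Q$. That is, if $Q'$ is obtained from $Q$ by translating every $\vp$ by a fixed vector, so that the spread is unchanged but the mean $\vmu$ moves, then $\mathrm{EU}(Q')=\mathrm{EU}(Q)$. For $\mathrm{EU}_{\mathrm{approx}}=\tfrac12\sum_k \Var[p_k]/\mu_k$, such a shift leaves every $\Var[p_k]$ fixed and changes $\mu_k$. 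Since each summand is strictly decreasing in $\mu_k$ whenever $\Var[p_k]>0$, an explicit two-class example violates A5. I take this as the content of the A5 part of Theorem~\ref{thm:axioms}, and would cite it rather than reprove it.

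The second step shows that this very dependence on $\mu_k$ is what prevents suppression near the boundary. I would argue by contradiction. Suppose a per-class score had the form $g(\Var[p_k])$, depending only on the spread, so that it satisfies A5, with $g$ monotone and $g(0)=0$. Lemma~\ref{lem:boundary} gives $\Var[p_k]\le \mu_k(1-\mu_k)\to 0$ as $\mu_k\to 0$, so $g(\Var[p_k])\to 0$ uniformly over all $Q$ with that mean. Hence every location-invariant score of this type is boundary-suppressed.

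To show that $C_k$ escapes this, I would use a concrete family $Q_\mu$ on class $k$ that is Bernoulli-like: $p_k=1$ with probability $\mu$ and $p_k=0$ otherwise. Then $\Var[p_k]=\mu(1-\mu)$, the bound in Lemma~\ref{lem:ck-boundary} is attained, and $C_k=\tfrac12(1-\mu)\to\tfrac12$. Along this family, therefore, the score stays bounded away from zero exactly because the factor $1/\mu_k$ rescales a variance that Lemma~\ref{lem:boundary} forces to vanish. I would then note the converse: removing the $\mu_k$-dependence (setting the weight constant) restores A5 but makes the score equal to raw variance, which is suppressed.

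The main obstacle is that ``precise mechanism'' is a qualitative claim, and A5 is formulated for aggregate measures under a specific notion of shift rather than as ``insensitivity to $\mu_k$.'' The delicate step is to phrase the dichotomy rigorously. For this I would restrict to the class of scores $\sum_k w(\mu_k)\Var[p_k]$ suggested by the Hessian. Within this class, A5 holds if and only if $w$ is constant, and boundary non-suppression holds if and only if $w(\mu)\mu\not\to 0$, which requires $w$ to be non-constant. The corollary then follows within that class, and I would state explicitly that it is a claim about that class.
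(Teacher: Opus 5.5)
Your proof uses the same ingredients as the paper: the explicit shift counterexample from the proof of Theorem~\ref{thm:axioms}, and the bounds of Lemmas~\ref{lem:boundary} and~\ref{lem:ck-boundary}. Your route is more formal, though. The paper treats the corollary as an interpretive remark. Its only justification is the two sentences in the statement, which paraphrase A5 as ``insensitivity to $\mu_k$'' and cite the non-vanishing upper bound $\tfrac12(1-\mu_k)$. You do two things the paper leaves implicit. First, you turn ``precise mechanism'' into a checkable dichotomy inside the family $\sum_k w(\mu_k)\Var[p_k]$. Second, you use the attaining family $p_k\in\{0,1\}$ (the paper's remark on attainability) to show that $C_k$ itself, not merely its bound, stays near $\tfrac12$; a non-vanishing upper bound alone does not prevent $C_k$ from vanishing. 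Your restriction to a class is also genuinely necessary, not just convenient. The shift-invariant functional $\sum_k(\max p_k-\min p_k)$, taken over the support of $Q$, satisfies A5, yet it equals $1$ per class on that same Bernoulli family. So ``A5 implies suppression'' is false for general measures. The paper's version buys brevity; yours buys a statement with explicit scope.

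Two intermediate claims need repair.

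\emph{Continuity of $g$.} ``$g$ monotone with $g(0)=0$'' is not enough. The function $g=\mathbf{1}_{(0,\infty)}$ is monotone, vanishes at $0$, and gives an A5-invariant score that is not suppressed. You need $g$ nondecreasing and right-continuous at $0$.

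\emph{The ``A5 iff $w$ constant'' claim.} This is false for $K=2$. There $\Var[p_1]=\Var[p_2]$, so the aggregate is $\Var[p_1]\bigl(w(\mu_1)+w(1-\mu_1)\bigr)$. A5 then only forces $w(\mu)+w(1-\mu)$ to be constant, and $w(\mu)=\mu$ is a non-constant example. The direction you actually need, A5 $\Rightarrow$ suppression, still holds if you impose $w\ge 0$: then $w$ is bounded by that constant, so $w(\mu)\mu\to 0$. Without the sign condition it fails. For instance, $w(\mu)=1/\mu-1/(1-\mu)$ makes the aggregate identically zero while the per-class terms stay unsuppressed.

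For $K\ge 3$ your ``iff'' is correct. Take $Q$ with $p_3$ constant and spread only in $(p_1,p_2)$, then shift by $(t,0,-t)$. The aggregate changes by exactly $\Var[p_1]\bigl(w(\mu_1+t)-w(\mu_1)\bigr)$, which is nonzero whenever $w$ is non-constant.
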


This trade-off clarifies the design space: the label-wise variance $\mathrm{EU}_k = \Var(\Theta_k)$ of \citet{sale2024labelwise} satisfies A5 but pays the cost of boundary suppression.
Our $C_k$ sacrifices A5 to preserve comparability across classes with different base rates, a trade-off we argue is favourable in safety-critical settings
where rare classes matter most.
\begin{remark}[A3 vs.\ Exact MI]\label{rem:a3}
Unlike exact MI, which \citet{wimmer2023quantifying} show violates A3 in
general (their Proposition~4), $\sum_k C_k$ satisfies it strictly.
Since the sum is linear in each $\Var[p_k]$ with fixed positive
coefficients $1/(2\mu_k)$, any mean-preserving spread necessarily
increases it, so greater model disagreement always produces greater
epistemic uncertainty.
\end{remark}

\subsection{Reliability Diagnostic via Skewness}\label{sec:skewness}
The same $1/\mu_k$ factor that corrects boundary suppression also controls where the Taylor expansion breaks down.
As $\mu_k \to 0$, the curvature $|\partial^2\Ent/\partial p_k^2| = 1/\mu_k$ diverges, and the quadratic approximation becomes increasingly sensitive to higher-order terms.
A natural question is: \emph{when can we trust $C_k$?}
We answer this by examining the next term in the expansion.

\begin{lemma}[Third-Order Correction]\label{lem:third-order}
Including the third-order term from Lemma~\ref{lem:entropy_derivatives}, the expected entropy satisfies
\begin{equation}\label{eq:EH-third}
  \E[\Ent(\vp)]
    \;\approx\;
    \Ent(\vmu)
    - \frac{1}{2}\sum_k \frac{\Var[p_k]}{\mu_k}
    + \frac{1}{6}\sum_k \frac{m_{3,k}}{\mu_k^2}\,,
\end{equation}
where $m_{3,k} = \E[(p_k - \mu_k)^3]$ is the third central moment, estimated as
{\small $\hat{m}_{3,k} = \frac{1}{S}\sum_{s=1}^{S}(p_k^{(s)} - \mu_k)^3$}.
\end{lemma}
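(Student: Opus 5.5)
The plan is to extend the second-order argument behind Theorem~\ref{thm:taylor} by one more order. We Taylor-expand $\Ent(\vp)$ around $\vmu$ to third order, take expectations under $Q$, and use the derivatives of Lemma~\ref{lem:entropy_derivatives}. Write $\boldsymbol{\delta}=\vp-\vmu$. Since $\Ent(\vp)=-\sum_k p_k\log p_k$ is a sum of univariate functions $h(p_k)=-p_k\log p_k$, every mixed partial derivative of order two or higher vanishes. The multivariate expansion therefore reduces to a sum of $K$ one-dimensional expansions:
\begin{equation*}
\Ent(\vp)=\Ent(\vmu)+\sum_k h'(\mu_k)\delta_k+\frac{1}{2}\sum_k h''(\mu_k)\delta_k^2+\frac{1}{6}\sum_k h'''(\mu_k)\delta_k^3+R_4 .
\end{equation*}
Here $h'(\mu_k)=-1-\log\mu_k$, $h''(\mu_k)=-1/\mu_k$, and $h'''(\mu_k)=1/\mu_k^2$, as in Lemma~\ref{lem:entropy_derivatives}. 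Separability matters at third order just as it did at second: it removes all cross-moment terms $\E[\delta_j\delta_k\delta_l]$, leaving only the marginal third central moments.

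Next we take $\E_Q$ term by term. The linear term vanishes because $\E_Q[\delta_k]=0$ by definition of $\vmu$. It also vanishes for a second reason: $\sum_k\delta_k=0$ on the simplex, so the constant $-1$ contributes nothing. The quadratic term gives $-\tfrac12\sum_k\Var[p_k]/\mu_k$, exactly as in Theorem~\ref{thm:taylor}. The cubic term gives $\tfrac16\sum_k m_{3,k}/\mu_k^2$ with $m_{3,k}=\E[(p_k-\mu_k)^3]$. Collecting these yields \eqref{eq:EH-third}, with remainder $O(\E[\|\boldsymbol{\delta}\|^4])$; the coefficient of that remainder involves $h''''(\xi)=-2/\xi^3$ at an intermediate point. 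The plug-in estimator $\hat m_{3,k}$ then follows by replacing $\E_Q$ with the empirical average over the $S$ passes. We use the $1/S$ normalisation rather than an unbiased correction, since only the leading behaviour matters here.

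The main obstacle is making the remainder statement honest. The fourth derivative blows up like $1/\xi^3$ as $\mu_k\to0$, so the $O(\cdot)$ is not uniform near the boundary. Moreover, $h$ is not analytic at $0$: samples with $p_k^{(s)}$ near zero lie near the singularity, and the expansion around $\mu_k$ can fail to converge once $|\delta_k|\ge\mu_k$. To handle this, I would use the Lagrange form of the remainder on each coordinate separately and state the approximation under the condition $|p_k-\mu_k|\le c\,\mu_k$ for some $c<1$ on the support of $Q$. Under that condition, the fourth-order term is bounded by a constant times $\E[\delta_k^4]/\mu_k^3$. Outside that condition, I would present \eqref{eq:EH-third} only as a formal correction. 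Framed this way, the result motivates its use as a diagnostic: a large $|m_{3,k}|/\mu_k^2$ relative to $\Var[p_k]/\mu_k$ signals that the second-order $C_k$ is unreliable.
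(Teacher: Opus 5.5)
Your proposal is correct and is essentially the paper's argument: the second-order expansion from the proof of Theorem~\ref{thm:taylor} carried one order further, where separability of $\Ent$ makes all mixed derivatives vanish so that only the marginal moments $\Var[p_k]$ and $m_{3,k}$ survive; your Lagrange-remainder condition $|p_k-\mu_k|\le c\,\mu_k$ adds rigour the paper does not attempt, since it states only $\approx$. One small slip: $\sum_k\delta_k=0$ removes only the constant $-1$ part of the linear term pathwise, not the $-\log\mu_k$ part, so it is not a genuine second reason for the term to vanish---$\E_Q[\delta_k]=0$ alone does that.
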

We use the third-order term as a \emph{diagnostic} rather than a correction.
Including it yields
$C_k^{(3)} = \tfrac{1}{2}\Var[p_k]/\mu_k - \tfrac{1}{6}\,m_{3,k}/\mu_k^2$,
but the $1/\mu_k^2$ singularity can drive this quantity negative for
right-skewed distributions near the simplex boundary, violating
non-negativity~(A0).
More fundamentally, adding Taylor terms improves accuracy only when
posterior samples are tightly concentrated around the mean, which is
precisely not the high-uncertainty regime where the metric matters most.
The skewness ratio $\rho_k$ (Section~\ref{sec:skewness}) instead flags
when the second-order approximation is unreliable, without compromising
the guarantees of $C_k$.

\begin{definition}[Skewness Diagnostic]\label{def:rho}
For each class $k$ with $\Var[p_k] > 0$, define the reliability indicator
\begin{equation}\label{eq:rho}
  \boxed{\rho_k(x)
    \;=\;
    \frac{|m_{3,k}|}{3\,\mu_k \cdot \Var[p_k]}\,.}
\end{equation}
\end{definition}

To see why this is the right ratio, observe that the second-order contribution to MI from class $k$ is $\frac{1}{2}\Var[p_k]/\mu_k$, while the third-order correction is $\frac{1}{6}|m_{3,k}|/\mu_k^2$.
Their ratio is
{\small
\begin{equation}
  \frac{\tfrac{1}{6}|m_{3,k}|/\mu_k^2}{\tfrac{1}{2}\Var[p_k]/\mu_k}
    \;=\;
    \frac{|m_{3,k}|}{3\,\mu_k\cdot\Var[p_k]}
    \;=\; \rho_k\,.
\end{equation}
}%
When $\rho_k \ll 1$, the third-order correction is negligible relative to $C_k$, and the approximation is reliable for class $k$.
When $\rho_k$ is appreciable, the second-order approximation is degrading and
$C_k$ should be interpreted with caution. In our experiments we use
$\rho_k>0.3$ as a practical reliability threshold, but this value should be read
as a task-dependent tolerance for approximation error rather than a universal
constant.
The diagnostic is computable from the same $S$ stochastic passes at negligible additional cost and can be reported alongside $\vC(x)$.

\paragraph{Off-diagonal structure and the CBEC metric.}
While $\vC(x)$ captures diagonal variance per class, the empirical covariance
$\Covmat[\vp]$ encodes complementary information about \emph{pairwise confusion}.
A strongly negative entry $\Covmat[p_i, p_j] \ll 0$ means that probability mass
flows systematically between classes~$i$ and~$j$ across stochastic passes: the
model is actively trading one for the other rather than being diffusely uncertain.
In safety-critical settings this directional signal matters: a model torn between
a safe and a critical class is qualitatively different from one torn between two
safe classes, yet both can produce identical $\sum_k \Ck$.
When the skewness diagnostic (Section~\ref{sec:skewness}) flags unreliable $C_k$
for rare critical classes ($\mu_k \approx 0$), we gate their product by the
empirical negative correlation between safe and critical classes:
\begin{equation}\label{eq:cbec}
  \text{CBEC}(x)
    = \sum_{i \in \mathcal{S}}\sum_{j \in \mathcal{C}}
      \sqrt{C_i \cdot C_j}\;\cdot\;\max(0,\,-\rho_{ij}),
\end{equation}
where $\mathcal{S}$ and $\mathcal{C}$ are the safe and critical class sets and
$\rho_{ij}$ is the empirical Pearson correlation of $p_i$ and $p_j$ across
the $S$ forward passes.
Three design choices interact.
The \emph{geometric mean} $\sqrt{C_i \cdot C_j}$ requires \emph{both} classes to
carry elevated epistemic uncertainty: single-class inflation is dampened by the
square root, and if either class is certain the product vanishes.
The \emph{correlation gate} $\max(0,-\rho_{ij})$ filters coincidental co-elevation:
the score is nonzero only when MC~draws show the model actively exchanging
probability between the two classes, a Taylor-free condition that remains
reliable even when $C_k$ degrades.
The \emph{restricted domain} $\mathcal{S}\times\mathcal{C}$ focuses the sum
exclusively on cross-boundary confusion, excluding within-safe and within-critical
co-elevation that carry no deferral signal.
Together, these properties make CBEC the preferred fallback when $\rho_k > 0.3$
for critical classes; $C_{\text{crit\_max}}$ remains the primary metric when the
Taylor approximation is reliable.

\section{Selective Prediction for Diabetic Retinopathy}\label{sec:exp_dr}
We evaluate $\vC(x)$ on selective prediction~\citep{geifman2017selective} for diabetic retinopathy~(DR) grading, a task where the cost of failure is acutely asymmetric: missing a sight-threatening retinopathy carries irreversible consequences, while a false positive triggers only a follow-up examination.

\subsection{Experimental Setup}\label{sec:dr-setup}

\paragraph{Data.}
We pool three public DR grading benchmarks: EyePACS~\citep{cuadros2009eyepacs},
APTOS~2019~\citep{aptos2019}, and
Messidor-2~\citep{decenciere2014feedback}, totalling $39{,}970$ colour fundus
photographs.
The International Clinical DR Severity Scale~\citep{wilkinson2003proposed} defines
five grades; we consolidate the two most severe into a single grade because both
require urgent referral.
The resulting four-class distribution places Grades~2--3 as \emph{critical}
($\mathcal{C} = \{2,3\}$, requiring treatment) and Grades~0--1 as \emph{safe}
($\mathcal{S} = \{0,1\}$); data are split patient-stratified into
train/val/test with no patient leakage (full details in Appendix~\ref{app:dataset}).

\paragraph{Model.}
We construct a fully Bayesian EfficientNet-B4~\citep{tan2019efficientnet} using
the low-rank variational inference framework of \citet{toure2026singular}, with
a scale-mixture Gaussian prior ($\pi = 0.5$) and KL~annealing.
Full architectural details are in Appendix~\ref{app:model}.

\paragraph{Inference and evaluation.}
Uncertainty is estimated from $S = 30$ stochastic forward passes through the posterior.
Our primary safety metric is the \textbf{critical false negative rate}~(Critical~FNR, Appendix \ref{app:eval-metrics}): among critical-class samples retained at a given coverage level, the fraction misclassified as safe.
We summarise performance across coverage levels via the area under the selective risk curve~(AUSC)~\citep{elyaniw2010}, with 95\% CIs from 200 bootstrap resamples.
On the test set, the Bayesian model achieves accuracy $0.8$ and quadratic weighted kappa $0.65$; per-class performance in Appendix~\ref{app:classification}.

\subsection{Deferral Policies}\label{sec:deferral-policies}

We compare 10 deferral policies across four families~(Table~\ref{tab:policies}; formal definitions in Appendix~\ref{app:metrics}).

\begin{table}[!ht]
\centering
\caption{Deferral policies evaluated. Higher score $=$ defer.
$\mathcal{C}{=}\{2,3\}$, $\mathcal{S}{=}\{0,1\}$.}
\label{tab:policies}
\footnotesize
\setlength{\tabcolsep}{4pt}
\begin{tabular}{lll}
\toprule
\textbf{Family} & \textbf{Policy} & \textbf{Score} \\
\midrule
\multirow{3}{*}{Scalar}
  & Entropy   & $H[\bar{p}]$ \\
  & MI        & $H[\bar{p}] - \E[H[p^{(s)}]]$ \\
  & MaxProb   & $1 - \max_k \mu_k$ \\
  & Sale\_EU\_global & $\sum_k \sigma^2_k$ \\
\midrule
\multirow{3}{*}{\shortstack[l]{Per-class\\variance}}
  & Var\_crit        & $\max_{k \in \mathcal{C}} \sigma^2_k$ \\
  & Sale\_EU\_crit   & $\sum_{k \in \mathcal{C}} \sigma^2_k$ \\
\midrule
Per-class MI
  & OvA\_MI & $\sum_{k \in \mathcal{C}} \mathrm{MI}_k^{\text{bin}}$ \\
\midrule
\multirow{3}{*}{\shortstack[l]{Per-class $C_k$\\(proposed)}}
  & $C_{\text{crit\_sum}}$ & $\sum_{k \in \mathcal{C}} C_k$ \\
  & $C_{\text{crit\_max}}$ & $\max_{k \in \mathcal{C}} C_k$ \\
  & CBEC & Eq.~\ref{eq:cbec} \\
\bottomrule
\end{tabular}
\end{table}

Scalar baselines are class-agnostic; per-class variance targets critical classes
via raw $\sigma^2_k$ but is confounded by $\mu_k$
(Lemma~\ref{lem:boundary}); OvA\_MI computes binary MI per critical class.
Our three proposed metrics build on $\sum_k C_k \approx \MI$
(Section~\ref{sec:definition}): $C_{\text{crit\_max}}$ gives the sharpest
signal when $C_k$ estimates are reliable ($\rho_k < 0.3$); $C_{\text{crit\_sum}}$
is a smoother alternative; CBEC (Eq.~\ref{eq:cbec}) gates safe--critical pairs
by empirical negative correlation for robustness when skewness degrades $C_k$.

\subsection{Results}\label{sec:results-dr}
\paragraph{Theory validation.}
Before evaluating downstream performance, we verify the additive
decomposition~\eqref{eq:additive}: $\sum_k C_k$ and exact MI achieve Pearson
$r = 0.988$ and Spearman $r = 0.998$ across all $7{,}948$ test samples (Figure~\ref{fig:theory-validation}),
confirming the second-order approximation preserves epistemic uncertainty rankings.
The skewness diagnostic~(Section~\ref{sec:skewness}) shows $C_k$ is reliable
($\rho_k < 0.3$) for ${>}94\%$ of safe-class samples and $82\%$/$63\%$ of
Grade~2/3 respectively. Because the cutoff is a practical tolerance rather
than a universal constant, we also sweep thresholds
$\tau \in \{0.1,0.2,0.3,0.5\}$ in Appendix~\ref{app:skewness}. Tightening the
criterion to $\tau=0.1$ lowers the reliable fraction for critical grades to
roughly $68$--$75\%$, while relaxing it to $\tau=0.5$ raises it to
$93$--$97\%$. The qualitative pattern is stable across thresholds: safe
classes remain highly reliable, whereas rare critical classes are the most
sensitive to higher-order Taylor terms.

\paragraph{Main results.}
Table~\ref{tab:selective-bnn} reports AUSC and Critical~FNR at 80\% coverage with bootstrap CIs.
$C_{\text{crit\_max}}$ achieves the lowest AUSC ($0.285$), reducing the area under the selective risk curve by 34.7\% relative to MI ($0.436$, $p < 0.005$) and by 56.2\% relative to Sale\_EU\_crit ($0.650$, $p < 0.005$).
At the clinically relevant 80\% coverage operating point, $C_{\text{crit\_max}}$ achieves FNR $0.302$, compared to $0.339$ for MI and $0.409$ for Sale\_EU\_crit. Figure~\ref{fig:selective-main} visualises these results: $C_{\text{crit\_max}}$ dominates across the full coverage range (left), with non-overlapping bootstrap interquartile ranges against scalar baselines (right).
The improvement over raw variance is especially striking.
Sale\_EU\_crit and Var\_crit (AUSC $0.650$ and $0.606$) perform \emph{worse than entropy} ($0.604$), confirming that boundary suppression~(Lemma~\ref{lem:boundary}) severely confounds unnormalised variance for critical classes with $\mu_k \approx 0.06$.
The $1/\mu_k$ normalisation in $C_k$ corrects precisely this: Grade~3 has mean probability $\sim$0.06, giving entropy curvature $\sim$17, so even small variance there carries large information-theoretic weight. This normalisation benefit is robust to the choice of inference method:
a deep ensemble \citep{lakshminarayanan2017simple} of five members yields $C_{\text{crit\_max}}$ AUSC $0.390$
vs.\ $0.447$ for Sale\_EU\_crit ($12.9\%$ reduction; Appendix~\ref{app:deep-ensemble}).

\begin{table}[!ht]
\centering
\caption{Selective prediction for DR grading (Bayesian EfficientNet-B4, $S{=}30$).
AUSC integrates Critical FNR over all coverages (lower $=$ safer).
\textbf{Bold} = best; CIs: 200 bootstrap resamples.}
\label{tab:selective-bnn}
\resizebox{\columnwidth}{!}{
\begin{tabular}{llcc}
\toprule
Family & Policy & AUSC (Critical FNR)$\downarrow$ & Critical FNR @80\%$\downarrow$ \\
\midrule
\multirow{3}{*}{Scalar}
  & Entropy     & $0.604 \pm 0.022$ & $0.401 \pm 0.016$ \\
  & MI          & $0.436 \pm 0.019$ & $0.339 \pm 0.014$ \\
  & MaxProb     & $0.639 \pm 0.022$ & $0.439 \pm 0.017$ \\
    & Sale\_EU\_global & $0.457 \pm 0.018$ & $0.341 \pm 0.015$ \\
\midrule
\multirow{2}{*}{Per-class var.}
  & Var\_crit   & $0.606 \pm 0.014$ & $0.379 \pm 0.016$ \\
  & Sale\_EU\_crit & $0.650 \pm 0.013$ & $0.409 \pm 0.016$ \\
\midrule
Per-class MI
  & OvA\_MI     & $0.452 \pm 0.017$ & $0.367 \pm 0.015$ \\
\midrule
\multirow{3}{*}{\shortstack[l]{Per-class $C_k$\\(proposed)}}
  & $C_{\text{crit\_sum}}$ & $0.327 \pm 0.017$ & $0.321 \pm 0.014$ \\
  & $C_{\text{crit\_max}}$ & $\mathbf{0.285 \pm 0.016}$ & $\mathbf{0.302 \pm 0.013}$ \\
  & CBEC        & $0.416 \pm 0.020$ & $0.335 \pm 0.014$ \\
\bottomrule
\end{tabular}}
\end{table}

\begin{figure}[!ht]
  \centering
  \includegraphics[width=\columnwidth]{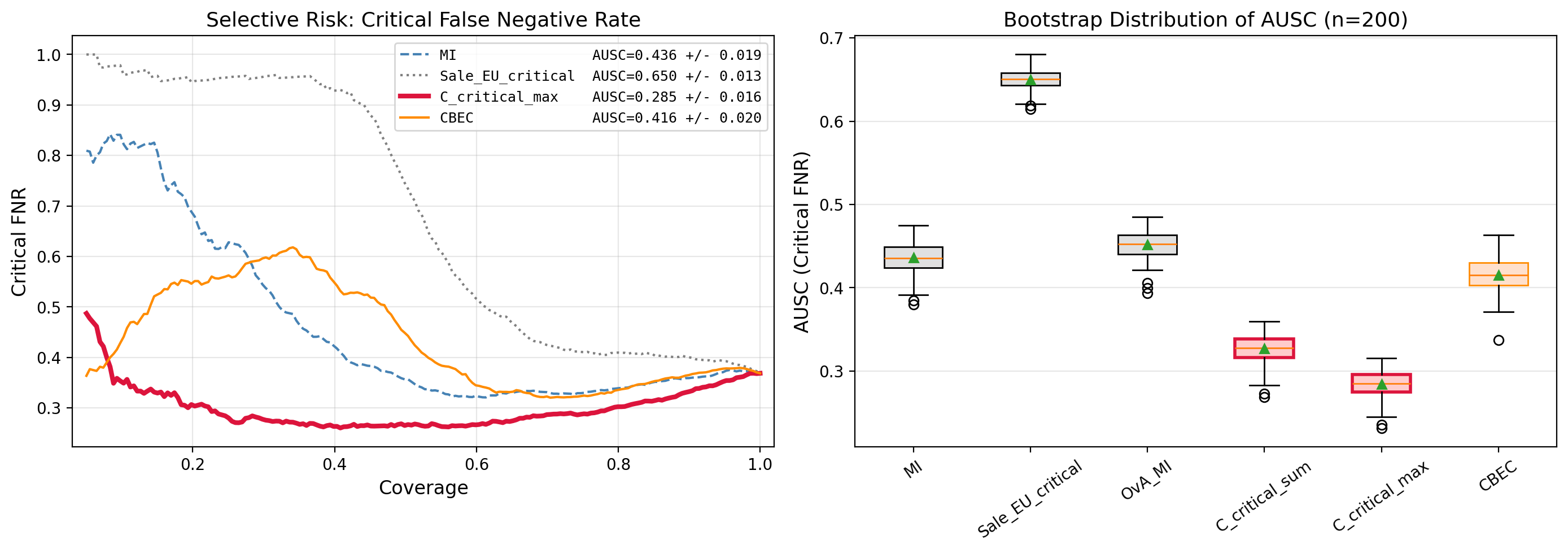}
  \caption{Selective prediction for DR.
  \textbf{Left:} Critical FNR vs.\ coverage;
  $C_{\text{crit\_max}}$ dominates all baselines across the full range.
  \textbf{Right:} Bootstrap AUSC distribution ($n{=}200$);
  $C_k$-based policies (red/orange) show non-overlapping interquartile ranges against scalar baselines (grey).}
  \label{fig:selective-main}
\end{figure}


\paragraph{Interpretability: Error Signatures.}
A distinctive advantage of $\vC(x)$ over scalar MI is that it reveals the \emph{structure} of model confusion.
Figure~\ref{fig:error-signature} shows that catastrophic misses (Grade~3 predicted as Grade~0) and severity underestimates (Grade~3 predicted as Grade~2) have nearly identical scalar MI ($0.024$ vs.\ $0.027$~nats) but very different $C_k$ signatures.
The catastrophic miss concentrates epistemic mass on $C_2$, identifying Grade~2 as the \emph{bottleneck} of confusion, while the severity underestimate elevates $C_0$, indicating doubt about healthy status.
These qualitatively different failure modes, invisible to scalar metrics, suggest
distinct remediation strategies; the epistemic confusion matrix
(Appendix~\ref{app:confusion}) further confirms that cross-boundary confusion
is $2.7\times$ stronger than within-group confusion, validating the clinical
partition (detailed analysis in Appendix~\ref{app:error-signatures}).

\paragraph{Robustness across inference methods.}
To test robustness to the posterior approximation, we repeat the evaluation using
standard MC~dropout~\citep{gal2016dropout} ($p{=}0.3$, $S{=}30$).
The ranking changes markedly: CBEC achieves the lowest AUSC ($0.197 \pm 0.012$,
winning 100\% of bootstrap samples), a 53.6\% reduction over MI ($0.425 \pm 0.024$),
while $C_{\text{crit\_max}}$ drops to $0.419$.
MC~dropout is known to produce regions of spurious confidence where the approximate
posterior underestimates uncertainty~\citep{smith2018understanding}. In our
framework, this inflates $\rho_k$ for critical classes, degrading the Taylor
approximation underlying $C_k$ while leaving CBEC's correlation gate intact.
A deep ensemble of five members confirms the pattern: CBEC again achieves
the lowest AUSC ($0.223$), even though $C_k$ reliability remains high
($\rho_k < 0.3$ for ${>}85\%$ of Grade~3 samples), indicating that the correlation gate
captures cross-boundary structure beyond what single-class targeting provides.
This confirms the intended complementarity: \emph{$C_{\text{crit\_max}}$ is
the primary metric under well-calibrated posteriors; CBEC provides a robust
alternative across inference regimes.} Full results in Appendix~\ref{app:mcdropout} and~\ref{app:deep-ensemble}.
\begin{figure}[!ht]
  \centering
  \includegraphics[width=\columnwidth]{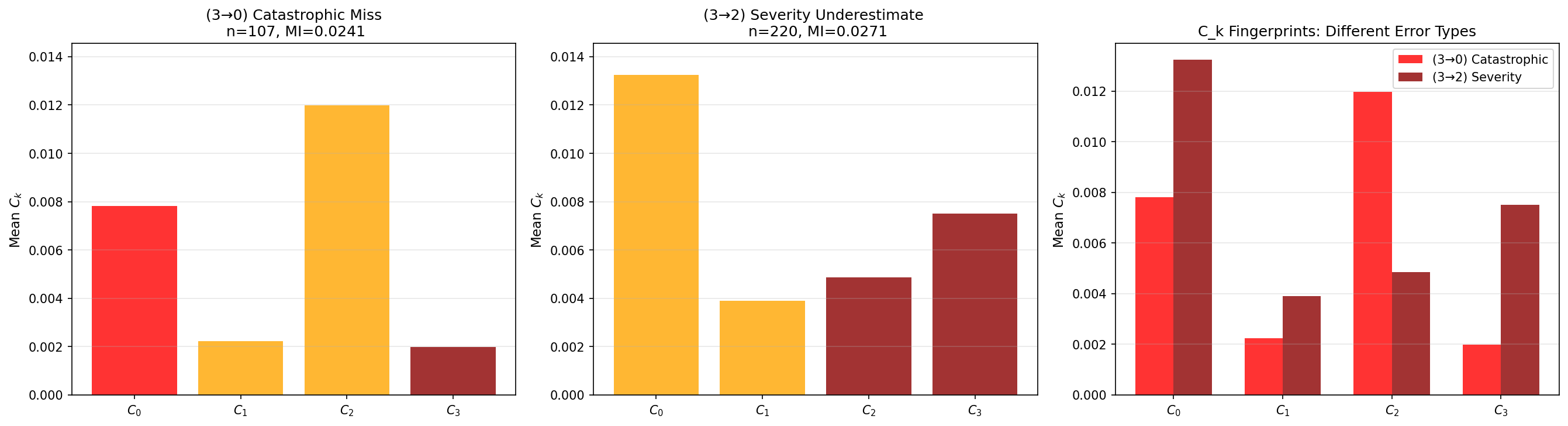}
  \caption{Epistemic signatures for Grade~3 errors with similar MI but distinct
$C_k$ patterns. \textbf{Left:} Catastrophic miss ($3 \to 0$, MI$=$0.024)
concentrates on $C_2$. \textbf{Centre:} Severity underestimate ($3 \to 2$,
MI$=$0.027) elevates $C_0$. \textbf{Right:} Grouped comparison showing that
the $C_k$ fingerprints differ despite similar MI; lighter red denotes
catastrophic safe predictions, darker red denotes within-critical severity
underestimates.}
  \label{fig:error-signature}
\end{figure}

\section{Out-of-Distribution Detection}\label{sec:ood}

We evaluate the per-class epistemic uncertainty metric on out-of-distribution
(OoD) detection, where elevated epistemic uncertainty is often used as a signal
of distributional shift. This experiment should be read as a comparative
within-posterior evaluation on the studied shifts, not as a claim that MI,
$\sum_k C_k$, $EU_{\text{var}}$, or related uncertainty scores are universally
reliable OoD detectors. In particular, uncertainty-based OoD scores can suffer
from overconfidence failure modes in ReLU classifiers
\citep{ulmer2021know}; our question is whether, given the same predictive
posterior, the class-resolved decomposition provides more useful separation and
diagnostics than scalar alternatives.

\subsection{Setup}
\paragraph{Datasets.}
We consider two OoD detection tasks: (1)~\emph{FashionMNIST}~\citep{xiao2017fashion}$\to$\emph{KMNIST}~\citep{clanuwat2018deep}, a 10-class image task (50k/10k train/test, 10k OoD); and (2)~\emph{MIMIC-III ICU}~\citep{johnson2016mimic}$\to$\emph{Newborn}, binary mortality prediction on 44 tabular features (40k/4.5k train/test, 5.4k OoD).

\paragraph{Model.}
Both tasks use low-rank Bayesian neural networks with Gaussian posteriors.
FashionMNIST: two hidden layers (1200 units, rank 25), $S{=}50$ MC samples.
MIMIC-III: two hidden layers (128 units, rank 15), $S{=}512$ MC samples, with class-weighted training to address mortality imbalance (8.6\% prevalence). See Appendix~\ref{app:ood-model} for full architectural and training details. 

\paragraph{Metrics.}
We compare four uncertainty scores: negative maximum softmax probability~\citep{hendrycks2017}, MI, $EU_{\text{var}} = \sum_k \sigma^2_k$, and our proposed $\sum_k C_k$.
Performance is measured by AUROC(ID samples labelled 0, OoD samples labelled 1); results are mean $\pm$ std over 5 seeds.

\subsection{Results}

Table~\ref{tab:ood-main} summarises OoD detection performance.
$\sum_k C_k$ achieves the highest AUROC on both datasets: $0.735$ on FashionMNIST versus $0.724$ for MI, and $0.815$ on MIMIC-III versus $0.802$ for MI.
The OoD-to-ID ratio column reveals the mechanism: on FashionMNIST, $\sum_k C_k$ amplifies the distributional shift signal to $6.43\times$ versus $5.92\times$ for MI.
On FashionMNIST, this effect is predicted by Lemma~\ref{lem:third-order}: the gap $\sum_k C_k - \text{MI} \approx \frac{1}{6}\sum_k m_{3,k}/\mu_k^2$ grows with posterior skewness, and OoD samples produce higher skewness diagnostic $\rho_k$ than ID samples across all ten classes (Appendix~\ref{app:skewness_ood}, Figure~\ref{fig:rho_fmnist}).
On MIMIC-III the picture is more nuanced: the survival class ($k{=}0$) follows the expected pattern ($\rho_0^{\text{OOD}} > \rho_0^{\text{ID}}$, median shift $+43\%$), but the mortality class ($k{=}1$) reverses because near-zero $\mu_1$ for newborns collapses $\Var[p_1]$ faster than $1/\mu_1$ can compensate, concentrating the third-order signal entirely in class~0 (Appendix~\ref{app:skewness_ood}).

The unnormalised baseline $EU_{\text{var}}$ underperforms MI on both datasets
despite matching its ratio on FashionMNIST ($5.92\times$): without
mean-normalisation, raw variance $\sigma^2_k$ is bounded above by
$\mu_k(1-\mu_k)$, compressing the dynamic range of uncertainty scores.
On MIMIC-III $EU_{\text{var}}$ achieves the \emph{highest} ratio
($1.71\times$) yet the \emph{lowest} AUROC among epistemic measures.
A high mean ratio does not imply good separability: AUROC depends on the
full distributional overlap, and the un-normalised variance sum concentrates
both ID and OoD scores near zero with a shared heavy tail, so the
numerically large mean shift is swamped by within-group spread
(Appendix~\ref{app:distributions}, Figure~\ref{figures:distributions_mimic}). This pattern is consistent across inference methods: a deep ensemble
replicates the same ranking ($\sum_k C_k{=}0.753 > \text{MI}{=}0.752
> EU_{\text{var}}{=}0.750$) on MIMIC-III, with $EU_{\text{var}}$
again achieving the highest ratio ($2.72\times$) and the lowest AUROC (Appendix~\ref{app:ood-ensemble}). 


\begin{table}[!ht]
\centering
\caption{OoD detection: AUROC and OoD/ID mean-uncertainty ratio. Best in \textbf{bold}.}
\label{tab:ood-main}
\resizebox{\columnwidth}{!}{
\begin{tabular}{llcccc}
\toprule
 & & \multicolumn{2}{c}{FashionMNIST $\to$ KMNIST} & \multicolumn{2}{c}{MIMIC-III ICU $\to$ Newborn} \\
\cmidrule(lr){3-4} \cmidrule(lr){5-6}
Family & Method & AUROC$\uparrow$ & Ratio$\uparrow$ & AUROC$\uparrow$ & Ratio$\uparrow$ \\
\midrule
\multirow{3}{*}{Baselines}
  & Neg.\ MSP & $0.665 \pm 0.013$ & $2.07$ & $0.688 \pm 0.030$ & $1.24$ \\
  & MI & $0.724 \pm 0.009$ & $5.92$ & $0.802 \pm 0.004$ & $1.61$ \\
  & $EU_{\text{var}}$ & $0.710 \pm 0.010$ & $5.92$ & $0.778 \pm 0.015$ & $\mathbf{1.71}$ \\
\midrule
Proposed
  & $\sum_k C_k$ & $\mathbf{0.735 \pm 0.009}$ & $\mathbf{6.43}$ & $\mathbf{0.815 \pm 0.017}$ & $1.62$ \\
\bottomrule
\end{tabular}}
\end{table}
\paragraph{Per-class decomposition.}
On the binary MIMIC-III task, the decomposition $\sum_k C_k = C_0 + C_1$
reveals that distributional shift affects the two classes asymmetrically
($2.15\times$ for survival vs.\ $1.30\times$ for mortality): the larger OoD
signal resides in the \emph{non-critical} class, so $C_1$-only yields
AUROC $0.740$, well below $\sum_k C_k$ at $0.815$.
On FashionMNIST the pattern is broader but still uneven: all ten per-class
$C_k$ values increase from ID to OoD, but with substantially different
magnitudes across classes (Appendix~\ref{app:perclass_fmnist},
Figure~\ref{fig:ck_fmnist}). Individual $C_k$ scores yield non-trivial
per-class OoD separation, but remain below the aggregate $\sum_k C_k$,
indicating that KMNIST induces a broad structural shift whose strength is
distributed unevenly across the FashionMNIST label space.
Together, these two cases illustrate a task-dependent trade-off:
critical-class targeting improves selective prediction
(Section~\ref{sec:exp_dr}) but all-class aggregation is necessary for OoD
detection, where the locus of distributional shift is not known a priori and
can be either asymmetric (MIMIC-III) or uniform (FashionMNIST).

Table~\ref{tab:fmnist-perclass-auroc} reports the corresponding per-class
OoD scores for FashionMNIST$\to$KMNIST. Each individual $C_k$ gives
non-trivial ID/OoD separation, with AUROC ranging from $0.537$ for Ankle
boot to $0.683$ for Bag. The ranking broadly tracks the OoD/ID amplification:
classes with the largest increases, such as Bag, Trouser, and Sandal, also
yield among the strongest per-class detectors. At the same time, all
individual $C_k$ scores remain below the aggregate $\sum_k C_k$ AUROC of
$0.735 \pm 0.009$, confirming that this benchmark induces a broad shift for
which aggregation is optimal, while the per-class decomposition reveals where
the shift is strongest.
\begin{table}[!ht]
\centering
\caption{Per-class OoD detection on FashionMNIST$\to$KMNIST using each
$C_k(x)$ as an individual OoD score.}
\label{tab:fmnist-perclass-auroc}
\footnotesize
\resizebox{\columnwidth}{!}{
\begin{tabular}{lcc}
\toprule
Class & OoD/ID ratio & AUROC using $C_k$ \\
\midrule
T-shirt/top & $6.971 \pm 0.788$ & $0.670 \pm 0.027$ \\
Trouser     & $16.046 \pm 2.370$ & $0.656 \pm 0.024$ \\
Pullover    & $3.477 \pm 0.549$ & $0.578 \pm 0.020$ \\
Dress       & $9.428 \pm 1.005$ & $0.662 \pm 0.020$ \\
Coat        & $3.481 \pm 0.602$ & $0.577 \pm 0.026$ \\
Sandal      & $14.729 \pm 3.755$ & $0.650 \pm 0.019$ \\
Shirt       & $4.467 \pm 0.337$ & $0.615 \pm 0.025$ \\
Sneaker     & $3.494 \pm 0.906$ & $0.549 \pm 0.054$ \\
Bag         & $18.854 \pm 2.173$ & $0.683 \pm 0.030$ \\
Ankle boot  & $2.921 \pm 1.439$ & $0.537 \pm 0.050$ \\
\bottomrule
\end{tabular}}
\end{table}

\section{Epistemic Sensitivity to Data Quality}\label{sec:exp_disentangle}
We assess whether $\sum_k \Ck$ remains insensitive to irreducible aleatoric
noise under controlled label-noise injection.

\subsection{Setup}

\paragraph{Data.}
We use Fashion-MNIST ($K{=}10$, 60{,}000 images) and CIFAR-10 ($K{=}10$, 50{,}000 images) \citep{krizhevsky2009learning}.
For each dataset we inject symmetric label noise at rates $\alpha \in \{0.1, 0.2, 0.3, 0.4, 0.5\}$: a fraction $\alpha$ of training labels is replaced uniformly at random, increasing aleatoric uncertainty without altering the input distribution.

\paragraph{Model.}
We compare end-to-end Bayesian training with transfer learning using low-rank Gaussian posteriors ($S{=}50$ MC~samples).
Fashion-MNIST: fully Bayesian; two hidden layers ($400$ units, rank~$15$).
CIFAR-10: (i) end-to-end Conv~$32{\to}64{\to}128$, rank~$32$; (ii) frozen ResNet-18 with Bayesian head.
Full details, alternative posteriors, and CIFAR-100 results in Appendix~\ref{app:disentangle-model}.

\paragraph{Protocol.}
For each noise level $\alpha$ we compute the mean aleatoric uncertainty $\bar{U}_a(\alpha) = \frac{1}{N}\sum_i \E[\Ent(\vp_i)]$ and the mean epistemic uncertainty $\bar{U}_e(\alpha)$ using either MI or $\sum_k \Ck$ over the clean test set.
The relative disentanglement ratio
\begin{equation}\label{eq:disentangle-ratio}
  R_{\mathrm{rel}}(\alpha)
  \;=\;
  \frac{(\bar{U}_e(\alpha) - \bar{U}_e(0))\,/\,\bar{U}_e(0)}
       {(\bar{U}_a(\alpha) - \bar{U}_a(0))\,/\,\bar{U}_a(0)}
\end{equation}
measures the percentage increase in epistemic uncertainty per percentage increase in aleatoric uncertainty.
$R_{\mathrm{rel}} = 0$ indicates perfect disentanglement; $R_{\mathrm{rel}} = 1$ indicates full entanglement; values above $0.3$ signal substantial contamination (see Appendix~\ref{app:disentangle-transfer} for the motivation behind the relative formulation over the absolute ratio).
\subsection{Results}
Table~\ref{tab:disentangle-main} reports $|R_{\mathrm{rel}}|$ for low-rank Bayesian models under three configurations.
Under end-to-end training, both metrics are near-perfectly disentangled ($|R_{\mathrm{rel}}| \ll 0.3$ everywhere), confirming that end-to-end Bayesian training with low-rank posteriors is the regime that produces the least entangled uncertainty estimates.
It is also the regime under which the second-order Taylor expansion most accurately approximates exact MI: at $\alpha{=}0$, $\sum_k \Ck\,/\,\text{MI}$ is $1.00\times$ on Fashion-MNIST and $1.02\times$ on CIFAR-10 (Table~\ref{tab:baseline_comparison}).
$\sum_k \Ck$ achieves lower $|R_{\mathrm{rel}}|$ in 9 of the 10 end-to-end conditions, the exception being CIFAR-10 at $\alpha{=}0.1$ where both values remain below $0.05$.
The rightmost block contrasts the \emph{same model type and dataset} (CIFAR-10, low-rank) under transfer learning: ratios jump to $0.74$--$1.97$, an order-of-magnitude degradation that isolates the training regime as the causal factor.
\begin{table}[!ht]
\caption{$|R_{\mathrm{rel}}(\alpha)|$ for low-rank Bayesian models.
Lower is better; $\dagger$\,=\,$\sum_k \Ck$ less entangled.
The first two blocks use end-to-end training (e2e); the third uses a frozen
ResNet-50 backbone with a low-rank Bayesian head (TL).}
\label{tab:disentangle-main}
\footnotesize
\centering
\begin{tabular*}{\columnwidth}{@{\extracolsep{\fill}}lcccccc@{}}
\toprule
& \multicolumn{2}{c}{FMNIST (e2e)} & \multicolumn{2}{c}{CIFAR-10 (e2e)} & \multicolumn{2}{c}{CIFAR-10 (TL)} \\
\cmidrule(lr){2-3} \cmidrule(lr){4-5} \cmidrule(lr){6-7}
$\alpha$ & MI & $\sum_k \Ck$ & MI & $\sum_k \Ck$ & MI & $\sum_k \Ck$ \\
\midrule
0.1 & 0.023 & $\mathbf{0.005}^{\dagger}$ & $\mathbf{0.040}$ & 0.047 & $\mathbf{1.830}$ & 1.970 \\
0.2 & 0.010 & $\mathbf{0.003}^{\dagger}$ & 0.012 & $\mathbf{0.007}^{\dagger}$ & $\mathbf{1.344}$ & 1.435 \\
0.3 & 0.008 & $\mathbf{0.002}^{\dagger}$ & 0.009 & $\mathbf{0.004}^{\dagger}$ & $\mathbf{0.991}$ & 1.039 \\
0.4 & 0.005 & $\mathbf{0.003}^{\dagger}$ & 0.020 & $\mathbf{0.016}^{\dagger}$ & $\mathbf{0.832}$ & 0.868 \\
0.5 & 0.009 & $\mathbf{0.002}^{\dagger}$ & 0.020 & $\mathbf{0.017}^{\dagger}$ & $\mathbf{0.737}$ & 0.760 \\
\bottomrule
\end{tabular*}
\end{table}
\paragraph{The posterior approximation matters as much as the metric.}
Table~\ref{tab:disentangle-main} identifies training regime as a key
factor.  Under end-to-end training, posterior expressiveness tracks
entanglement: on Fashion-MNIST the low-rank model achieves a mean
$|R_{\mathrm{rel}}|$ below $0.01$ for both metrics, while the
full-rank model reaches $0.11$\,(MI)\,/\,$0.10$\,($\sum_k \Ck$) and
MC~dropout $0.10$\,/\,$0.06$ respectively
(Appendix~\ref{app:disentangle-transfer}).
Transfer learning disrupts this hierarchy: even the low-rank model
produces $|R_{\mathrm{rel}}|$ exceeding $0.73$ on CIFAR-10, an order
of magnitude worse than any end-to-end configuration, because the
frozen backbone supplies features never optimised for calibrated
predictive variance.
This suggests that the high entanglement reported by
\citet{mucsanyi2024benchmarking} reflects the training regime as much
as the decomposition formula.  When uncertainty is not propagated
through the full network, none of the three posterior families we
evaluate can compensate, motivating the skewness diagnostic of
Section~\ref{sec:skewness} to flag such configurations.
Across end-to-end settings (Figure~\ref{fig:disentangle-ratios-e2e}),
$\sum_k \Ck$ achieves lower $|R_{\mathrm{rel}}|$ than MI in 19 of 20
conditions. 
Under transfer learning both metrics degrade and MI edges ahead,
coinciding with baseline ratios $\sum_k \Ck\,/\,\text{MI}$ above
roughly $1.6\times$ (Table~\ref{tab:baseline_comparison}), i.e.\
where the Taylor residual is largest.
Full transfer-learning results and an analysis of the rank
constraint's effect on uncertainty scale appear in
Appendix~\ref{app:disentangle-transfer}.
\paragraph{Sensitivity to class cardinality.}
On CIFAR-100 \citep{krizhevsky2009learning}, the $1/\mu_k$ normalisation amplifies contributions from
low-probability classes, inflating $\sum_k \Ck$ relative to MI by
$1.17\times$ for the low-rank model and $1.89\times$ for MC~dropout.
The $\mathcal{O}(K^2)$ scaling analysis and mitigation strategies are
detailed in Appendix~\ref{app:disentangle-scaling}.
\section{Discussion and Conclusion}\label{sec:discussion}

We presented a per-class epistemic uncertainty decomposition $\vC(x)$ derived from a second-order Taylor expansion of MI, where the $1/\mu_k$ normalisation arises naturally from the entropy Hessian and counteracts boundary suppression.
The decomposition reduces to a simple ratio of variance to mean per class, adds negligible cost to any posterior sampling pipeline, and closely tracks scalar MI by summation.

The three experiments exposed a consistent theme: the value of $\vC(x)$ lies not in being a better scalar summary but in enabling class-specific queries that scalar metrics cannot express.
For selective prediction, targeting critical-class $C_k$ outperformed all scalar and variance-based baselines; for OoD detection, the per-class view revealed that distributional shift can concentrate asymmetrically across classes, a structure invisible to any aggregate score; for disentanglement, the decomposition provided a lens to diagnose when and why epistemic and aleatoric signals become entangled.

A finding we did not anticipate is that the posterior approximation shaped metric behaviour at least as strongly as the metric itself.
Switching from variational inference to MC~dropout reversed the ranking of our deferral policies, because dropout inflated skewness precisely for the rare classes where the Taylor expansion is most sensitive.
The disentanglement experiments sharpened this further: freezing a pretrained backbone and attaching a Bayesian head degraded AU/EU separation by over an order of magnitude compared to end-to-end training, even when the Bayesian component was identical.
This raises a question for the growing literature on post-hoc Bayesian methods: if features are learned without any posterior objective, the resulting variance structure may not support meaningful epistemic attribution regardless of the last-layer treatment.
We do not suggest post-hoc methods lack practical value, but the quality of their uncertainty outputs deserves scrutiny commensurate with the attention given to their scalability.

We note several limitations.
The additive link $\sum_k \Ck \approx \MI$ loosens under high skewness over low-probability classes, since the third-order remainder scales as $1/\mu_k^2$; $\rho_k$ diagnoses but does not correct this, and the CBEC fallback requires domain knowledge to define class partitions.
The $1/\mu_k$ normalisation introduces $\mathcal{O}(K^2)$ scaling of the
aggregate, necessitating truncation or reweighting in high-cardinality
settings, though the per-class vector itself remains fully interpretable.
Our CIFAR-100 experiments probe this cardinality effect in the disentanglement
setting, but we do not evaluate high-cardinality OoD detection; ImageNet- or
iNaturalist-scale class-conditional shifts remain an important stress test for
future work.

All models use MFVI, low-rank Gaussian VI, MC~dropout, or deep ensembles. 
Characterising how $C_k$ behaves under other approximate inference
schemes such as Laplace approximations~\citep{daxberger2021laplace}
or stochastic weight averaging~\citep{maddox2019simple} is a natural
next step.

The decomposition is not tied to any specific posterior family or architecture.
Future work includes extending $\vC(x)$ to structured prediction tasks where class identity is less cleanly defined, integrating it into active learning acquisition functions that target specific classes, and combining low-rank ensembles with per-class attribution for richer uncertainty profiles.
More fundamentally, making end-to-end Bayesian training more practical and scalable strikes us as a more promising path than refining post-hoc corrections, and the per-class decomposition is designed to complement that direction.\footnote{Code and trained models are available \href{https://github.com/arradiat/Decomposing-epistemic-uncertainty-UAI-2026}{here} 
.}

\clearpage

\bibliography{biblio}

@inproceedings{gal2016dropout,
  title     = {Dropout as a {B}ayesian Approximation: Representing Model Uncertainty in Deep Learning},
  author    = {Gal, Yarin and Ghahramani, Zoubin},
  booktitle = {Proceedings of the 33rd International Conference on Machine Learning (ICML)},
  pages     = {1050--1059},
  year      = {2016},
  publisher = {PMLR}
}

@inproceedings{kendall2017uncertainties,
  title     = {What Uncertainties Do We Need in {B}ayesian Deep Learning for Computer Vision?},
  author    = {Kendall, Alex and Gal, Yarin},
  booktitle = {Advances in Neural Information Processing Systems (NeurIPS)},
  volume    = {30},
  pages     = {5574--5584},
  year      = {2017}
}

@inproceedings{lakshminarayanan2017simple,
  title     = {Simple and Scalable Predictive Uncertainty Estimation Using Deep Ensembles},
  author    = {Lakshminarayanan, Balaji and Pritzel, Alexander and Blundell, Charles},
  booktitle = {Advances in Neural Information Processing Systems (NeurIPS)},
  volume    = {30},
  year      = {2017}
}

@inproceedings{gustafsson2020evaluating,
  title     = {Evaluating Scalable {B}ayesian Deep Learning Methods for Robust Computer Vision},
  author    = {Gustafsson, Fredrik K. and Danelljan, Martin and Sch{\"o}n, Thomas B.},
  booktitle = {Proceedings of the IEEE/CVF Conference on Computer Vision and Pattern Recognition Workshops (CVPRW)},
  pages     = {318--319},
  year      = {2020}
}

@inproceedings{sale2024labelwise,
  title     = {Label-Wise Aleatoric and Epistemic Uncertainty Quantification},
  author    = {Sale, Yusuf and Hofman, Paul and L{\"o}hr, Timo and Wimmer, Lisa and Nagler, Thomas and H{\"u}llermeier, Eyke},
  booktitle = {Proceedings of the 40th Conference on Uncertainty in Artificial Intelligence (UAI)},
  pages     = {3159--3179},
  volume    = {244},
  series    = {Proceedings of Machine Learning Research},
  publisher = {PMLR},
  year      = {2024}
}

@inproceedings{duan2024evidential,
  title     = {Evidential Uncertainty Quantification: A Variance-Based Perspective},
  author    = {Duan, Ruxiao and Caffo, Brian and Bai, Harrison X. and Sair, Haris I. and Jones, Craig},
  booktitle = {Proceedings of the IEEE/CVF Winter Conference on Applications of Computer Vision (WACV)},
  pages     = {2132--2141},
  year      = {2024}
}

@inproceedings{wimmer2023quantifying,
  title     = {Quantifying Aleatoric and Epistemic Uncertainty in Machine Learning: Are Conditional Entropy and Mutual Information Appropriate Measures?},
  author    = {Wimmer, Lisa and Sale, Yusuf and Hofman, Paul and Bischl, Bernd and H{\"u}llermeier, Eyke},
  booktitle = {Proceedings of the 39th Conference on Uncertainty in Artificial Intelligence (UAI)},
  pages     = {2282--2292},
  volume    = {216},
  series    = {Proceedings of Machine Learning Research},
  publisher = {PMLR},
  year      = {2023}
}

@inproceedings{khan2019striking,
  title={Striking the right balance with uncertainty},
  author={Khan, Salman and Hayat, Munawar and Zamir, Syed Waqas and Shen, Jianbing and Shao, Ling},
  booktitle={Proceedings of the IEEE/CVF Conference on Computer Vision and Pattern Recognition},
  pages={103--112},
  year={2019}
}

@article{baltaci2023class,
  title   = {Class Uncertainty: A Measure to Mitigate Class Imbalance},
  author  = {Baltaci, Zeynep Sonat and Oksuz, Kemal and Kuzucu, Selim and Tezoren, Kivanc and Konar, Berkin Kerim and Ozkan, Alpay and Akbas, Emre and Kalkan, Sinan},
  journal = {arXiv preprint arXiv:2311.14090},
  year    = {2023}
}

@inproceedings{sensoy2018evidential,
 author = {Sensoy, Murat and Kaplan, Lance and Kandemir, Melih},
 booktitle = {Advances in Neural Information Processing Systems},
 editor = {S. Bengio and H. Wallach and H. Larochelle and K. Grauman and N. Cesa-Bianchi and R. Garnett},
 pages = {},
 publisher = {Curran Associates, Inc.},
 title = {Evidential Deep Learning to Quantify Classification Uncertainty},
 url = {https://proceedings.neurips.cc/paper_files/paper/2018/file/a981f2b708044d6fb4a71a1463242520-Paper.pdf},
 volume = {31},
 year = {2018}
}

@inproceedings{malinin_predictive_2018,
	title = {Predictive Uncertainty Estimation via Prior Networks},
	volume = {31},
	url = {https://proceedings.neurips.cc/paper_files/paper/2018/file/3ea2db50e62ceefceaf70a9d9a56a6f4-Paper.pdf},
	booktitle = {Advances in Neural Information Processing Systems},
	publisher = {Curran Associates, Inc.},
	author = {Malinin, Andrey and Gales, Mark},
	editor = {Bengio, S. and Wallach, H. and Larochelle, H. and Grauman, K. and Cesa-Bianchi, N. and Garnett, R.},
	year = {2018},
}

@article{johnson2016mimic,
  title   = {{MIMIC-III}, a freely accessible critical care database},
  author  = {Johnson, Alistair E. W. and Pollard, Tom J. and Shen, Lu and Lehman, Li-wei H. and Feng, Mengling and Ghassemi, Mohammad and Moody, Benjamin and Szolovits, Peter and Celi, Leo Anthony and Mark, Roger G.},
  journal = {Scientific Data},
  volume  = {3},
  number  = {1},
  pages   = {160035},
  year    = {2016}
}

@article{clanuwat2018deep,
  title={Deep learning for classical japanese literature},
  author={Clanuwat, Tarin and Bober-Irizar, Mikel and Kitamoto, Asanobu and Lamb, Alex and Yamamoto, Kazuaki and Ha, David},
  journal={arXiv preprint arXiv:1812.01718},
  year={2018}
}

@article{xiao2017fashion,
  title={Fashion-mnist: a novel image dataset for benchmarking machine learning algorithms},
  author={Xiao, Han and Rasul, Kashif and Vollgraf, Roland},
  journal={arXiv preprint arXiv:1708.07747},
  year={2017}
}

@inproceedings{depeweg2018decomposition,
  title={Decomposition of uncertainty in Bayesian deep learning for efficient and risk-sensitive learning},
  author={Depeweg, Stefan and Hernandez-Lobato, Jose-Miguel and Doshi-Velez, Finale and Udluft, Steffen},
  booktitle={International conference on machine learning},
  pages={1184--1193},
  year={2018},
  organization={PMLR}
}

@inproceedings{gal2017deep,
  title={{D}eep {B}ayesian Active Learning with Image Data},
  author={Gal, Yarin and Islam, Riashat and Ghahramani, Zoubin},
  booktitle={International conference on machine learning},
  pages={1183--1192},
  year={2017},
  organization={PMLR}
}

@misc{dejong2026,
      title={Measuring Orthogonality as the Blind-Spot of Uncertainty Disentanglement}, 
      author={Ivo Pascal de Jong and Andreea Ioana Sburlea and Matthia Sabatelli and Matias Valdenegro-Toro},
      year={2026},
      eprint={2408.12175},
      archivePrefix={arXiv},
      primaryClass={cs.LG},
      url={https://arxiv.org/abs/2408.12175}, 
}

@inproceedings{mucsanyi2024benchmarking,
  title={Benchmarking uncertainty disentanglement: specialized uncertainties for specialized tasks},
  author={Mucs{\'a}nyi, B{\'a}lint and Kirchhof, Michael and Oh, Seong Joon},
  booktitle={Proceedings of the 38th International Conference on Neural Information Processing Systems},
  pages={50972--51038},
  year={2024}
}

@inproceedings{geifman2017selective,
  title={Selective Classification for Deep Neural Networks},
  author={Geifman, Yonatan and El-Yaniv, Ran},
  booktitle={Advances in Neural Information Processing Systems (NeurIPS)},
  volume={30},
  year={2017}
}

@inproceedings{hendrycks2017,
  title     = {A Baseline for Detecting Misclassified and Out-of-Distribution Examples in Neural Networks},
  author    = {Hendrycks, Dan and Gimpel, Kevin},
  booktitle = {International Conference on Learning Representations (ICLR)},
  year      = {2017}
}

@article{camarasa2021quantitative,
   title={A Quantitative Comparison of Epistemic Uncertainty Maps Applied to Multi-Class Segmentation},
   volume={1},
   ISSN={2766-905X},
   url={http://dx.doi.org/10.59275/j.melba.2021-ec49},
   DOI={10.59275/j.melba.2021-ec49},
   number={UNSURE2020},
   journal={Machine Learning for Biomedical Imaging},
   publisher={Machine Learning for Biomedical Imaging},
   author={Camarasa, Robin and Bos, Daniel and Hendrikse, Jeroen and Nederkoorn, Paul and Kooi, M. Eline and van der Lugt, Aad and de Bruijne, Marleen},
   year={2021},
   month=sep, pages={1–39} }

@inproceedings{daxberger2021laplace,
  title={Laplace redux--effortless Bayesian deep learning},
  author={Daxberger, Erik and Kristiadi, Agustinus and Immer, Alexander and Eschenhagen, Runa and Bauer, Matthias and Hennig, Philipp},
  booktitle={Proceedings of the 35th International Conference on Neural Information Processing Systems},
  pages={20089--20103},
  year={2021}
}

@inproceedings{maddox2019simple,
  title={A simple baseline for Bayesian uncertainty in deep learning},
  author={Maddox, Wesley J and Garipov, Timur and Izmailov, Pavel and Vetrov, Dmitry and Wilson, Andrew Gordon},
  booktitle={Proceedings of the 33rd International Conference on Neural Information Processing Systems},
  pages={13153--13164},
  year={2019}
}

@article{blei2017variational,
  title={Variational inference: A review for statisticians},
  author={Blei, David M and Kucukelbir, Alp and McAuliffe, Jon D},
  journal={Journal of the American statistical Association},
  volume={112},
  number={518},
  pages={859--877},
  year={2017},
  publisher={Taylor \& Francis}
}

@inbook{Turner_Sahani_2011, place={Cambridge}, title={Two problems with variational expectation maximisation for time series models}, booktitle={Bayesian Time Series Models}, publisher={Cambridge University Press}, author={Turner, Richard Eric and Sahani, Maneesh}, year={2011}, pages={104–124}}

@inproceedings{smith2018understanding,
  title     = {Understanding Measures of Uncertainty for Adversarial Example Detection},
  author    = {Smith, Lewis and Gal, Yarin},
  booktitle = {Proceedings of the 34th Conference on Uncertainty in Artificial Intelligence (UAI)},
  pages     = {560--569},
  year      = {2018}
}

@article{wilkinson2003proposed,
  title={Proposed international clinical diabetic retinopathy and diabetic macular edema disease severity scales},
  author={Wilkinson, Charles P and Ferris III, Frederick L and Klein, Ronald E and Lee, Paul P and Agardh, Carl David and Davis, Matthew and Dills, Diana and Kampik, Anselm and Pararajasegaram, Rangasamy and Verdaguer, Juan T and others},
  journal={Ophthalmology},
  volume={110},
  number={9},
  pages={1677--1682},
  year={2003},
  publisher={Elsevier}
}

@article{decenciere2014feedback,
  title={Feedback on a publicly distributed image database: the Messidor database},
  author={Decenci{\`e}re, Etienne and Zhang, Xiwei and Cazuguel, Guy and Lay, Bruno and Cochener, B{\'e}atrice and Trone, Caroline and Gain, Philippe and Ord{\'o}{\~n}ez-Varela, John-Richard and Massin, Pascale and Erginay, Ali and others},
  journal={Image Analysis \& Stereology},
  pages={231--234},
  year={2014}
}

@article{cuadros2009eyepacs,
  title={EyePACS: an adaptable telemedicine system for diabetic retinopathy screening},
  author={Cuadros, Jorge and Bresnick, George},
  journal={Journal of diabetes science and technology},
  volume={3},
  number={3},
  pages={509--516},
  year={2009},
  publisher={SAGE Publications}
}

@inproceedings{toure2026singular,
  title     = {Singular Bayesian Neural Networks},
  author    = {Toure, Mame Diarra and Stephens, David A.},
  booktitle = {International Conference on Machine Learning},
  year      = {2026},
  url={https://openreview.net/forum?id=rKD8HtgY8G}
}

@inproceedings{ulmer2021know,
  title={Know your limits: Uncertainty estimation with relu classifiers fails at reliable ood detection},
  author={Ulmer, Dennis and Cin{\`a}, Giovanni},
  booktitle={Uncertainty in artificial intelligence},
  pages={1766--1776},
  year={2021},
  organization={PMLR}
}

@article{krizhevsky2009learning,
  title={Learning multiple layers of features from tiny images},
  author={Krizhevsky, Alex and Hinton, Geoffrey and others},
  year={2009},
  publisher={Toronto, ON, Canada}
}

@article{elyaniw2010,
  title={On the Foundations of Noise-Free Selective Classification},
  author={El-Yaniv, Ran and Wiener, Yair},
  journal={JMLR},
  volume={11},
  pages={1605--1641},
  year={2010}
}

@article{houlsby2011bayesian,
  title={Bayesian active learning for classification and preference learning},
  author={Houlsby, Neil and Husz{\'a}r, Ferenc and Ghahramani, Zoubin and Lengyel, M{\'a}t{\'e}},
  journal={arXiv preprint arXiv:1112.5745},
  year={2011}
}

@misc{aptos2019,
  title        = {{APTOS} 2019 Blindness Detection},
  author       = {{Asia Pacific Tele-Ophthalmology Society}},
  year         = {2019},
  howpublished = {Kaggle Competition},
  url          = {https://www.kaggle.com/c/aptos2019-blindness-detection}
}

@inproceedings{tan2019efficientnet,
  title     = {{EfficientNet}: Rethinking Model Scaling for Convolutional Neural Networks},
  author    = {Tan, Mingxing and Le, Quoc V.},
  booktitle = {Proceedings of the 36th International Conference on Machine Learning (ICML)},
  pages     = {6105--6114},
  year      = {2019},
  publisher = {PMLR}
}

@inproceedings{blundell2015weight,
  title     = {Weight Uncertainty in Neural Networks},
  author    = {Blundell, Charles and Cornebise, Julien and Kavukcuoglu, Koray and Wierstra, Daan},
  booktitle = {Proceedings of the 32nd International Conference on Machine Learning (ICML)},
  pages     = {1613--1622},
  year      = {2015},
  publisher = {PMLR}
}

@article{Fano1947,
  author  = {Ugo Fano},
  title   = {Ionization Yield of Radiations. {II}. {T}he Fluctuations of the Number of Ions},
  journal = {Physical Review},
  volume  = {72},
  number  = {1},
  pages   = {26--29},
  year    = {1947},
}

@book{CoxLewis1966,
  author    = {David R. Cox and Peter A. W. Lewis},
  title     = {The Statistical Analysis of Series of Events},
  publisher = {Methuen},
  address   = {London},
  year      = {1966},
}

\newpage

\onecolumn

\title{Not Just How Much, But Where: Decomposing Epistemic Uncertainty into Per-Class Contributions\\(Supplementary Material)}
\maketitle

\appendix

%
\section{Background and Related Work}
\label{app:related}

This appendix provides the detailed literature context that supports
the compressed survey in the main text.
We organise the discussion into four parts: (i)~the axiomatic framework
and the label-wise variance approach; (ii)~evidential and Dirichlet-based
per-class uncertainty; (iii)~class-imbalance and multi-class segmentation
perspectives; and (iv)~AU/EU disentanglement, scalar epistemic baselines,
and the orthogonality criterion.

\subsection{Axiomatic Framework and the Label-Wise Approach}
\label{app:related:labelwise}

\paragraph{Wimmer et al.\ (2023) -- Axioms for epistemic uncertainty.}
\citet{wimmer2023quantifying} formalise a set of axioms A0--A5 for
second-order uncertainty measures in $K$-class classification.
An uncertainty measure is defined over a \emph{second-order distribution}
$Q \in \Delta^{(2)}_K$, a distribution over probability simplices
$\vp \in \Delta^K$, with mean $\vmu = \E_Q[\vp]$.
The five axioms relevant to our analysis are:

\begin{itemize}
  \item \textbf{A0} (Non-negativity): $\mathrm{EU}(Q) \geq 0$.
  \item \textbf{A1} (Vanishing at certainty): $\mathrm{EU}(Q) = 0$
    if and only if $Q = \delta_{\boldsymbol{\theta}}$ for some
    $\boldsymbol{\theta} \in \Delta^K$.
  \item \textbf{A2} (Maximum at uniform): $\mathrm{EU}$ is maximised when
    $Q$ is the uniform distribution on $\Delta^K$.
  \item \textbf{A3} (Monotone under mean-preserving spread): if $Q'$ is
    a mean-preserving spread of $Q$ (i.e.\ $Q'$ has the same mean but
    weakly larger variance in every direction), then
    $\mathrm{EU}(Q') \geq \mathrm{EU}(Q)$.
  \item \textbf{A5} (Location-shift invariance): a spread-preserving
    shift of $Q$'s mean along the simplex does not change
    $\mathrm{EU}(Q)$.
\end{itemize}

Wimmer et al.\ show that mutual information satisfies A0 and A1, but
violates A2 (their Proposition~2), A3 (Proposition~4; a counterexample
exists already for $K{=}2$), and A5 (Proposition~3).
No single measure simultaneously satisfies all five axioms.
Our aggregate $\sum_k C_k$ satisfies A0, A1, and A3 but violates A2 and A5;
as Corollary~\ref{cor:a5-tradeoff} establishes, the A5 violation is the
\emph{precise mechanism} that corrects boundary suppression.

\paragraph{Sale et al.\ (2024) -- Label-wise decomposition.}
\citet{sale2024labelwise} introduce a label-wise framework that reduces a
$K$-class problem to $K$ binary sub-problems via one-vs-rest binarisation.
For each class $k$, the true class probability $\Theta_k$ is treated as
a scalar random variable following a second-order distribution $Q_k$
(capturing model uncertainty over possible probability values for class~$k$).
Per-class total, aleatoric, and epistemic uncertainty are defined via
proper scoring rules; under squared loss the epistemic measure is
\begin{equation}
  \mathrm{EU}_k \;=\; \Var(\Theta_k),
\end{equation}
and the global quantities are sums: $\mathrm{EU} = \sum_k \Var(\Theta_k)$.
Sale et al.\ prove (their Theorem~3.2) that these variance-based measures
satisfy axioms A0, A1, A3 (strict version), A4 (strict version), A5, A6,
and A7, a stronger axiomatic profile than entropy-based measures, which
fail A5~\citep{wimmer2023quantifying}.

Two limitations motivate our normalised alternative.
First, \emph{boundary suppression}: because $p_k \in [0,1]$, the
constraint $\Var[p_k] \leq \mu_k(1-\mu_k)$ (Lemma~\ref{lem:boundary})
forces $\mathrm{EU}_k \to 0$ as $\mu_k \to 0$, regardless of how
strongly the stochastic forward passes disagree.
This ``edge-squeeze'' masks epistemic uncertainty for precisely the rare,
low-base-rate classes where it matters most, rendering $\mathrm{EU}_k$
incomparable across classes with different mean probabilities.
Second, the framework operates on marginals $Q_k$ in isolation;
consequently, the global sum $\sum_k H(\bar{\theta}_k)$ does not recover
the standard multiclass mutual information decomposition, preventing
direct additive attribution of MI to individual classes.
Our $C_k$ addresses both limitations: the $1/\mu_k$ normalisation
removes the base-rate confound, and by construction
$\sum_k C_k \approx \MI$ (Section~\ref{sec:definition}).

\subsection{Evidential and Dirichlet-Based Per-Class Uncertainty}
\label{app:related:evidential}

\paragraph{Sensoy et al.\ (2018) and Malinin \& Gales (2018).}
\citet{sensoy2018evidential} propose Evidential Deep Learning (EDL),
training a neural network to output Dirichlet concentration parameters
$\boldsymbol{\alpha} = (\alpha_1,\dots,\alpha_K)$ in a single forward
pass, thereby placing a distribution directly over the simplex.
\citet{malinin_predictive_2018} independently introduced Prior Networks
with the same Dirichlet parameterisation but a different training objective
(reverse KL from a target Dirichlet), and formalised the decomposition of
predictive uncertainty into distributional uncertainty (epistemic) and
data uncertainty (aleatoric) within this framework.
Both approaches provide per-class quantities derived from the Dirichlet
parameters without requiring multiple stochastic forward passes.

\paragraph{Duan et al.\ (2024).}
\citet{duan2024evidential} extend the evidential paradigm by applying the
law of total covariance to the Dirichlet-Categorical hierarchy.
For a Dirichlet with concentration $\boldsymbol{\alpha}$ and strength
$\alpha_0 = \sum_k \alpha_k$, they derive the full per-class epistemic
covariance matrix (their Equation~21)
\begin{equation}
  \mathrm{Cov}^{\mathrm{epis}}_{c,c'}
  \;=\;
  \frac{1}{\alpha_0+1}
  \bigl(\bar{\mu}_c \,\mathbf{1}_{c=c'} - \bar{\mu}_c\bar{\mu}_{c'}\bigr),
\end{equation}
whose diagonal gives the per-class epistemic variance (Equation~24)
$U_c^{\mathrm{epis}} = \frac{1}{\alpha_0+1}\bar{\mu}_c(1-\bar{\mu}_c)$,
and whose off-diagonal entries reveal pairwise class confusion.
The covariance structure therefore carries both marginal and joint
epistemic information in closed form.

However, the entire covariance matrix is determined solely by
$\bar{\boldsymbol{\mu}} = \boldsymbol{\alpha}/\alpha_0$ and the scalar
strength $\alpha_0$.
This means that two inputs with the same Dirichlet mean but very
different patterns of stochastic-forward-pass disagreement will receive
identical epistemic covariance matrices, a rigidity absent from
empirical MC-based estimates.
More fundamentally, placing a Dirichlet prior on the simplex is a strong
distributional assumption: the Dirichlet family is conjugate and
analytically convenient, but real posterior predictive distributions
under deep networks are rarely Dirichlet-shaped.
Our approach makes no distributional assumption and recovers
class-confusion information from the empirical covariance
$\widehat{\mathrm{Cov}}[\vp]$ computed directly from MC samples.

\subsection{Class Imbalance and Multi-Class Segmentation}
\label{app:related:imbalance}

\paragraph{Khan et al.\ (2019).}
\citet{khan2019striking} demonstrate that MC-dropout predictive
uncertainty (computed as the second moment of the softmax output across
dropout masks, following~\citealt{gal2016dropout}) correlates with both
the rarity of classes and the difficulty of individual samples.
They exploit this correlation to reshape classification margins: rare,
high-uncertainty classes receive larger margin penalties, improving
generalisation for under-represented classes across face verification,
attribute prediction, and classification tasks.
Importantly, the uncertainty used is the total predictive uncertainty
(the combined effect of aleatoric and epistemic sources via the
MC-dropout second moment); no explicit AU/EU decomposition is performed,
and the uncertainty is used as a \emph{class-level signal} derived from
the training set rather than an input-level epistemic vector.

\paragraph{Baltaci et al.\ (2023).}
\citet{baltaci2023class} propose Class Uncertainty (CU), defined as the
average predictive entropy across training examples within each class:
\begin{equation}
  \tilde{\mu}^U_c
  \;=\;
  \frac{1}{N_c}\sum_{i=1}^{N_c} u^{(i)},
  \qquad
  u^{(i)} = -\sum_{c' \in \mathcal{C}} \bar{p}_{c'}^{(i)} \log \bar{p}_{c'}^{(i)},
\end{equation}
where $\bar{p}_{c'}^{(i)}$ is the ensemble-averaged class probability for
sample~$i$ (their Equations~8--9); a normalised version
$\mu^U_c = \tilde{\mu}^U_c / \sum_{c'} \tilde{\mu}^U_{c'}$ is then used
as the class-imbalance weight (Equation~10).
CU captures \emph{semantic imbalance}, difficulty differences among
classes that go beyond mere class cardinality, and is shown to
correlate more strongly with class-wise test error than cardinality alone
(Spearman $\rho = 0.92$ vs.\ $0.82$).

CU has two limitations relative to our goal.
First, it is a \emph{dataset-level aggregate}: it aggregates over training
examples and is not defined for a single test input, so it cannot provide
input-level epistemic attribution.
Second, it uses predictive entropy, which conflates aleatoric and
epistemic components; no decomposition into EU and AU is performed.

\paragraph{Camarasa et al.\ (2021).}
\citet{camarasa2021quantitative} provide a systematic quantitative
comparison of epistemic uncertainty metrics in multi-class carotid artery
MR segmentation, evaluating one-vs-all entropy, mutual information, and
class-wise variance maps across both combined and class-specific settings.
Their key empirical finding is that raw class-wise variance underperforms
one-vs-all entropy in the class-specific setting (where it decorrelates
from misclassification rate), while entropy-based metrics maintain
stronger correlation with segmentation errors across all classes.
Lemma~\ref{lem:boundary} establishes this as a mathematical inevitability
rather than a dataset artefact: variance is structurally suppressed near
the simplex boundary regardless of actual posterior disagreement, making
it an unreliable epistemic signal precisely where the information would
be most valuable.

\subsection{AU/EU Disentanglement, Scalar Baselines, and
            the Orthogonality Criterion}
\label{app:related:disentangle}

\paragraph{Smith \& Gal (2018) -- Taylor connection and MC-dropout
failure modes.}
\citet{smith2018understanding} analyse the relationship between softmax
variance and mutual information through a Taylor expansion of the
logarithm.
They show (their Equation~10, Section~3.2) that the leading term in
the MI series is exactly proportional to the mean variance across
classes:
\begin{equation}
  \hat{I}
  \;=\;
  \sum_j \!\left(\tfrac{1}{T}\textstyle\sum_i p_{ij}^2\right) - \hat{p}_j^2
  + \mathcal{O}(\text{higher order}),
\end{equation}
which is, up to a multiplicative constant, the mean variance score
$\hat{\sigma}^2 = \frac{1}{C}\sum_j \Var[p_j]$.
This provides a theoretical explanation for why variance-based
uncertainty estimates empirically approximate MI and underscores the
Taylor-expansion approach underlying our $C_k$ decomposition; the key
difference is that our expansion is applied \emph{per class} rather than
averaged, yielding class-level attribution.

Smith \& Gal also document failure modes of MC-dropout: the approximate
posterior tends to underestimate uncertainty and can produce
spuriously confident regions in latent space where the model
assigns high confidence to inputs far from any training data.
We observe this behaviour in the DR experiment
(Section~\ref{sec:exp_dr}): under MC-dropout, the skewness diagnostic
$\rho_k$ is elevated for critical (rare) classes, degrading the
$C_k$ Taylor approximation and causing $C_{\mathrm{crit\_max}}$ to
underperform its BNN counterpart.

\paragraph{Houlsby et al.\ (2011) -- BALD and scalar epistemic uncertainty.}
\citet{houlsby2011bayesian} introduce Bayesian Active Learning by
Disagreement (BALD), originally for Gaussian processes and binary
classifiers, selecting data points that maximise the mutual
information $\MI(y;\boldsymbol{\omega}\mid x)$ between labels and model
parameters; this objective was subsequently adopted as the canonical
scalar epistemic measure in Bayesian deep learning~\citep{gal2017deep}.
BALD demonstrated its power for reducing posterior uncertainty in
active learning.
However, MI is class-agnostic: a high MI value signals that \emph{some}
class confusion is driving uncertainty, but does not identify
\emph{which} classes are the locus of the model's ignorance.
Our $C_k$ vector decomposes MI into per-class contributions, preserving
the additive structure of the BALD objective while exposing the
class-level anatomy of epistemic uncertainty.

\paragraph{Hendrycks \& Gimpel (2017) -- Maximum softmax probability.}
\citet{hendrycks2017} demonstrate that the maximum softmax
probability (MSP), $\max_k \mu_k$, provides a surprisingly effective
baseline for detecting both misclassified and out-of-distribution
examples, despite the fact that softmax probabilities are known to be
poorly calibrated as direct confidence estimates in deep networks.
We include the complementary uncertainty score
$1 - \mathrm{MSP} = 1 - \max_k \mu_k$ as a scalar baseline in our
deferral policy comparison (Table~\ref{tab:policies}).
This score is agnostic to the source of uncertainty (aleatoric vs.\
epistemic) and to class identity, making it the most basic comparator
against which our class-structured epistemic metrics are evaluated.

\paragraph{Mucs\'{a}nyi et al.\ (2024) -- Benchmarking disentanglement.}
\citet{mucsanyi2024benchmarking} conduct a large-scale benchmark of
uncertainty quantification methods and find that the aleatoric and
epistemic components produced by standard decomposition formulas
are highly rank-correlated across distributional methods:
$\rho(u_a, u_e) \geq 0.88$ on CIFAR-10 and
$\rho(u_a, u_e) \geq 0.78$ on ImageNet-1k.
They interpret this as evidence that current methods fail to disentangle
AU from EU, a conclusion that~\citet{dejong2026} subsequently show
requires qualification.

\paragraph{de Jong et al.\ (2026) -- Orthogonality as the correct
criterion.}
\citet{dejong2026} formalise what it means for AU and EU estimates to be
\emph{disentangled}.
They define two requirements, \emph{consistency} (each estimate
correlates with its corresponding ground-truth uncertainty) and
\emph{orthogonality} (each estimate is insensitive to changes in the
\emph{other} ground-truth uncertainty), and prove that both are
necessary and sufficient for disentanglement (their Theorems~3.1--3.3).

A central theoretical result (Theorem~3.2) is that on complex datasets
where aleatoric noise and epistemic difficulty co-occur, a high
correlation between the \emph{estimators} $u_a$ and $u_e$ is a
\emph{necessary consequence of estimator validity}, not a sign of
failure; \citet{mucsanyi2024benchmarking}'s high-$\rho$ finding is
therefore consistent with well-calibrated models on correlated datasets.
The correct diagnostic is orthogonality, formalised as their
Uncertainty Disentanglement Error (UDE).
Evaluating a range of methods across multiple datasets, de Jong et al.\
find that Deep Ensembles with Information-Theoretic (IT) disentanglement
achieve the best UDE, but that no current method fully satisfies both
consistency and orthogonality.

Critically for our work, \citet{dejong2026} show that disentanglement
behaviour is strongly modulated by whether the model is trained
from scratch or fine-tuned from a pretrained backbone (their Section~5,
Figure~7).
For a Deep Ensemble ResNet-18 pretrained on ImageNet-1k and fine-tuned
to CIFAR-10, the UDE is substantially higher than for the same
architecture trained from scratch ($0.545$ vs.\ $0.332$), primarily
because $|{\rho(u_e, U_a)}| = 0.985$ in the pretrained case, aleatoric
and epistemic uncertainty are almost fully conflated, compared to
$|{\rho(u_e, U_a)}| = 0.312$ for the from-scratch model.
This finding is directly relevant to our Section~\ref{sec:exp_disentangle},
where we observe the same qualitative pattern independently: the $C_k$
Taylor approximation degrades considerably when a frozen pretrained
backbone replaces end-to-end Bayesian training, and the AU/EU
entanglement of MI and $\sum_k C_k$ increases markedly in the transfer-learning
regime.

\paragraph{Depeweg et al.\ (2018) -- AU/EU decomposition via total variance.}
\citet{depeweg2018decomposition} formalise the decomposition of
predictive uncertainty into aleatoric and epistemic components for
latent-variable models.
Applying the law of total variance to the predictive distribution
$p(y \mid x)$ marginalised over the posterior $p(\omega \mid \mathcal{D})$
yields:
\begin{equation}
  \underbrace{\Var[y \mid x]}_{\text{total}}
  \;=\;
  \underbrace{\E_\omega[\Var[y \mid x, \omega]]}_{\text{aleatoric}}
  +
  \underbrace{\Var_\omega[\E[y \mid x, \omega]]}_{\text{epistemic}}.
\end{equation}
This identity is the regression analogue of the information-theoretic
decomposition MI $= H(\vmu) - \E[H(\vp)]$ used in classification;
it also underlies the label-wise variance measures of
\citet{sale2024labelwise}, which interpret $\Var(\Theta_k)$ as the
epistemic contribution from class $k$ via the same total-variance
decomposition applied marginally.
\section{Proofs}
\label{app:proofs}

\subsection{Proof of Theorem~\ref{thm:taylor} (MI Approximation)}
\label{app:proof:taylor}

\begin{proof}
Expand $\Ent(\vp^{(s)})$ to second order around $\vmu$:
\begin{equation}
  \Ent(\vp^{(s)})
  \;\approx\; \Ent(\vmu)
  + \nabla\Ent(\vmu)^\top(\vp^{(s)}-\vmu)
  + \tfrac{1}{2}(\vp^{(s)}-\vmu)^\top
    \nabla^2\Ent(\vmu)\,(\vp^{(s)}-\vmu).
\end{equation}
Take expectations over the $S$ passes.
The linear term vanishes by definition of $\vmu = \E[\vp^{(s)}]$.
By Lemma~\ref{lem:entropy_derivatives} the Hessian is
$\nabla^2\Ent(\vmu)=\mathrm{diag}(-1/\mu_1,\dots,-1/\mu_K)$,
so the quadratic term becomes
\begin{equation}
  \tfrac{1}{2}\,\E\!\left[
    (\vp-\vmu)^\top\nabla^2\Ent(\vmu)\,(\vp-\vmu)
  \right]
  = \tfrac{1}{2}\,\mathrm{tr}\!\left[
    \nabla^2\Ent(\vmu)\,\Covmat[\vp]
  \right]
  = -\frac{1}{2}\sum_{k=1}^{K}\frac{\Var[p_k]}{\mu_k}.
\end{equation}
The second equality uses the fact that the Hessian is diagonal, so the
trace selects only the diagonal elements of $\Covmat[\vp]$:
$\mathrm{tr}[\mathrm{diag}(-1/\mu_k)\,\Covmat[\vp]]= -\sum_k\Var[p_k]/\mu_k$.
Substituting into~\eqref{eq:decomp} gives
\begin{equation}
  \MI(y;\boldsymbol{\omega}\mid x)
  = \Ent(\vmu)-\E[\Ent(\vp)]
  \;\approx\; \frac{1}{2}\sum_{k=1}^{K}\frac{\Var[p_k](x)}{\mu_k(x)},
\end{equation}
with remainder $O(\E[\|\vp-\vmu\|^3])$ from the discarded cubic terms.
\end{proof}

\paragraph{Remark on the simplex constraint.}
The constraint $\sum_k p_k=1$ forces $\sum_k(p_k-\mu_k)=0$, so
$\Covmat[\vp]$ has zero row- and column-sums and rank at most $K-1$.
This does not invalidate the trace formula: the approximation operates
on the full $K$-dimensional covariance, and the simplex constraint is
already absorbed into the empirical estimates~\eqref{eq:mu}--\eqref{eq:cov}.
Summing~\eqref{eq:MI-approx} over all $K$ classes therefore remains
valid.
\subsection{Proofs of Lemmas~\ref{lem:boundary} and~\ref{lem:ck-boundary}}
\label{app:proof:boundary}

\begin{proof}[Proof of Lemma~\ref{lem:boundary} (Variance Bound on the Simplex)]
Since $p_k^{(s)}\in[0,1]$ implies $(p_k^{(s)})^2\leq p_k^{(s)}$, we have
\begin{equation}
  \Var[p_k]
  = \E[p_k^2]-\mu_k^2
  \leq \E[p_k]-\mu_k^2
  = \mu_k - \mu_k^2
  = \mu_k(1-\mu_k).
\end{equation}
Since $\mu_k(1-\mu_k)\to 0$ as $\mu_k\to 0$ or $\mu_k\to 1$, we have
$\Var[p_k]\to 0$ at the simplex boundary regardless of the degree of
disagreement among the $S$ forward passes.
\end{proof}

\begin{proof}[Proof of Lemma~\ref{lem:ck-boundary} (Boundary Behaviour of $C_k$)]
Dividing both sides of~\eqref{eq:var-bound} by $2\mu_k>0$:
\begin{equation}
  C_k = \frac{1}{2}\,\frac{\Var[p_k]}{\mu_k}
  \;\leq\; \frac{1}{2}\,\frac{\mu_k(1-\mu_k)}{\mu_k}
  = \frac{1}{2}(1-\mu_k).
\end{equation}
As $\mu_k\to 0$, the upper bound $\tfrac{1}{2}(1-\mu_k)\to\tfrac{1}{2}$,
so $C_k$ is not forced to zero near the simplex boundary.
\end{proof}

\paragraph{Remark on attainability.}
The variance bound is tight: it is attained when $p_k^{(s)}\in\{0,1\}$
for all $s$, i.e., each forward pass is maximally confident about class $k$ but
the passes disagree on the direction.
In this case $\Var[p_k]=\mu_k(1-\mu_k)$ and
$C_k=\tfrac{1}{2}(1-\mu_k)\approx\tfrac{1}{2}$ for $\mu_k\ll 1$: exactly
the non-vanishing signal that motivates the normalisation.
\subsection{Proof of Theorem~\ref{thm:axioms}}
\label{app:proof:axiom}

We follow the axiomatic framework of \citet{wimmer2023quantifying},
as extended by \citet{sale2024labelwise}.
Throughout, $Q \in \Delta^{(2)}_K$ is a second-order distribution over
probability vectors $\vp \in \Delta^{K-1}$, with mean $\vmu = \E_Q[\vp]$.
We write $\mathrm{EU}(Q) = \sum_{k=1}^K C_k =
\tfrac{1}{2}\sum_{k=1}^K \Var[p_k]/\mu_k$.

\paragraph{Axioms evaluated.}
\begin{itemize}
  \item \textbf{A0} (Non-negativity): $\mathrm{EU}(Q) \geq 0$.
  \item \textbf{A1} (Vanishing at certainty): $\mathrm{EU}(Q) = 0$ iff
    $Q = \delta_{\boldsymbol{\theta}}$ for some
    $\boldsymbol{\theta} \in \Delta^{K-1}$.
  \item \textbf{A2} (Maximality at uniform): $\mathrm{EU}$ is maximised
    when $Q = Q_{\mathrm{unif}}$, the uniform distribution on
    $\Delta^{K-1}$.
  \item \textbf{A3} (Monotone under MPS, strict version): if $Q'$ is a
    mean-preserving spread of $Q$ then $\mathrm{EU}(Q') > \mathrm{EU}(Q)$.
  \item \textbf{A5} (Location-shift invariance): if $Q'$ is a
    spread-preserving location shift of $Q$ then
    $\mathrm{EU}(Q') = \mathrm{EU}(Q)$.
\end{itemize}

\paragraph{Definitions.}
Following \citet{sale2024labelwise} (Definition~2.1), $Q'$ is a
\emph{mean-preserving spread} (MPS) of $Q$ if
$\vp' \stackrel{d}{=} \vp + \boldsymbol{Z}$ where $\boldsymbol{Z}$
satisfies $\E[\boldsymbol{Z} \mid \sigma(\vp)] = \boldsymbol{0}$
a.s.\ and $\max_k \Var[Z_k] > 0$.
$Q'$ is a \emph{spread-preserving location shift} of $Q$ if
$\vp' \stackrel{d}{=} \vp + \boldsymbol{z}$ for some constant
$\boldsymbol{z} \neq \boldsymbol{0}$.

\begin{proof}
\noindent\textbf{A0.}
$\Var[p_k] = \E[p_k^2] - \mu_k^2 \geq 0$ by Jensen's inequality.
When $\mu_k > 0$, $C_k = \tfrac{1}{2}\Var[p_k]/\mu_k \geq 0$.
When $\mu_k = 0$: $p_k \geq 0$ and $\E[p_k] = 0$ imply $p_k = 0$
$Q$-a.s., so $\Var[p_k] = 0$ and $C_k := 0$ by convention.
Hence $\mathrm{EU}(Q) = \sum_k C_k \geq 0$.

\medskip\noindent\textbf{A1.}
($\Rightarrow$) If $Q = \delta_{\boldsymbol{\theta}}$, then
$\Var[p_k] = 0$ for every $k$, so $\mathrm{EU}(Q) = 0$.
($\Leftarrow$) Suppose $\mathrm{EU}(Q) = 0$.
Since each $C_k \geq 0$ and $\sum_k C_k = 0$, every $C_k = 0$.
For $\mu_k > 0$ this forces $\Var[p_k] = 0$, hence $p_k = \mu_k$
$Q$-a.s.
For $\mu_k = 0$, $p_k = 0 = \mu_k$ $Q$-a.s.\ by non-negativity of $p_k$.
Hence $\vp = \vmu$ $Q$-a.s., i.e.\ $Q = \delta_{\vmu}$.

\medskip\noindent\textbf{A3 (strict).}
Let $\vp' \stackrel{d}{=} \vp + \boldsymbol{Z}$ with
$\E[\boldsymbol{Z} \mid \sigma(\vp)] = \boldsymbol{0}$ a.s.\ and
$\max_k \Var[Z_k] > 0$.

\emph{Mean preservation.}
$\mu_k' = \E[p_k + Z_k] = \mu_k + \E\!\bigl[\E[Z_k \mid \sigma(\vp)]\bigr] = \mu_k$.

\emph{Variance expansion.}
\begin{align}
  \Var[p_k'] = \Var[p_k + Z_k]
  = \Var[p_k] + \Var[Z_k] + 2\,\Covmat[p_k, Z_k].
\end{align}
The cross-term vanishes by the tower property and $\sigma(\vp)$-measurability
of $p_k$:
\begin{equation}
  \Covmat[p_k, Z_k]
  = \E[p_k Z_k] - \E[p_k]\E[Z_k]
  = \E\!\Bigl[\E[p_k Z_k \mid \sigma(\vp)]\Bigr]
    - \E[p_k]\cdot\E\!\Bigl[\E[Z_k \mid \sigma(\vp)]\Bigr]
  = \E\!\Bigl[p_k\underbrace{\E[Z_k \mid \sigma(\vp)]}_{=\;0}\Bigr]
    - \E[p_k]\cdot 0
  = 0.
\end{equation}
Hence $\Var[p_k'] = \Var[p_k] + \Var[Z_k]$, and since $\mu_k' = \mu_k$,
we have $C_k(Q') \geq C_k(Q)$ for all $k$.

\emph{Strict increase.}
By the MPS condition, $\Var[Z_j] > 0$ for some $j$.
We show $\mu_j > 0$.
If $\mu_j = 0$ then $p_j = 0$ $Q$-a.s., so $p_j' = Z_j$.
Since $\vp' \in \Delta^{K-1}$, we have $p_j' \geq 0$, hence $Z_j \geq 0$
$Q$-a.s.
Combined with $\E[Z_j] = \E\!\bigl[\E[Z_j \mid \sigma(\vp)]\bigr] = 0$,
this forces $Z_j = 0$ $Q$-a.s., contradicting $\Var[Z_j] > 0$.
Hence $\mu_j > 0$, giving $C_j(Q') = \tfrac{1}{2}(\Var[p_j]+\Var[Z_j])/\mu_j > C_j(Q)$,
and therefore $\mathrm{EU}(Q') > \mathrm{EU}(Q)$.

\medskip\noindent\textbf{A2 violated.}
Lemma~\ref{lem:boundary} gives $\Var[p_k] \leq \mu_k(1-\mu_k)$, so
\begin{equation}
  \mathrm{EU}(Q)
  = \frac{1}{2}\sum_{k=1}^K\frac{\Var[p_k]}{\mu_k}
  \leq \frac{1}{2}\sum_{k=1}^K(1-\mu_k)
  = \frac{K-1}{2},
\end{equation}
using $\sum_k \mu_k = 1$.
The bound is attained by the uniform vertex mixture
$Q_{\mathrm{vert}} = \frac{1}{K}\sum_{y=1}^K \delta_{\mathbf{e}_y}$,
for which $\mu_k = 1/K$,
$\Var[p_k] = \E[p_k^2] - \mu_k^2 = 1/K - 1/K^2 = (K-1)/K^2$, and
$\mathrm{EU}(Q_{\mathrm{vert}}) = (K-1)/2$.
By contrast, the uniform distribution on $\Delta^{K-1}$ equals
$\mathrm{Dirichlet}(\mathbf{1}_K)$, whose marginals satisfy
$\Var[p_k] = (K-1)/[K^2(K+1)]$ (standard Dirichlet second moment),
giving
\begin{equation}
  \mathrm{EU}(Q_{\mathrm{unif}})
  = K \cdot \frac{1}{2} \cdot \frac{(K-1)/[K^2(K+1)]}{1/K}
  = \frac{K-1}{2(K+1)}
  < \frac{K-1}{2}
  = \mathrm{EU}(Q_{\mathrm{vert}})
  \quad \forall\, K \geq 2.
\end{equation}
Hence $Q_{\mathrm{unif}}$ is not the maximiser and A2 is violated.

\medskip\noindent\textbf{A5 violated.}
Under $\vp' = \vp + \boldsymbol{z}$ (constant $\boldsymbol{z}$):
$\Var[p_k'] = \Var[p_k]$ by translation-invariance of variance, but
$\mu_k' = \mu_k + z_k$, so
$C_k(Q') = \tfrac{1}{2}\Var[p_k]/(\mu_k + z_k) \neq C_k(Q)$
whenever $z_k \neq 0$ and $\Var[p_k] > 0$.
Concretely, let $K = 2$ and
$Q = \tfrac{1}{2}\delta_{(0.2,\,0.8)} + \tfrac{1}{2}\delta_{(0.4,\,0.6)}$,
so $\vmu = (0.3,\,0.7)$, $\Var[p_k] = 0.01$ for both $k$, and
$\mathrm{EU}(Q) \approx 0.0238$.
Under $\boldsymbol{z} = (0.15,\,-0.15)$: $\vmu' = (0.45,\,0.55)$,
variances unchanged, giving
$\mathrm{EU}(Q') \approx 0.0202 \neq \mathrm{EU}(Q)$.
\end{proof}

\begin{remark}[\citet{wimmer2023quantifying} Proposition~4]
Exact MI violates the strict version of A3 already for $K = 2$.
The improvement of $\sum_k C_k$ is a direct consequence of its
linearity in $\{\Var[p_k]\}$: any non-trivial spread in any class
propagates immediately into a strict increase of the aggregate.
\end{remark}
\begin{table}[!ht]
\centering
\caption{Axiomatic profile of epistemic uncertainty measures. \cmark\ = satisfied, \xmark\ = violated. Exact MI results from \citet{wimmer2023quantifying}; Sale et al.\ results from \citet{sale2024labelwise}.}
\label{tab:axioms}
\resizebox{0.5\columnwidth}{!}{
\begin{tabular}{lccc}
\toprule
Axiom & $\sum_k \Ck$ (ours) & Exact MI & $\Var(\Theta_k)$ \\
\midrule
A0: Non-negativity          & \cmark & \cmark & \cmark \\
A1: Vanishing EU            & \cmark & \cmark & \cmark \\
A2: Maximality at uniform   & \xmark & \xmark & \xmark \\
A3: Monotonicity (MPS)      & \cmark & \xmark & \cmark \\
A5: Location-shift invariance & \xmark & \xmark & \cmark \\
\bottomrule
\end{tabular}
}
\end{table}

\section{Complete Metric Reference}
\label{app:metrics}

This appendix collects, in self-contained form, the definitions and
interpretations of every uncertainty metric evaluated in the selective
prediction experiments.
Throughout, we adopt the notation of Section~\ref{sec:method}:
$S$ stochastic forward passes produce probability vectors
$\vp^{(s)}(x)\in\Delta^{K-1}$, $s=1,\dots,S$, from which all
subsequent quantities are estimated.

\subsection{Monte Carlo Prediction Statistics}
\label{app:mc-stats}

The following sample statistics, computed from the $S$ stochastic
forward passes, serve as building blocks for all metrics defined
below.

\paragraph{Mean prediction.}
The predictive mean
\begin{equation}\label{eq:app-mu}
  \mu_k \;=\; \frac{1}{S}\sum_{s=1}^{S} p_k^{(s)},
  \qquad k = 0,\dots,K{-}1,
\end{equation}
represents the model's consensus probability for class~$k$.
The predicted label is $\hat{y}=\arg\max_k \mu_k$.

\paragraph{Per-class variance and covariance.}
The sample variance and covariance, with Bessel correction, are
\begin{align}
  \sigma_k^2
    &\;=\; \frac{1}{S-1}\sum_{s=1}^{S}\bigl(p_k^{(s)}-\mu_k\bigr)^2,
    \label{eq:app-var}\\[4pt]
  \Sigma_{ij}
    &\;=\; \frac{1}{S-1}\sum_{s=1}^{S}
      \bigl(p_i^{(s)}-\mu_i\bigr)\bigl(p_j^{(s)}-\mu_j\bigr),
    \label{eq:app-cov}
\end{align}
with Pearson correlation
$\rho_{ij}=\Sigma_{ij}/(\sigma_i\,\sigma_j)$.
A strongly negative $\rho_{ij}$ indicates that probability mass
flows systematically from class~$j$ to class~$i$ across forward
passes, i.e.\ the model actively confuses the two classes.

\paragraph{Third central moment.}
\begin{equation}\label{eq:app-m3}
  m_{3,k} \;=\; \frac{1}{S}\sum_{s=1}^{S}
    \bigl(p_k^{(s)}-\mu_k\bigr)^3.
\end{equation}
This quantity enters only through the skewness diagnostic~$\rho_k$
defined in Section~\ref{app:diagnostics}.

\subsection{Scalar Uncertainty Baselines}
\label{app:scalar}

The following standard metrics assign a single uncertainty score to
each input and do not distinguish between classes.

\paragraph{Predictive entropy.}
The Shannon entropy of the mean prediction,
\begin{equation}\label{eq:app-ent}
  H[\bar{p}] \;=\; -\sum_{k=0}^{K-1}\mu_k\log\mu_k,
\end{equation}
lies in $[0,\log K]$ and captures \emph{total} predictive
uncertainty, both epistemic and aleatoric.
It cannot distinguish a model that lacks training data from one
facing a genuinely ambiguous input; nor does it differentiate
clinically benign confusions (e.g.\ Grade~0 vs.\ Grade~1) from
dangerous ones (Grade~0 vs.\ Grade~3).

\paragraph{Mutual information.}
\begin{equation}\label{eq:app-mi}
  \mathrm{MI}
    \;=\; H[\bar{p}]
    \;-\; \frac{1}{S}\sum_{s=1}^{S}H\bigl[\vp^{(s)}\bigr],
\end{equation}
where $H[\vp^{(s)}]=-\sum_k p_k^{(s)}\log p_k^{(s)}$.
MI isolates epistemic uncertainty by subtracting the average
per-pass entropy (the aleatoric component): if all $S$ passes
produce identical softmax vectors, MI vanishes regardless of how
uncertain those vectors are.
While MI is the standard scalar epistemic measure, it remains a
single number that cannot reveal \emph{which} classes drive the
model's disagreement.
Two inputs with identical MI may involve entirely different
confusion patterns, as demonstrated by the error signatures in
Section~\ref{app:error-signatures}.

\paragraph{Maximum softmax uncertainty.}
\begin{equation}\label{eq:app-maxprob}
  \mathrm{MaxProb}
    \;=\; 1 - \max_k\;\mu_k\,.
\end{equation}
This is the simplest deferral baseline: it uses only the mean
prediction and does not exploit MC disagreement.
MaxProb is insensitive to the \emph{distribution} of probability
across non-top classes; two inputs
$\mu=(0.6,\,0.4,\,0,\,0)$ and $\mu=(0.6,\,0,\,0.4,\,0)$ receive
identical scores despite the latter involving a dangerous
safe--critical confusion.

\subsection{Per-Class Variance Baselines}
\label{app:per-class-var}

These metrics use the raw per-class variance~$\sigma_k^2$ as a
proxy for epistemic disagreement, without the $1/\mu_k$
normalisation that defines~$C_k$.

\paragraph{Critical variance.}
\begin{equation}\label{eq:app-varcrit}
  \mathrm{Var\_crit}
    \;=\; \max_{k\in\mathcal{C}}\;\sigma_k^2\,,
\end{equation}
where $\mathcal{C}=\{2,3\}$ denotes the set of critical
(treatment-requiring) classes.
This provides a targeted measure of MC disagreement restricted to
critical classes, but inherits the base-rate confound identified in
Lemma~\ref{lem:boundary}: a class with $\mu_k=0.5$ naturally
exhibits higher variance than one with $\mu_k=0.01$, even at equal
levels of epistemic disagreement in logit space.
\paragraph{Global and critical variance baselines.}
\citet{sale2024labelwise} explicitly define the global variance-based
epistemic uncertainty measure (their equation~22):
\begin{equation}
  \mathrm{Sale\_EU\_global}
    \;=\; \sum_{k=0}^{K-1}\sigma_k^2\,,
    \label{eq:app-sale-global}
\end{equation}
which aggregates label-wise variance over all classes and constitutes
a scalar epistemic measure directly comparable to MI and $\sum_k C_k$.
We additionally construct a critical-class restriction not defined by
\citet{sale2024labelwise}:
\begin{equation}
  \mathrm{Sale\_EU\_crit}
    \;=\; \sum_{k\in\mathcal{C}}\sigma_k^2\,,
    \label{eq:app-sale-crit}
\end{equation}
which provides a targeted analogue of $C_{\mathrm{crit\_sum}}$ without
curvature normalisation.
Both variants suffer from the same boundary suppression as
$\mathrm{Var\_crit}$: for critical classes with mean prediction
$\mu_k\approx 0.06$ in our data, variance is bounded by
$\mu_k(1-\mu_k)\approx 0.056$ regardless of the degree of model
disagreement (cf.\ Section~\ref{sec:motivation}).

\subsection{One-vs-All MI Baseline}
\label{app:per-class-mi}

An alternative per-class decomposition computes binary mutual
information for each class independently.
For class~$k$, define the binary variable
$q_k^{(s)}=(p_k^{(s)},\;1-p_k^{(s)})$ and the corresponding
binary MI
\begin{equation}\label{eq:app-ova}
  \mathrm{MI}_k^{\mathrm{bin}}
    \;=\; H[\bar{q}_k]
    \;-\; \frac{1}{S}\sum_{s=1}^{S}H\bigl[q_k^{(s)}\bigr].
\end{equation}
The deferral score sums over critical classes:
$\mathrm{OvA\_MI}=\sum_{k\in\mathcal{C}}\mathrm{MI}_k^{\mathrm{bin}}$.
This measure is exact (not Taylor-approximated) but discards the
multi-class structure: it does not reveal which non-$k$ classes
compete with class~$k$, and the per-class binary MI terms are not
additive components of the full multiclass MI.

\subsection{Proposed Metrics: Per-Class Epistemic Decomposition}
\label{app:proposed}

The metrics in this subsection are derived from the second-order
Taylor expansion of mutual information developed in
Section~\ref{sec:method}.

\paragraph{Per-class epistemic contribution.}
The central quantity is the per-class component
\begin{equation}\label{eq:app-Ck}
  C_k \;=\; \frac{1}{2}\,\frac{\sigma_k^2}{\mu_k}
\end{equation}
which satisfies $\sum_{k=0}^{K-1}C_k\approx\mathrm{MI}$ by
Theorem~\ref{thm:taylor}.
The $1/\mu_k$ weighting originates from the curvature of the
entropy surface ($\partial^2 H/\partial p_k^2=-1/\mu_k$):
a class with $\mu_k=0.01$ has curvature~$100$, so even modest
probability variance there carries substantial
information-theoretic weight, whereas a class with $\mu_k=0.5$
requires proportionally larger variance to indicate genuine
epistemic uncertainty.
The ratio $\sigma_k^2/\mu_k$ coincides with the classical
index of dispersion~\citep{CoxLewis1966,Fano1947}; the
$\tfrac{1}{2}$ prefactor arises from the Hessian structure,
ensuring that the components sum to MI rather than twice MI.

The approximation is reliable when the MC distribution for each
class is approximately symmetric.
Reliability is diagnosed by the skewness indicator~$\rho_k$
(Section~\ref{app:diagnostics}): when $\rho_k>0.3$ for a critical
class, $C_k$ should be interpreted with caution and CBEC
(Section~\ref{app:boundary}) is preferred.

\paragraph{Critical-class aggregations.}
Two natural aggregations restrict attention to the critical class
set $\mathcal{C}$:
\begin{align}
  C_{\mathrm{crit\_max}}
    &\;=\; \max_{k\in\mathcal{C}}\;C_k\,,
    \label{eq:app-cmax}\\[4pt]
  C_{\mathrm{crit\_sum}}
    &\;=\; \sum_{k\in\mathcal{C}}C_k\,.
    \label{eq:app-csum}
\end{align}
$C_{\mathrm{crit\_max}}$ is sensitive to single-class spikes: it
fires when \emph{any} critical class carries elevated epistemic
uncertainty, and is the recommended aggregation when $\rho_k<0.3$
for all critical classes.
$C_{\mathrm{crit\_sum}}$ is a smoother alternative that captures
cumulative uncertainty when confusion is distributed across
multiple critical classes.

\subsection{Boundary-Aware Metrics}
\label{app:boundary}

The metrics above score each class (or class subset) in isolation.
In applications with a clinically meaningful partition into safe
classes $\mathcal{S}=\{0,1\}$ and critical classes
$\mathcal{C}=\{2,3\}$, the most dangerous errors are those that
cross this boundary.
The following metric targets such cross-boundary confusion directly.

\paragraph{Cross-boundary epistemic confusion (CBEC).}
\begin{equation}\label{eq:app-cbec}
  \mathrm{CBEC}(x)
    \;=\; \sum_{i\in\mathcal{S}}\;\sum_{j\in\mathcal{C}}
      \sqrt{C_i\cdot C_j}\;\cdot\;\max(0,\,-\rho_{ij}).
\end{equation}
CBEC is non-zero only when three conditions hold simultaneously:
a safe class carries elevated~$C_i$, a critical class carries
elevated~$C_j$, and their MC predictions are negatively correlated
($\rho_{ij}<0$), indicating that probability mass flows
systematically from one to the other across stochastic passes.
The geometric mean $\sqrt{C_i C_j}$ requires \emph{both} classes to
be uncertain, and the correlation gate
$\max(0,-\rho_{ij})$ filters coincidental co-elevation.

CBEC is partially robust to degradation of the Taylor approximation
for two reasons.
First, the correlation~$\rho_{ij}$ is computed directly from MC
samples and does not depend on the second-order expansion.
Second, the geometric mean compresses outlier inflation: if
skewness inflates~$C_j$ by a factor of~$\alpha$, the geometric
mean is inflated by only~$\sqrt{\alpha}$.
These properties explain CBEC's empirical robustness under MC
dropout, where $C_k$ estimates for rare critical classes degrade
(Section~\ref{app:mcdropout}).

\subsection{Diagnostic Metrics}
\label{app:diagnostics}

The following quantities assess the reliability and structure of the
$C_k$ estimates.
They are not used as deferral scores.

\paragraph{Skewness diagnostic.}
For each class $k$ with $\sigma_k^2>0$, the ratio
\begin{equation}\label{eq:app-rho}
  \rho_k
    \;=\; \frac{|m_{3,k}|}{3\,\mu_k\cdot\sigma_k^2}
\end{equation}
equals the magnitude of the third-order Taylor correction relative
to the second-order term~$C_k$ (see Section~\ref{sec:skewness}).
When $\rho_k\ll 1$, the cubic remainder is negligible and $C_k$ is
reliable; when $\rho_k>0.3$, the second-order approximation is
degrading and~$C_k$ should be supplemented or replaced by CBEC.
In practice, majority classes (high~$\mu_k$, far from the simplex
boundary) produce approximately symmetric MC distributions and low
$\rho_k$; rare classes ($\mu_k\approx 0$) produce right-skewed
distributions bounded by zero and correspondingly higher~$\rho_k$.

\paragraph{Epistemic profiles.}
For each true class $i$, the normalised conditional expectation
\begin{equation}\label{eq:app-profile}
  \mathrm{Profile}_k(i)
    \;=\; \mathbb{E}\!\Bigl[\frac{C_k}{\textstyle\sum_j C_j}
      \;\Big|\; y=i\Bigr],
    \qquad k=0,\dots,K{-}1,
\end{equation}
reveals the average distribution of epistemic uncertainty across
classes for inputs of true class~$i$.
For example, a profile $(0.14,\,0.27,\,0.55,\,0.05)$ for Grade~0
images indicates that the model's uncertainty about healthy images
concentrates predominantly on Grade~2 (moderate DR), signalling
systematic Grade~0--Grade~2 confusion.

\paragraph{Error epistemic signatures.}
Conditioning on both true and predicted labels yields the
error-type signature
\begin{equation}\label{eq:app-sig}
  \mathrm{Sig}_k(i\!\to\!j)
    \;=\; \mathbb{E}\bigl[C_k
      \;\big|\; y=i,\;\hat{y}=j\bigr],
    \qquad k=0,\dots,K{-}1.
\end{equation}
Different error types produce distinct~$C_k$ fingerprints even when
their scalar MI is similar, enabling targeted diagnostic
interpretation (see Section~\ref{app:error-signatures}).

\paragraph{Epistemic confusion matrix.}
\begin{equation}\label{eq:app-eij}
  E_{ij}
    \;=\; \mathbb{E}\!\bigl[\sqrt{C_i\cdot C_j}
      \;\cdot\;\max(0,-\rho_{ij})\bigr].
\end{equation}
Entry $E_{ij}$ quantifies the average directed epistemic confusion
between classes~$i$ and~$j$, aggregated over the test set.
The matrix is symmetric with zero diagonal (since
$\rho_{kk}=1$ implies $\max(0,-\rho_{kk})=0$).
High values in the $\mathcal{S}\times\mathcal{C}$ block indicate
that the model's epistemic uncertainty concentrates at the
clinically dangerous boundary, validating both the safe--critical
partition and the CBEC aggregation.

\subsection{Evaluation Metrics for Selective Prediction}
\label{app:eval-metrics}

The following metrics evaluate the quality of a deferral policy that,
at coverage level~$c$, retains a fraction~$c$ of test inputs for
autonomous prediction and defers the remainder to human review.
Let $\mathrm{kept}(c)$ denote the set of retained samples.

\paragraph{Critical false-negative rate.}
\begin{equation}\label{eq:app-cfnr}
  \mathrm{Critical\;FNR}(c)
    \;=\;
    \frac{\sum_{n\in\mathrm{kept}(c)}
      \mathbf{1}[y_n\in\mathcal{C}]\cdot
      \mathbf{1}[\hat{y}_n\notin\mathcal{C}]}
    {\sum_{n\in\mathrm{kept}(c)}
      \mathbf{1}[y_n\in\mathcal{C}]}\,.
\end{equation}
This is the primary clinical safety metric: among the critical-class
samples that the system decides autonomously (does not defer), it
measures the fraction misclassified as safe.
Crucially, within-critical errors (e.g.\ Grade~3 predicted as
Grade~2) are \emph{not} counted, since the patient still receives
treatment; only boundary crossings into~$\mathcal{S}$ are penalised.

\paragraph{Critical error rate.}
\begin{equation}\label{eq:app-cerr}
  \mathrm{Critical\;Err}(c)
    \;=\;
    \frac{\sum_{n\in\mathrm{kept}(c)}
      \mathbf{1}[y_n\in\mathcal{C}]\cdot
      \mathbf{1}[\hat{y}_n\neq y_n]}
    {\sum_{n\in\mathrm{kept}(c)}
      \mathbf{1}[y_n\in\mathcal{C}]}\,.
\end{equation}
A broader metric that includes within-critical misclassifications.
We report it alongside Critical~FNR for completeness but consider
Critical~FNR the primary safety metric.

\paragraph{Area under the selective risk curve (AUSC).}
\begin{equation}\label{eq:app-ausc}
  \mathrm{AUSC}
    \;=\; \int_0^1 R(c)\,dc\,,
\end{equation}
where $R(c)$ is the risk (Critical FNR or error rate) at
coverage~$c$~\citep{elyaniw2010}.
AUSC provides a single-number summary that integrates risk over
all possible operating points, so a policy with low AUSC is
consistently safe regardless of the chosen coverage threshold.
We approximate the integral via the trapezoidal rule at
$n=200$ equally spaced coverage levels; 95\% confidence intervals
are obtained from 200 bootstrap resamples of the test set.

Note that AUSC and Critical~FNR at a fixed operating point can
disagree: a policy that dominates at low coverages (50--70\%) but
is inferior at 80\% will produce favourable AUSC yet a suboptimal
fixed-point FNR.
This phenomenon appears in the MC dropout experiment
(Section~\ref{app:mcdropout}).

\paragraph{Accuracy and macro~F1.}
Standard classification metrics computed on the retained set:
\begin{equation}\label{eq:app-acc}
  \mathrm{Accuracy}(c)
    = \frac{\sum_{n\in\mathrm{kept}(c)}
        \mathbf{1}[\hat{y}_n=y_n]}{|\mathrm{kept}(c)|},
  \qquad
  \mathrm{Macro\;F1}(c)
    = \frac{1}{K}\sum_{k=0}^{K-1}F_{1,k}\,.
\end{equation}
In our setting, accuracy is dominated by the majority class
(Grade~0 constitutes approximately 70\% of the test set) and can be
misleading: a policy that defers all minority-class samples achieves
high accuracy but poor safety.
Macro~F1 partially corrects the class-prevalence bias by weighting
classes equally, but remains an aggregate metric that does not
specifically target clinical safety.
We report both alongside the primary Critical~FNR.
\section{Diabetic Retinopathy Experiment: Supplementary Details}
\label{app:dr}

\subsection{Dataset and Preprocessing}\label{app:dataset}

We pool three public DR grading datasets: EyePACS ($35{,}108$ images), APTOS~2019 ($3{,}662$), and Messidor-2 ($1{,}200$), totalling $39{,}970$ fundus images.
The original five-level International Clinical DR Scale is consolidated into four grades: Grade~0 (no~DR, 70.4\%), Grade~1 (mild NPDR, 7.4\%), Grade~2 (moderate NPDR, 16.3\%), and Grade~3 (severe NPDR + PDR, 5.8\%).
We merge the two highest grades because both require urgent referral, and the small severe NPDR subgroup is insufficient for reliable posterior estimation.

Images are resized to $256 \times 256$ pixels.
Data are split patient-stratified into train/val/test = 28{,}018/4{,}004/7{,}948, with a patient-leakage check confirming no patient appears in multiple splits.
The test set preserves the class distribution: Grade~0 = 5{,}584 (70.3\%), Grade~1 = 587 (7.4\%), Grade~2 = 1{,}322 (16.6\%), Grade~3 = 455 (5.7\%).

\subsection{Model Architecture and Training}\label{app:model}

\paragraph{Bayesian EfficientNet-B4.}
We construct a fully Bayesian variant of EfficientNet-B4 (width multiplier $1.4$, depth multiplier $1.8$) using the low-rank variational framework of \citet{toure2026singular}, which parameterises each weight matrix as $W = AB^\top$ with learned posteriors on the factors $A \in \mathbb{R}^{m \times r}$, $B \in \mathbb{R}^{n \times r}$, reducing the parameter count from $O(mn)$ to $O(r(m+n))$.
The architecture comprises a stem convolution ($3 \to 48$ filters), 32~MBConv blocks across 7~stages, a top expansion ($448 \to 1792$), and a classification head (Conv $1792 \to 256$, GAP, Dense $256 \to 128$, Dropout, Dense $128 \to 4$).

Pointwise ($1{\times}1$) and expansion convolutions use low-rank factorisation $W = AB^\top$ with depth-aware rank selection: compression increases from $1.5\times$ at early layers to $25\times$ at deep layers, with rank $r = \lfloor d_{\text{in}} d_{\text{out}} / (\text{compression} \cdot (d_{\text{in}} + d_{\text{out}})) \rfloor$.
Depthwise convolutions and squeeze-excitation $1{\times}1$ convolutions use direct (full-rank) variational parameterisation.
Dense layers in the classification head use the same low-rank factorisation.

All variational layers use the local reparameterisation trick with $W = \mu + \mathrm{softplus}(\rho) \odot \epsilon$, $\epsilon \sim \mathcal{N}(0,I)$.
The prior is a scale-mixture Gaussian $\pi \mathcal{N}(0, \sigma_1^2) + (1-\pi)\mathcal{N}(0, \sigma_2^2)$ with $\pi = 0.5$, $\sigma_1 = 1.0$, $\sigma_2 = e^{-6}$.
Posterior initialisation uses He-scaled uniform means with $\rho$ initialised so that $\mathrm{softplus}(\rho) \approx 0.09 \sqrt{2/d_{\text{in}}}$.
Training uses KL annealing with the scale frozen at zero for the first epochs, then linearly warmed up.
BatchNormalization layers remain deterministic.
\paragraph{Full-rank layers.}\label{par:fullrank}
Depthwise convolutions lack the $m \times n$ matrix structure required for low-rank factorisation: each channel has an independent $k \times k$ kernel ($k \in \{3,5\}$), so there is no shared row--column space on which to impose $W = AB^\top$.
SE layers are already bottlenecked by the squeeze ratio ($r_{\text{SE}} = 0.25$), yielding weight matrices such as $112 \times 28$ that are smaller than the rank our depth-aware rule would select; further factorisation would over-compress the channel-attention mechanism.
Both cases therefore use direct variational parameterisation with element-wise $(\mu, \rho)$ pairs.
\paragraph{MC Dropout baseline.}
We also train a standard (deterministic) EfficientNet-B4 with dropout ($p = 0.3$) before the final dense layer.
At test time, dropout is kept active for $S = 30$ forward passes, producing MC~dropout posterior samples.

\paragraph{Inference.}
For both models, uncertainty is computed from $S = 30$ stochastic forward passes, producing $\vp^{(s)} \in \Delta^3$ for each test image.
All metrics in Appendix~\ref{app:metrics} are computed from these $S$ samples.

\subsection{Baseline Classification Performance}\label{app:classification}

Table~\ref{tab:classification} reports the Bayesian model's per-class precision, recall, and F1 on the test set.
The model achieves overall accuracy $0.794$ and quadratic weighted kappa $0.646$.
Grade~1 (mild NPDR) is effectively undetectable (F1 $= 0.00$), consistent with its visual subtlety and small sample size; this does not affect our analysis since Grade~1 belongs to the safe class partition.

\begin{table}[!ht]
\centering
\caption{Classification report for Bayesian EfficientNet-B4 on DR test set ($n = 7{,}948$).}
\label{tab:classification}
\begin{tabular}{lcccc}
\toprule
Class & Precision & Recall & F1 & Support \\
\midrule
Grade 0 (No DR) & 0.83 & 0.98 & 0.89 & 5{,}584 \\
Grade 1 (Mild)  & 0.00 & 0.00 & 0.00 & 587 \\
Grade 2 (Moderate) & 0.62 & 0.56 & 0.59 & 1{,}322 \\
Grade 3 (Severe+PDR) & 0.75 & 0.28 & 0.41 & 455 \\
\midrule
Macro avg & 0.55 & 0.45 & 0.47 & 7{,}948 \\
Weighted avg & 0.73 & 0.79 & 0.75 & 7{,}948 \\
\bottomrule
\end{tabular}
\end{table}

The confusion matrix reveals the dominant error modes:
Grade~0 absorbs most misclassifications from all other grades (497 from Grade~1, 548 from Grade~2, 107 from Grade~3), reflecting the class imbalance and the model's conservatism toward the majority class.
Within the critical partition, 220~Grade~3 samples are predicted as Grade~2 (severity underestimate) and 39~Grade~2 samples as Grade~3.

\subsection{Skewness Diagnostic Analysis}\label{app:skewness}

The skewness diagnostic $\rho_k$ (Definition~\ref{def:rho}) assesses whether $C_k$ is reliable for each class and sample.
Table~\ref{tab:skewness} reports per-class $\rho_k$ statistics on the full test set.

\begin{table}[!ht]
\centering
\caption{Skewness diagnostic $\rho_k$ by grade (Bayesian model, $n = 7{,}948$).
``Reliable'' = fraction of samples with $\rho_k < 0.3$.}
\label{tab:skewness}
\begin{tabular}{lcccc}
\toprule
Grade & Median $\rho_k$ & Mean $\rho_k$ & $p_{90}$ & Reliable \\
\midrule
0 (No DR) & 0.012 & 0.086 & 0.116 & 94.6\% \\
1 (Mild)  & 0.037 & 0.082 & 0.135 & 97.0\% \\
2 (Moderate) & 0.124 & 0.198 & 0.423 & 81.6\% \\
3 (Severe+PDR) & 0.218 & 0.328 & 0.706 & 63.4\% \\
\bottomrule
\end{tabular}
\end{table}
The threshold should not be interpreted as universal. We therefore also
swept the reliability cutoff over $\{0.1,0.2,0.3,0.5\}$ under a
true-class-conditioned analysis. Tightening the cutoff to $0.1$ lowers the
reliable fraction for critical Grades~2--3 to roughly $68$--$75\%$, while
relaxing it to $0.5$ raises the reliable fraction to roughly $93$--$97\%$.
The qualitative conclusion is unchanged: safe classes remain substantially
more reliable than critical classes, and the critical grades are the most
sensitive to the chosen tolerance.
Two patterns emerge.
First, the safe classes (Grades~0--1) have consistently low $\rho_k$ (median $< 0.04$, $>94\%$ reliable), because their high mean probabilities ($\mu_0 \approx 0.7$) place them far from the simplex boundary where skewness is amplified.
Second, the critical classes show progressively higher skewness: Grade~2 (median $\rho_2 = 0.12$, 82\% reliable) and Grade~3 (median $\rho_3 = 0.22$, 63\% reliable), consistent with their low base rates pushing $\mu_k$ toward the boundary.
Across the full test set, 57\% of samples have \emph{all} four $\rho_k < 0.3$.
See Figure~\ref{fig:skewness-app} for the full distribution.

\begin{figure}[!ht]
  \centering
  \includegraphics[width=0.8\columnwidth]{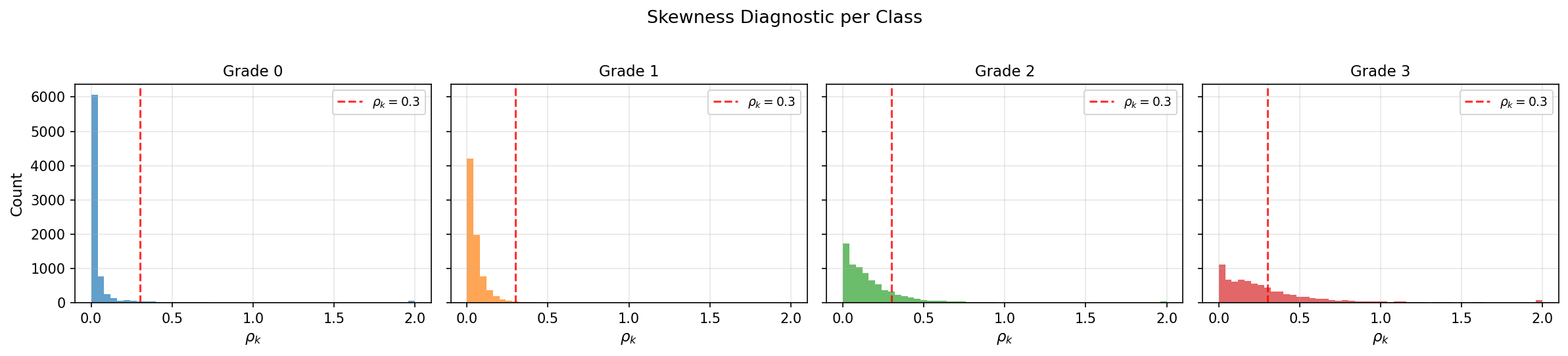}
  \caption{Distribution of skewness diagnostic $\rho_k$ by class.
  Safe classes (Grades~0--1) cluster near zero; critical classes (Grades~2--3) exhibit heavier tails, reflecting boundary suppression effects on rare-class posterior samples.}
  \label{fig:skewness-app}
\end{figure}

These statistics explain the BNN ranking in Table~\ref{tab:selective-bnn}: with the majority of critical-class $C_k$ estimates reliable, $C_{\text{crit\_max}}$ can exploit the per-class signal effectively.
The MC~dropout model (Appendix~\ref{app:mcdropout}) produces coarser posterior samples (fewer effective degrees of freedom), inflating $\rho_k$ and degrading $C_k$ for critical classes, which explains why CBEC, whose correlation gate is Taylor-free and whose restricted domain $\mathcal{S}\times\mathcal{C}$ filters irrelevant co-elevation, outperforms $C_{\text{crit\_max}}$ in that regime.

\subsection{Class-Imbalance Stress Test}
\label{app:imbalance-sweep}

To isolate the effect of class prevalence on the decomposition, we ran a
controlled four-class FashionMNIST subset experiment in which the training
prevalence of one target class was reduced from $100\%$ to $10\%$, while
keeping labels clean, the architecture fixed, and the test set balanced.
As the target class becomes rarer, its mean prediction decreases
($\mu_k: 0.933 \to 0.734$), raw variance increases
($1.42{\times}10^{-3} \to 3.82{\times}10^{-3}$), and the normalised
contribution $C_k$ increases more strongly
($1.47{\times}10^{-3} \to 6.12{\times}10^{-3}$).
Approximation quality deteriorates only gradually: the target-class
$\rho_k$ increases from $0.025$ to $0.129$, the absolute gap
$|\sum_k C_k-\MI|$ increases from $5.4{\times}10^{-4}$ to
$9.8{\times}10^{-4}$, and the reliable fraction remains high, decreasing
from $99.2\%$ to $93.9\%$. Thus class imbalance makes the
boundary-aware normalisation more important, but prevalence alone does not
determine reliability; $\rho_k$ remains the appropriate diagnostic.
\subsection{\texorpdfstring{Theory Validation: MI vs.\ $\sum_k C_k$}{Theory Validation: MI vs. sum Ck}}
\label{app:validation}

Figure~\ref{fig:theory-validation} plots $\sum_k C_k$ against exact MI for all 7{,}948 test samples, complementing the summary statistics reported in Section~\ref{sec:results-dr}.
The slight systematic overestimation visible at high MI values is consistent with the third-order correction in Lemma~\ref{lem:third-order}: right-skewed distributions near the simplex boundary produce $m_{3,k} > 0$, and the omitted correction $-\frac{1}{6}\sum_k m_{3,k}/\mu_k^2$ causes $\sum_k C_k$ to slightly exceed MI.
The right panel confirms this: residuals $(\sum_k C_k - \mathrm{MI})$ are predominantly positive for high-skewness samples (pink/red points), validating that the skewness diagnostic $\rho_k$ correctly identifies inputs where the approximation degrades.

\begin{figure}[!ht]
  \centering
  \includegraphics[width=0.65\columnwidth]{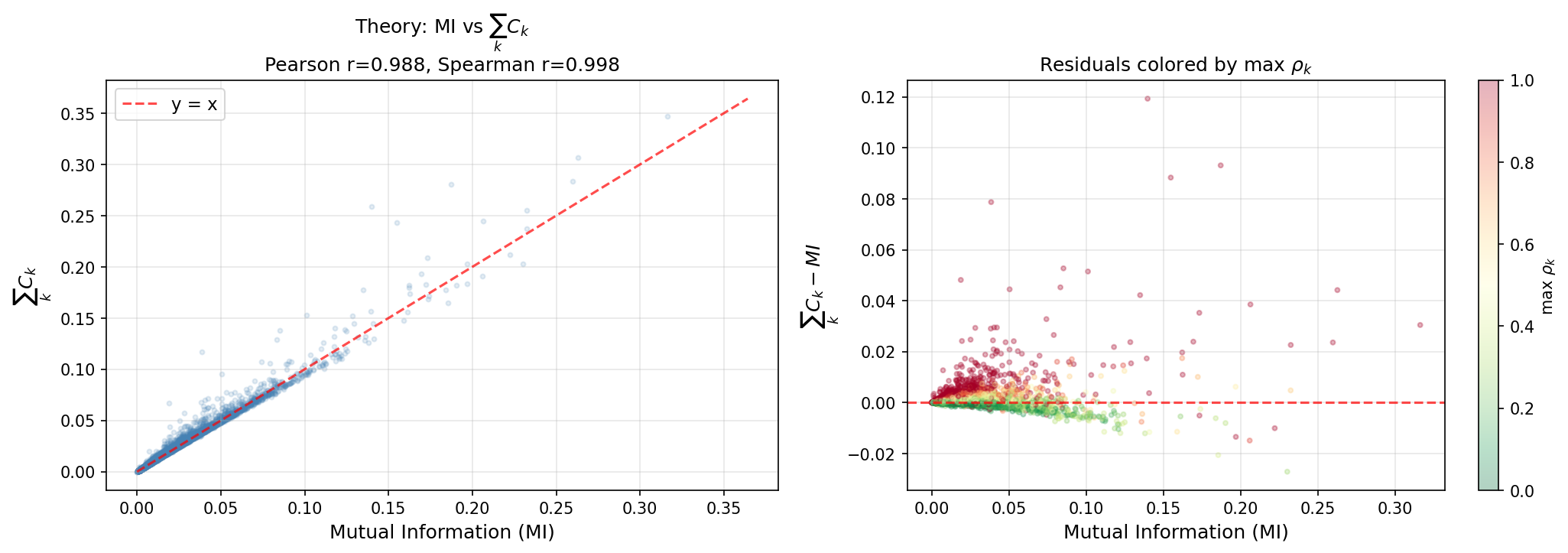}
  \caption{$\sum_k C_k$ vs.\ exact MI for all $7{,}948$ test samples (Pearson $r = 0.988$, Spearman $r = 0.998$).
  \textbf{Left:} Scatter plot; the near-perfect rank correlation confirms that the second-order approximation preserves the ordering of epistemic uncertainty.
  \textbf{Right:} Residuals coloured by maximum per-class $\rho_k$; positive residuals concentrate among high-skewness samples, as predicted by Lemma~\ref{lem:third-order}.}
  \label{fig:theory-validation}
\end{figure}

\subsection{Extended Selective Prediction Analysis}\label{app:extended-selective}

Figure~\ref{fig:ausc-curves} shows the full selective risk curves for all 10 deferral policies across continuous coverage levels.
The left panel (Critical FNR) confirms that $C_{\text{crit\_max}}$ achieves the lowest AUSC (0.285), with $C_{\text{crit\_sum}}$ second (0.329) and CBEC third (0.415).
The gap widens at lower coverages where clinical operating points typically lie, and the unnormalised variance baselines (Sale\_EU\_crit: 0.650, Var\_crit: 0.606) perform substantially worse than even scalar MI (0.436), confirming that boundary suppression severely degrades raw variance for rare critical classes.

The right panel addresses a natural concern: does targeting critical-class uncertainty sacrifice overall accuracy?
The answer is no. $C_{\text{crit\_max}}$ achieves competitive error-rate AUSC (0.143), comparable to entropy (0.127) and MaxProb (0.126), which optimise for overall accuracy rather than safety.
This demonstrates that the $C_k$ decomposition identifies samples where critical-class confusion drives errors, rather than deferring difficult samples indiscriminately.

A further distinction emerges at the clinically relevant 80\% coverage point: $C_{\text{crit\_max}}$ retains 1{,}293 critical samples (27.2\% deferred) while achieving FNR $0.302$, whereas MI retains only 1{,}183 (33.4\% deferred) with FNR $0.339$.
$C_{\text{crit\_max}}$ is simultaneously \emph{less aggressive} at deferring critical cases and \emph{more accurate} on those it keeps, the hallmark of a well-targeted deferral policy.
Entropy, by contrast, defers 42.1\% of critical samples at 80\% coverage because it prioritises total uncertainty rather than critical-class epistemic uncertainty.

\begin{figure}[!ht]
  \centering
  \includegraphics[width=0.75\columnwidth]{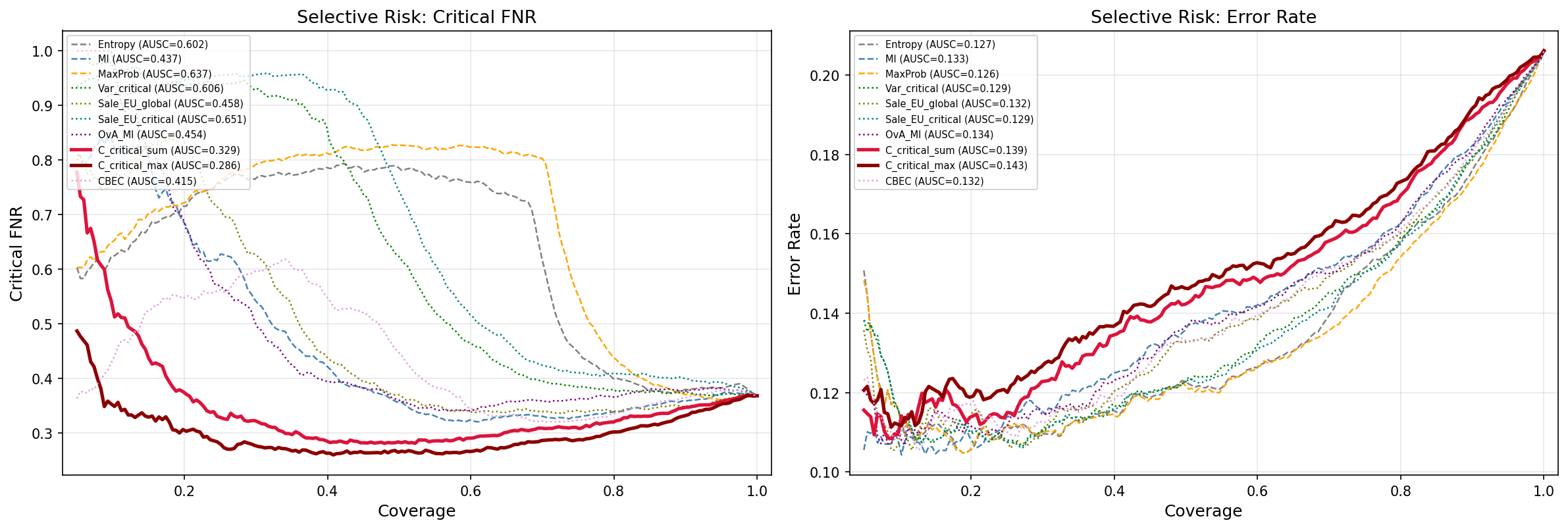}
  \caption{Selective risk curves for all 10 deferral policies.
  \textbf{Left:} Critical FNR across coverage levels; $C_{\text{crit\_max}}$ achieves the lowest AUSC ($0.285$), dominating all baselines across the full coverage range.
  \textbf{Right:} Error rate; $C_{\text{crit\_max}}$ remains competitive (AUSC $= 0.143$), confirming that per-class targeting does not sacrifice overall accuracy.}
  \label{fig:ausc-curves}
\end{figure}

\subsection{Full Bootstrap Results: Bayesian Model}\label{app:bootstrap-bnn}

Table~\ref{tab:full-bnn} extends Table~\ref{tab:selective-bnn} with all evaluated policies and additional metrics at 80\% coverage.
The ranking stability column reports the fraction of 200 bootstrap resamples in which each policy achieves the lowest AUSC\@.%
\footnote{We also evaluated a binary cross-boundary MI variant (MI computed on the binary safe-vs-critical partition), which achieves AUSC $0.284 \pm 0.015$, essentially tied with $C_{\text{crit\_max}}$ (mutual win rate 49.5\%/50.5\%).
It is excluded from the main comparison because it is not part of the $C_k$ framework and exhibits numerical instability ($\texttt{NaN}$ values when $\mu_{\text{safe}} \approx 0$ or $\mu_{\text{safe}} \approx 1$).}

\begin{table}[!ht]
\centering
\caption{Full bootstrap results for Bayesian EfficientNet-B4 ($S{=}30$, 200 resamples).
Policies ranked by AUSC (Critical FNR).}
\label{tab:full-bnn}
\resizebox{0.7\columnwidth}{!}{
\begin{tabular}{llcccc}
\toprule
Family & Policy & AUSC$\downarrow$ & Critical FNR @80\%$\downarrow$ & Acc @80\%$\uparrow$ & Win\% \\
\midrule
Per-class $C_k$ & $C_{\text{crit\_max}}$ & $0.285 \pm 0.016$ & $0.302 \pm 0.013$ & $0.827 \pm 0.005$ & 50.5 \\
Per-class $C_k$ & $C_{\text{crit\_sum}}$ & $0.327 \pm 0.017$ & $0.321 \pm 0.014$ & $0.831 \pm 0.005$ & 0.0 \\
Boundary & CBEC & $0.416 \pm 0.020$ & $0.335 \pm 0.014$ & $0.840 \pm 0.005$ & 0.0 \\
Scalar & MI & $0.436 \pm 0.019$ & $0.339 \pm 0.014$ & $0.838 \pm 0.005$ & 0.0 \\
Per-class MI & OvA\_MI & $0.452 \pm 0.017$ & $0.367 \pm 0.015$ & $0.838 \pm 0.005$ & 0.0 \\
Per-class var. & Sale\_EU\_global & $0.457 \pm 0.018$ & $0.341 \pm 0.015$ & $0.840 \pm 0.005$ & 0.0 \\
Scalar & Entropy & $0.604 \pm 0.022$ & $0.401 \pm 0.016$ & $0.842 \pm 0.005$ & 0.0 \\
Per-class var. & Var\_crit & $0.606 \pm 0.014$ & $0.379 \pm 0.016$ & $0.842 \pm 0.005$ & 0.0 \\
Scalar & MaxProb & $0.639 \pm 0.022$ & $0.439 \pm 0.017$ & $0.846 \pm 0.005$ & 0.0 \\
Per-class var. & Sale\_EU\_crit & $0.650 \pm 0.013$ & $0.409 \pm 0.016$ & $0.843 \pm 0.005$ & 0.0 \\
\bottomrule
\end{tabular}}
\end{table}

\paragraph{Pairwise statistical significance.}
Table~\ref{tab:pairwise-bnn} reports bootstrap pairwise comparisons for the proposed methods against key baselines.
$P(\text{row} < \text{col})$ denotes the fraction of bootstrap resamples in which the row policy achieves lower AUSC than the column policy; values above 0.975 indicate significance at the 5\% level (two-tailed).

\begin{table}[!ht]
\centering
\caption{Bootstrap pairwise comparisons (BNN). Entry = $P(\text{row AUSC} < \text{col AUSC})$.}
\label{tab:pairwise-bnn}
\begin{tabular}{lccc}
\toprule
& MI & Var\_crit & Sale\_EU\_crit \\
\midrule
$C_{\text{crit\_max}}$ & $1.000^{***}$ & $1.000^{***}$ & $1.000^{***}$ \\
$C_{\text{crit\_sum}}$ & $1.000^{***}$ & $1.000^{***}$ & $1.000^{***}$ \\
CBEC & $0.950$  & $1.000^{***}$ & $1.000^{***}$ \\
\bottomrule
\end{tabular}
\end{table}

\subsection{Epistemic Profiles}\label{app:profiles}

Table~\ref{tab:profiles} reports the normalised epistemic mass $\E[C_k / \sum_j C_j \mid y = i]$, showing which classes absorb epistemic uncertainty for each true label.
Figure~\ref{fig:profiles-app} visualises both the raw and normalised profiles as heatmaps.

\begin{figure}[!ht]
  \centering
  \includegraphics[width=\columnwidth]{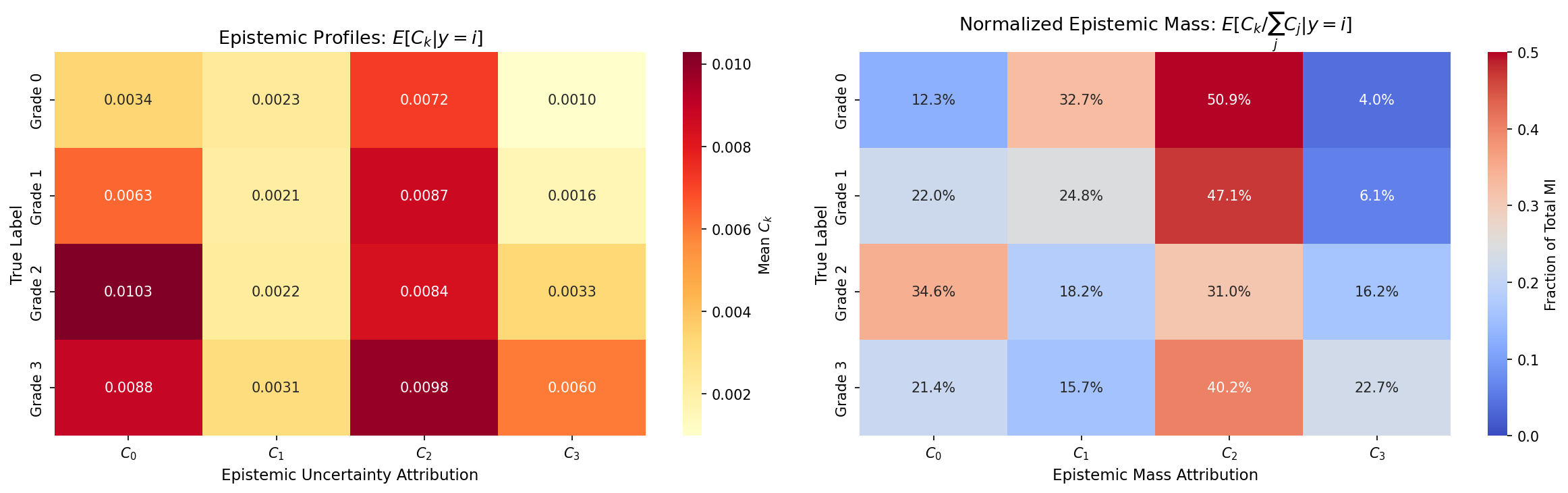}
  \caption{Epistemic profiles.
  \textbf{Left:} Raw $\E[C_k \mid y{=}i]$.
  \textbf{Right:} Normalised $\E[C_k / \sum_j C_j \mid y{=}i]$.
  Grade~2 dominates the epistemic budget across all true classes, identifying moderate DR as the model's primary source of confusion.}
  \label{fig:profiles-app}
\end{figure}

\begin{table}[!ht]
\centering
\caption{Normalised epistemic profiles: $\E[C_k / \sum_j C_j \mid y{=}i]$ (Bayesian model).
Each row sums to 100\%.
Bold = dominant component.}
\label{tab:profiles}
\begin{tabular}{lcccc}
\toprule
True class & $C_0$ & $C_1$ & $C_2$ & $C_3$ \\
\midrule
Grade 0 & 12.3\% & 32.7\% & \textbf{50.9\%} & 4.0\% \\
Grade 1 & 22.0\% & 24.8\% & \textbf{47.1\%} & 6.1\% \\
Grade 2 & \textbf{34.6\%} & 18.2\% & 31.0\% & 16.2\% \\
Grade 3 & 21.4\% & 15.7\% & \textbf{40.2\%} & 22.7\% \\
\bottomrule
\end{tabular}
\end{table}

Grade~2 is the dominant source of epistemic uncertainty across all true classes, consistent with it being the most confusable category (moderate DR is clinically subtle).
For true Grade~0 and Grade~1, over 50\% and 47\% of the epistemic mass concentrates on $C_2$ respectively, indicating the model's primary source of doubt about safe-class images is whether they might be moderate DR.
For true Grade~3, the epistemic mass is split between $C_2$ (40.2\%) and $C_3$ (22.7\%), reflecting the severity continuum.
Grade~2 is also the dominant epistemic class ($\arg\max_k C_k$) for 64.9\% of all test samples, confirming that moderate DR is the model's primary source of confusion at the individual-input level as well.

\subsection{Error Signatures}\label{app:error-signatures}

A distinctive advantage of $\vC(x)$ over scalar MI is that it fingerprints the \emph{structure} of model confusion.
Figure~\ref{fig:error-signatures} shows $\E[C_k \mid y{=}i, \hat{y}{=}j]$ for five clinically important outcomes, and Table~\ref{tab:error-sigs} provides the numerical values.

\begin{figure}[!ht]
  \centering
  \includegraphics[width=\columnwidth]{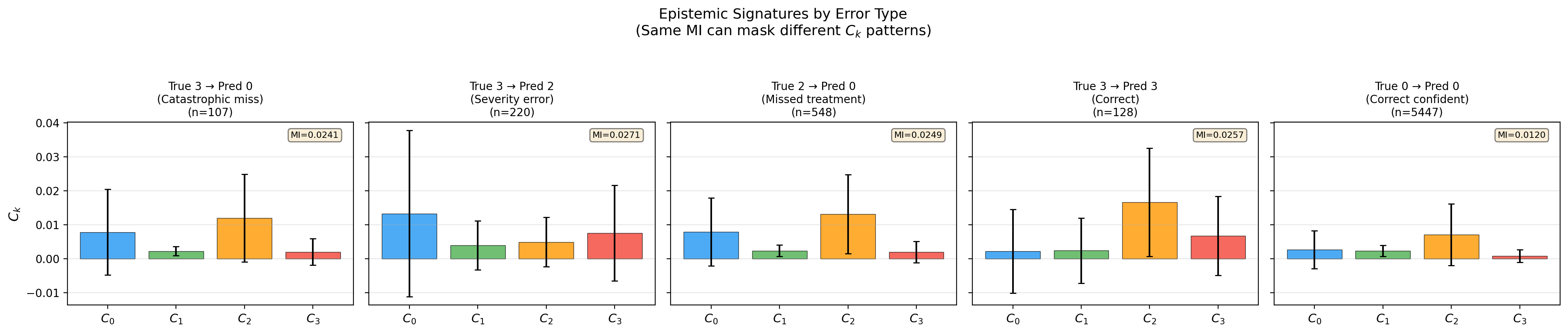}
  \caption{Epistemic signatures of error types.
  Each bar shows $\E[C_k \mid y{=}i, \hat{y}{=}j]$ for class $k \in \{0,1,2,3\}$.
  Catastrophic misses ($3 \to 0$) and severity errors ($3 \to 2$) have similar MI but distinct $C_k$ fingerprints, revealing different confusion pathways invisible to scalar metrics.}
  \label{fig:error-signatures}
\end{figure}

\begin{table}[!ht]
\centering
\caption{Error epistemic signatures: $\E[C_k \mid y{=}i, \hat{y}{=}j]$ for key error types.
Errors with nearly identical scalar MI exhibit different $C_k$ profiles, revealing distinct confusion pathways that suggest different remediation strategies.}
\label{tab:error-sigs}
\resizebox{0.7\columnwidth}{!}{
\begin{tabular}{lccccccc}
\toprule
Error & $n$ & MI & $C_0$ & $C_1$ & $C_2$ & $C_3$ & Dominant \\
\midrule
$3 \to 0$ (catastrophic) & 107 & 0.024 & 0.008 & 0.002 & \textbf{0.012} & 0.002 & $C_2$ \\
$3 \to 2$ (severity) & 220 & 0.027 & \textbf{0.013} & 0.004 & 0.005 & 0.008 & $C_0$ \\
$2 \to 0$ (missed treat.) & 548 & 0.025 & 0.008 & 0.002 & \textbf{0.013} & 0.002 & $C_2$ \\
$3 \to 3$ (correct) & 128 & 0.026 & 0.002 & 0.002 & \textbf{0.017} & 0.007 & $C_2$ \\
$0 \to 0$ (correct conf.) & 5447 & 0.012 & 0.003 & 0.002 & \textbf{0.007} & 0.001 & $C_2$ \\
\bottomrule
\end{tabular}}
\end{table}

The key interpretability result is that errors with similar scalar MI have distinctive $C_k$ fingerprints.
Catastrophic misses (true Grade~3 predicted as Grade~0, $n{=}107$) and severity underestimates (true Grade~3 predicted as Grade~2, $n{=}220$) have nearly identical scalar MI ($0.024$ vs.\ $0.027$~nats) but quite different $C_k$ signatures.
The catastrophic miss concentrates epistemic mass on $C_2$: the model does not jump directly from severe to healthy, but routes through moderate DR, identifying Grade~2 as the ``bottleneck'' of confusion.
The severity underestimate instead elevates $C_0$, indicating the model's primary doubt is whether the image might be healthy.
These qualitatively different failure modes, invisible to any scalar metric, suggest distinct intervention strategies: additional moderate-vs-severe training data for the first, and better healthy-vs-severe discrimination for the second.

\subsection{Epistemic Confusion Matrix}\label{app:confusion}

Figure~\ref{fig:confusion-app} visualises the epistemic confusion matrix $E_{ij} = \E[\sqrt{C_i C_j}\,\max(0,-\rho_{ij})]$, which quantifies pairwise confusion between classes using the same correlation-gated geometric mean that underlies CBEC (Eq.~\ref{eq:cbec}).
Table~\ref{tab:epi-confusion} provides the numerical values.

The cross-boundary block (safe$\times$critical) dominates: its mean value
is $2.7\times$ the within-safe mean and $10.6\times$ the within-critical
mean, confirming that the model's epistemic uncertainty concentrates at
the clinically dangerous boundary.
The strongest pairwise confusion is Grade~0$\leftrightarrow$Grade~2
($5.08 \times 10^{-3}$), consistent with the epistemic profiles (Table~\ref{tab:profiles}) showing
Grade~2 as the dominant source of uncertainty for healthy images.
This structure validates both the safe/critical partition and the CBEC
aggregation, which specifically targets cross-boundary confusion.

\begin{figure}[!ht]
  \centering
  \includegraphics[width=0.65\columnwidth]{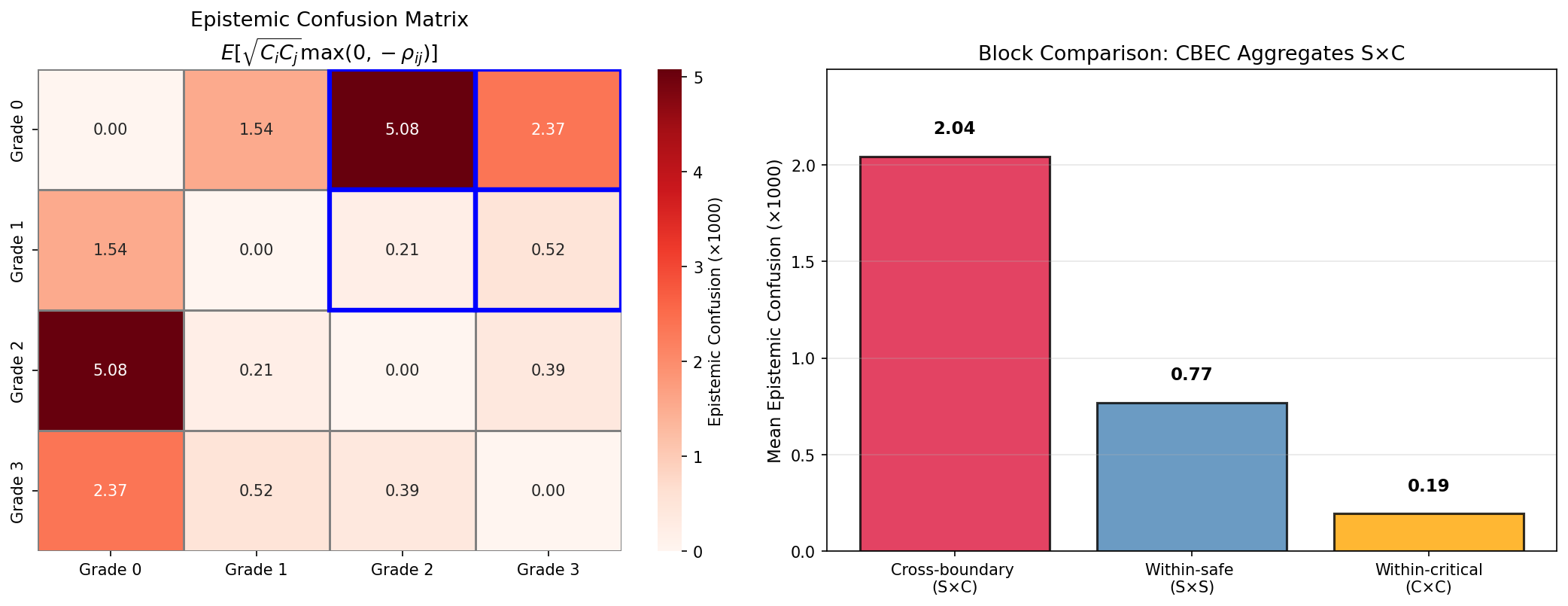}
  \caption{Epistemic confusion matrix $E_{ij} = \E[\sqrt{C_i C_j}\,\max(0,-\rho_{ij})]$.
  The cross-boundary block (safe$\times$critical, blue outline) dominates, with the Grade~0$\leftrightarrow$Grade~2 pair ($5.08 \times 10^{-3}$) exhibiting the strongest confusion.
  Within-critical confusion is an order of magnitude weaker ($10.6\times$ smaller mean), confirming that the model's epistemic uncertainty concentrates at the clinically dangerous safe/critical boundary.}
  \label{fig:confusion-app}
\end{figure}

\begin{table}[!ht]
\centering
\caption{Epistemic confusion matrix ($\times 10^3$, Bayesian model).
Horizontal rule separates the safe (Grades~0--1) and critical (Grades~2--3) partitions; entries in the off-diagonal blocks constitute the cross-boundary confusion aggregated by CBEC\@.}
\label{tab:epi-confusion}
\begin{tabular}{lcccc}
\toprule
& Gr.~0 & Gr.~1 & Gr.~2 & Gr.~3 \\
\midrule
Grade 0 & 0.00 & 1.54 & \textbf{5.08} & 2.37 \\
Grade 1 & 1.54 & 0.00 & 0.21 & 0.52 \\
\midrule
Grade 2 & \textbf{5.08} & 0.21 & 0.00 & 0.39 \\
Grade 3 & 2.37 & 0.52 & 0.39 & 0.00 \\
\bottomrule
\end{tabular}
\end{table}

\subsection{Full Bootstrap Results: MC Dropout}\label{app:mcdropout}

Table~\ref{tab:full-mcdropout} reports the MC~dropout evaluation.
The ranking reversal relative to the BNN (Table~\ref{tab:full-bnn}) is consistent with both the skewness analysis (Appendix~\ref{app:skewness}) and the findings of \citet{smith2018understanding}, who showed that MC~dropout underestimates posterior uncertainty and creates regions of spurious confidence.
The coarser posterior samples inflate $\rho_k$ for critical classes, degrading the reliability of individual $C_k$ values and hence $C_{\text{crit\_max}}$, while leaving CBEC's correlation-gated structure intact, since CBEC computes correlations directly from the MC draws rather than relying on the second-order approximation.

\begin{table}[!ht]
\centering
\caption{Full bootstrap results for MC Dropout ($p{=}0.3$, $S{=}30$, 200 resamples).
CBEC achieves the lowest AUSC, winning 100\% of bootstrap samples, while $C_{\text{crit\_max}}$ drops to fifth place.
The ranking reversal relative to Table~\ref{tab:full-bnn} reflects MC~dropout's inflated skewness for critical classes (Figure~\ref{fig:mcdropout-reliability}).}
\label{tab:full-mcdropout}
\resizebox{0.7\columnwidth}{!}{
\begin{tabular}{llccc}
\toprule
Family & Policy & AUSC$\downarrow$ & Critical FNR @80\%$\downarrow$ & Acc @80\%$\uparrow$ \\
\midrule
Boundary & \textbf{CBEC} & $\mathbf{0.197 \pm 0.012}$ & $0.316 \pm 0.015$ & $0.867 \pm 0.004$ \\
Scalar & Entropy & $0.249 \pm 0.017$ & $\mathbf{0.294 \pm 0.014}$ & $0.871 \pm 0.004$ \\
Scalar & MaxProb & $0.268 \pm 0.017$ & $0.340 \pm 0.016$ & $0.874 \pm 0.004$ \\
Per-class var. & Sale\_EU\_global & $0.364 \pm 0.022$ & $0.410 \pm 0.018$ & $0.870 \pm 0.004$ \\
Per-class $C_k$ & $C_{\text{crit\_max}}$ & $0.419 \pm 0.025$ & $0.390 \pm 0.017$ & $0.854 \pm 0.004$ \\
Per-class var. & Var\_crit & $0.424 \pm 0.026$ & $0.451 \pm 0.018$ & $0.868 \pm 0.004$ \\
Scalar & MI & $0.425 \pm 0.024$ & $0.459 \pm 0.019$ & $0.864 \pm 0.004$ \\
Per-class $C_k$ & $C_{\text{crit\_sum}}$ & $0.436 \pm 0.026$ & $0.404 \pm 0.017$ & $0.860 \pm 0.004$ \\
Per-class var. & Sale\_EU\_crit & $0.463 \pm 0.027$ & $0.527 \pm 0.019$ & $0.866 \pm 0.004$ \\
Per-class MI & OvA\_MI & $0.526 \pm 0.028$ & $0.556 \pm 0.020$ & $0.861 \pm 0.004$ \\
\bottomrule
\end{tabular}}
\end{table}

CBEC reduces AUSC by 53.6\% relative to MI ($0.197$ vs.\ $0.425$) and wins 100\% of bootstrap samples.
Scalar entropy ($0.249$) outperforms all epistemic methods including MI ($0.425$) under MC~dropout; this is because entropy captures total uncertainty (aleatoric + epistemic), and under MC~dropout's coarser posterior, the aleatoric component carries complementary signal that pure epistemic measures miss.
$C_{\text{crit\_max}}$ still outperforms its unnormalised counterpart Sale\_EU\_crit ($0.419$ vs.\ $0.463$), confirming that the $1/\mu_k$ normalisation provides benefit even when the Taylor approximation is degraded.

\begin{figure}[!ht]
  \centering
  \includegraphics[width=\columnwidth]{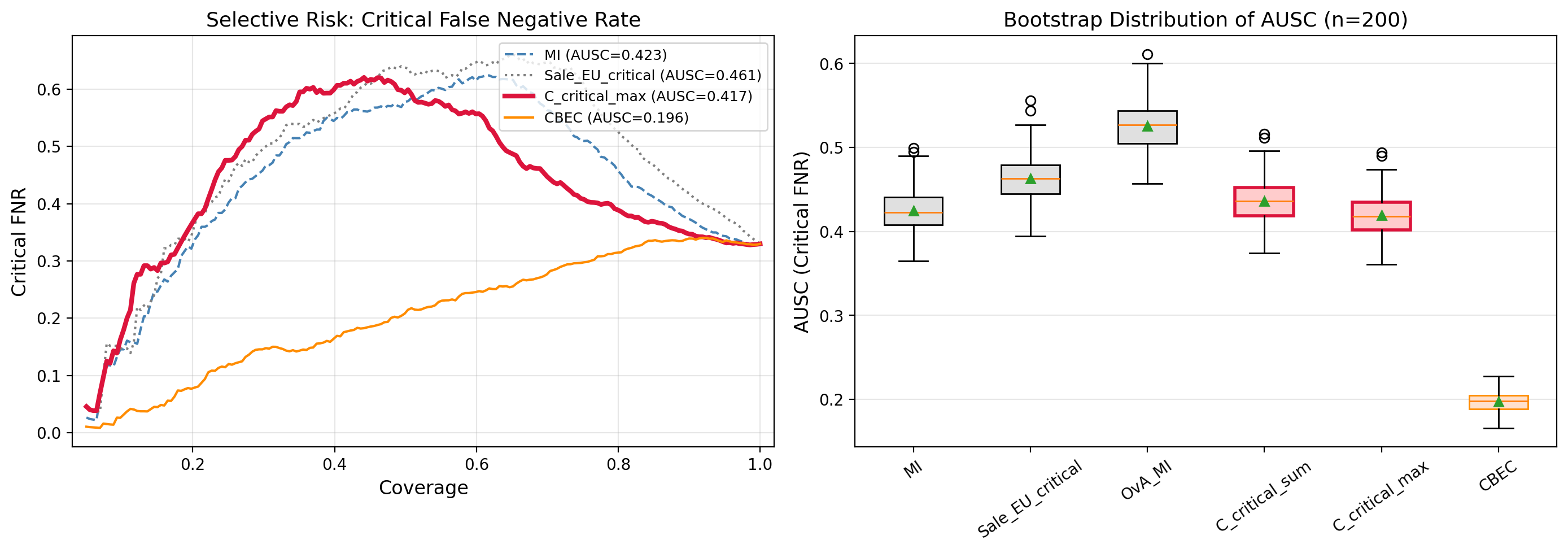}
  \caption{MC~dropout selective prediction on diabetic retinopathy.
  \textbf{Left:} Critical FNR vs.\ coverage curves. CBEC (orange) dominates at all coverage levels; $C_{\text{crit\_max}}$ (red) matches MI at low coverage but diverges at mid-range.
  \textbf{Right:} Bootstrap distribution of AUSC ($n{=}200$). CBEC achieves the lowest AUSC with the tightest spread; $C_{\text{crit\_max}}$ and MI are statistically indistinguishable.}
  \label{fig:mcdropout-selective}
\end{figure}

\paragraph{Why $C_k$ degrades under MC~dropout: a skewness analysis.}
Figure~\ref{fig:mcdropout-reliability} reveals the mechanism behind $C_{\text{crit\_max}}$'s ranking reversal.
Under the low-rank BNN, the fraction of samples with reliable $C_k$ ($\rho_k < 0.3$) remains above $80\%$ for Grades~0--2 and drops to $63\%$ only for Grade~3 (Severe/PDR), the rarest and most difficult class.
Under MC~dropout, the same statistic degrades monotonically with clinical severity: $88\%$ (Grade~0), $72\%$ (Grade~1), $54\%$ (Grade~2), and only $22\%$ (Grade~3).

The right panel quantifies the gap.
Median $\rho_k$ is comparable between the two models at Grade~0 ($0.013$ vs.\ $0.012$), where both posteriors are well-behaved.
The gap widens progressively: at Grade~1, MC~dropout's median $\rho_k$ is $3.5\times$ higher than the BNN's ($0.130$ vs.\ $0.037$); at Grade~2 it is $2.1\times$ higher ($0.264$ vs.\ $0.124$); and at Grade~3 the MC~dropout median exceeds the reliability boundary entirely ($1.125$ vs.\ $0.218$).
For the most safety-critical class, more than three quarters of MC~dropout samples operate in the regime where the second-order Taylor approximation has broken down.
This pattern is consistent with MC~dropout's known tendency to underestimate tail variance~\citep{smith2018understanding}: the resulting predictive distributions are more concentrated than the true posterior, but with heavier skewness, pushing $\rho_k$ above the reliability threshold precisely for the classes that matter most.

\begin{figure}[!ht]
  \centering
  \includegraphics[width=\columnwidth]{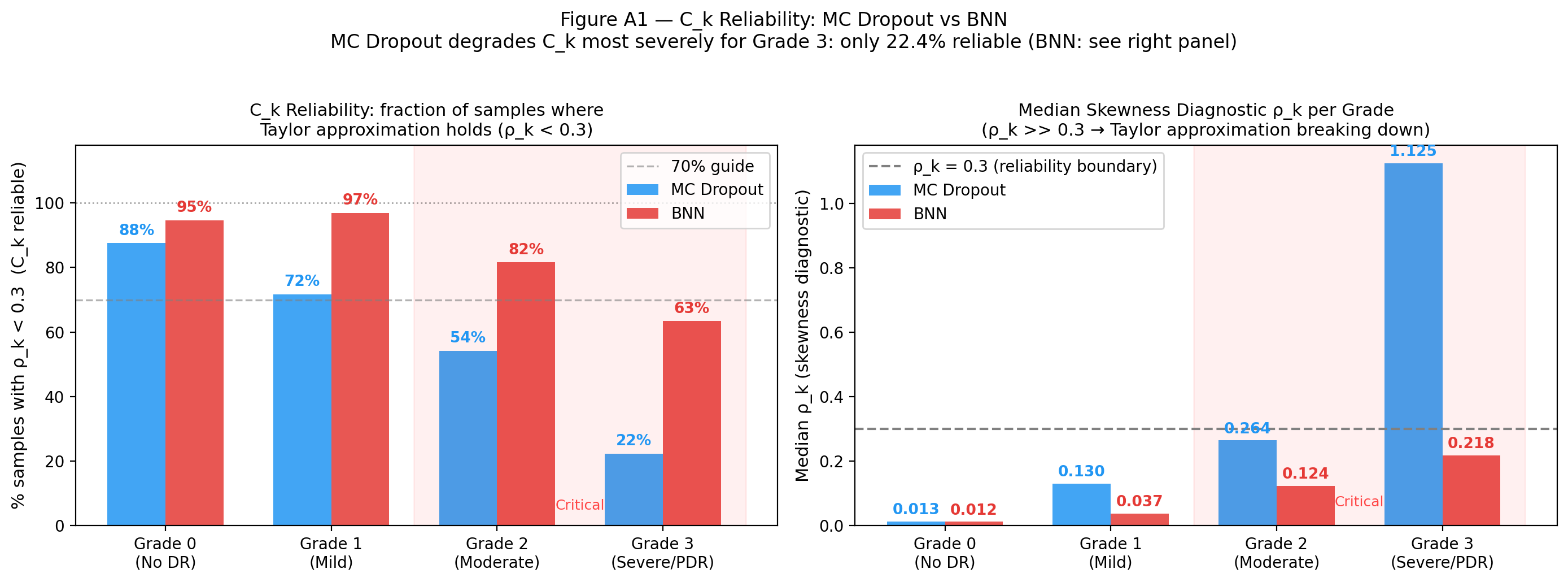}
  \caption{$C_k$ reliability comparison: MC~dropout vs.\ low-rank BNN on diabetic retinopathy.
  \textbf{Left:} Fraction of test samples with $\rho_k < 0.3$ (reliable $C_k$) per DR grade.
  The BNN maintains reliability above $60\%$ for all grades; MC~dropout drops to $22\%$ for Grade~3.
  Dashed line: $70\%$ guide.
  \textbf{Right:} Median skewness diagnostic $\rho_k$ per grade.
  MC~dropout exceeds the $\rho_k = 0.3$ reliability boundary at Grade~3 (median $1.125$), while the BNN remains below it ($0.218$).
  Critical grades (2, 3) are shaded.}
  \label{fig:mcdropout-reliability}
\end{figure}

These results provide direct empirical support for the complementarity claimed in Section~\ref{sec:exp_dr}: $C_{\text{crit\_max}}$ is the preferred deferral metric under well-calibrated posteriors where $\rho_k$ remains low, while CBEC provides a robust fallback when the posterior approximation inflates skewness beyond the Taylor regime.
The skewness diagnostic $\rho_k$ therefore serves as a practical model-selection criterion: if median $\rho_k$ for the critical classes exceeds $0.3$, practitioners should prefer CBEC over $C_{\text{crit\_max}}$.
\subsection{Deep Ensemble Validation: Per-Class Metrics}
\label{app:deep-ensemble}

To verify that the benefit of the $1/\mu_k$ normalisation persists across
inference paradigms, we evaluate a deep ensemble of five members on the
same diabetic retinopathy test set ($7{,}948$ images), using the identical
evaluation pipeline.
Each member shares the Phase~1 EfficientNet-B4 backbone and is fine-tuned
independently with a different random seed; diversity arises from head
initialisation rather than stochastic inference.
We restrict the comparison to MI as the scalar epistemic reference and the
full per-class family: raw per-class variance baselines from
\citet{sale2024labelwise} and our proposed $C_k$ metrics.

\paragraph{Reliability.}
The deep ensemble achieves near-uniform $C_k$ reliability across all DR
grades: $95.7\%$, $91.2\%$, $91.5\%$, and $85.1\%$ of samples satisfy
$\rho_k < 0.3$ for Grades~0--3 respectively.
This places the ensemble firmly in the high-reliability regime, in
contrast to MC~dropout where Grade~3 reliability collapses to $22\%$
(Figure~\ref{fig:mcdropout-reliability}).

\paragraph{Results.}
Table~\ref{tab:deep-ensemble-sale} reports AUSC and critical FNR at
$80\%$ coverage over 200 bootstrap resamples.

\begin{table}[!ht]
\centering
\caption{Deep ensemble: per-class metrics. AUSC and critical FNR at
$80\%$ coverage ($\downarrow$ better); 200 bootstrap resamples.}
\label{tab:deep-ensemble-sale}
\small
\begin{tabular}{lcc}
\toprule
Policy & AUSC$\downarrow$ & Crit.\ FNR @80\%$\downarrow$ \\
\midrule
Var\_crit                            & $0.408 \pm 0.030$ & $0.362 \pm 0.017$ \\
Sale\_EU\_crit~\citep{sale2024labelwise} & $0.447 \pm 0.031$ & $0.414 \pm 0.018$ \\
$C_{\text{crit\_max}}$              & $0.390 \pm 0.029$ & $0.314 \pm 0.016$ \\
$C_{\text{crit\_sum}}$              & $0.406 \pm 0.029$ & $0.333 \pm 0.016$ \\
\textbf{CBEC}                        & $\mathbf{0.223 \pm 0.018}$ & $\mathbf{0.237 \pm 0.013}$ \\
\bottomrule
\end{tabular}
\end{table}

Two findings are consistent across all three inference regimes.
First, $C_{\text{crit\_max}}$ outperforms its unnormalised counterpart
$\text{Sale\_EU\_crit}$ by $12.9\%$ in AUSC ($0.390$ vs.\ $0.447$),
confirming that the $1/\mu_k$ normalisation provides a systematic
advantage over raw per-class variance regardless of the inference method.
Second, CBEC achieves the lowest AUSC and critical FNR, outperforming MI
by $37.0\%$ ($0.223$ vs.\ $0.354$) and both Sale et al.\ variants on
every metric.
Notably, all three proposed $C_k$ metrics outperform Sale\_EU\_crit,
their direct unnormalised counterpart, confirming the boundary-correction
benefit of the $1/\mu_k$ normalisation even in the high-reliability
ensemble regime where the Taylor approximation is near-exact
($\sum_k C_k / \text{MI} = 0.996$ on this test set).
\section{OoD Detection Experiment: Supplementary Details}
\label{app:ood-exp}

\subsection{Datasets and Preprocessing}\label{app:ood-data}

\paragraph{FashionMNIST $\to$ KMNIST.}
FashionMNIST~\citep{xiao2017fashion} comprises 60{,}000 grayscale images ($28 \times 28$) across 10 clothing categories; in code, the 60{,}000 training images are split into 50{,}000 train and 10{,}000 validation, and the standard 10{,}000 FashionMNIST test set is used as the in-distribution test set.
KMNIST~\citep{clanuwat2018deep} provides 10{,}000 test images of cursive Japanese characters in the same format, used entirely as OoD data.
Preprocessing is identical for ID/OoD images: cast to \texttt{float32}, normalized by dividing pixel values by 126, and flattened each $28\times 28$ image to a 784-dimensional vector; no data augmentation is used.

\paragraph{MIMIC-III ICU $\to$ Newborn.}
We extract adult ICU admissions from MIMIC-III~\citep{johnson2016mimic}, yielding 44 clinical features with binary mortality labels, split into train/test = 40{,}406/4{,}490.
The Newborn unit (5{,}357 admissions) serves as OoD data.
The 44-feature pipeline merges (on \texttt{SUBJECT\_ID}, \texttt{HADM\_ID}, \texttt{ICUSTAY\_ID}): (i) static patient features (including age, ICU LOS, pre-ICU hospital time, gender one-hot), (ii) per-stay mean/std aggregates of mapped vital-sign ITEMIDs, and (iii) per-stay mean/std aggregates of mapped lab ITEMIDs restricted to timestamps within ICU stay. Newborn admissions are excluded from ID data and kept as OoD. Adult-ID preprocessing applies an 8$\times$IQR outlier filter, plausibility filters (\texttt{mean\_combined\_bp\_dia} and \texttt{mean\_combined\_bp\_sys} $>10$, \texttt{time\_at\_hosp\_pre\_ic\_admission} $>0$), fills missing std-feature entries with 0, then uses median imputation and MinMax scaling fit on train and applied to test/OoD. In the OoD loader used for experiments, newborn \texttt{age} (and \texttt{weight}, if present) are set to the adult-training mean values before evaluation.

\subsection{Model Architecture and Training}\label{app:ood-model}

Both tasks use fully connected Bayesian neural networks with ReLU activations and low-rank Gaussian posteriors.
Output layers are linear (no activation), trained with logits, and mapped to class probabilities via softmax at inference.
All models share the following configuration:

\paragraph{Shared specification.}
The prior is a scale-mixture Gaussian~\citep{blundell2015weight} with $\pi{=}0.5$, $\sigma_1{=}1.0$, $\sigma_2{=}\exp(-6)$.
Posterior initialisation is adaptive per layer: $\sigma_w^2 = 2/d_{\text{in}}$ for ReLU layers, $2/(d_{\text{in}}{+}d_{\text{out}})$ otherwise; damping is $0.55$ for $r > 5$ (else $0.32$); factor means $A_\mu, B_\mu \sim \mathcal{U}(-a, a)$ with $a = \text{damping}\sqrt{3}(\sigma_w^2/r)^{1/4}$; bias means $b_\mu = 0$; and $A_\rho, B_\rho, b_\rho$ are set to constant $\rho_{\text{init}} = \log(\exp(\sigma_{\text{init}}) - 1)$ with $\sigma_{\text{init}} = 0.09(\sigma_w^2/r)^{1/4}$.
All models are trained with Adam at learning rate $10^{-3}$.

\paragraph{Per-task details.}
Table~\ref{tab:ood-hyperparams} summarises the task-specific hyperparameters.

\begin{table}[!ht]
\centering
\caption{OoD model configurations.}
\label{tab:ood-hyperparams}
\begin{tabular}{lcc}
\toprule
 & FashionMNIST & MIMIC-III \\
\midrule
Hidden layers & $2 \times 1{,}200$ & $2 \times 128$ \\
Rank $r$ & 25 & 15 \\
Output classes & 10 & 2 \\
Batch size & 128 & 64 \\
Epochs & 50 & 256 \\
KL schedule & Linear warmup, 10 epochs & Fixed \\
KL scale & $0 \to 1/N_{\text{train}}$ & $0.5/\lceil N_{\text{train}}/\text{batch}\rceil$ \\
Class weighting & None & $w_0{=}1,\; w_1{=}1/\text{pos\_frac} \approx 11.9$ \\
MC samples $S$ & 50 & 512 \\
\bottomrule
\end{tabular}
\end{table}

Results are reported as mean $\pm$ std over 5 independent training seeds.

\subsection{Extended OoD Detection Results}\label{app:ood-details}

Table~\ref{tab:ood-distributional} reports per-metric distributional statistics for both benchmarks.
On FashionMNIST, MI and $EU_{\text{var}}$ share the same OoD/ID ratio ($5.92\times$) but MI achieves higher AUROC because its absolute values provide better threshold separation.
On MIMIC-III, $EU_{\text{var}}$ has the highest ratio ($1.71\times$) yet the lowest AUROC among epistemic measures ($0.778$), confirming that ratio alone does not determine detection performance when the dynamic range of scores is compressed by boundary suppression.

\begin{table}[!ht]
\centering
\caption{OoD distributional analysis: mean uncertainty and OoD/ID ratio.}
\label{tab:ood-distributional}
\resizebox{0.5\columnwidth}{!}{
\begin{tabular}{llccc}
\toprule
Dataset & Metric & Mean (ID) & Mean (OoD) & Ratio$\uparrow$ \\
\midrule
\multirow{4}{*}{FashionMNIST}
  & Neg.\ MSP & 0.0506 & 0.1043 & 2.07 \\
  & MI & 0.0096 & 0.0569 & 5.92 \\
  & $EU_{\text{var}}$ & 0.0056 & 0.0333 & 5.92 \\
  & $\sum_k C_k$ & 0.0106 & 0.0677 & \textbf{6.43} \\
\midrule
\multirow{4}{*}{MIMIC-III}
  & Neg.\ MSP & 0.2543 & 0.3150 & 1.24 \\
  & MI & 0.0378 & 0.0598 & 1.61 \\
  & $EU_{\text{var}}$ & 0.0289 & 0.0485 & 1.71 \\
  & $\sum_k C_k$ & 0.0377 & 0.0601 & \textbf{1.62} \\
\bottomrule
\end{tabular}}
\end{table}

\paragraph{Per-class decomposition (MIMIC-III).}
The binary setting permits direct inspection of per-class contributions: $\sum_k C_k = C_0 + C_1$.
Table~\ref{tab:mimic-perclass} reveals that distributional shift affects the two classes asymmetrically.
The survival class exhibits a $2.15\times$ increase from ID to OoD ($C_0$: $0.014 \to 0.030$), while the mortality class shows only $1.30\times$ ($C_1$: $0.022 \to 0.029$).
The critical-class-only metric $C_1$ achieves AUROC $0.740 \pm 0.074$, substantially below $\sum_k C_k$ at $0.815 \pm 0.017$: by focusing exclusively on mortality-class uncertainty, it misses the larger OoD signal carried by $C_0$.
This illustrates a task-dependent trade-off: $C_{\text{crit\_max}}$ targets safety-critical selective prediction (Section~\ref{sec:exp_dr}), where critical-class uncertainty directly determines clinical harm; for general OoD detection, all-class aggregation captures the complete distributional shift.

\begin{table}[!ht]
\centering
\caption{Per-class decomposition on MIMIC-III OoD detection.}
\label{tab:mimic-perclass}
\resizebox{0.5\columnwidth}{!}{
\begin{tabular}{lcccc}
\toprule
Metric & Mean (ID) & Mean (OoD) & Ratio$\uparrow$ & AUROC$\uparrow$ \\
\midrule
$C_0$ (survival) & 0.014 & 0.030 & 2.15 & $0.773 \pm 0.005$ \\
$C_1$ (mortality) & 0.022 & 0.029 & 1.30 & $0.740 \pm 0.074$ \\
\midrule
$\sum_k C_k$ & 0.038 & 0.060 & 1.62 & $\mathbf{0.815 \pm 0.017}$ \\
\bottomrule
\end{tabular}}
\end{table}

\subsection{Per-Class Epistemic Decomposition for FashionMNIST}
\label{app:perclass_fmnist}

The main text reports $\sum_k C_k$ as the best OoD metric for
FashionMNIST$\to$KMNIST (AUROC $0.735 \pm 0.009$) but presents only the
scalar aggregate.
Figure~\ref{fig:ck_fmnist} decomposes this into the ten individual
$C_k$ values, revealing which clothing categories drive the OoD signal
when the model encounters KMNIST inputs.

\begin{figure}[htbp]
  \centering
  \includegraphics[width=\linewidth]{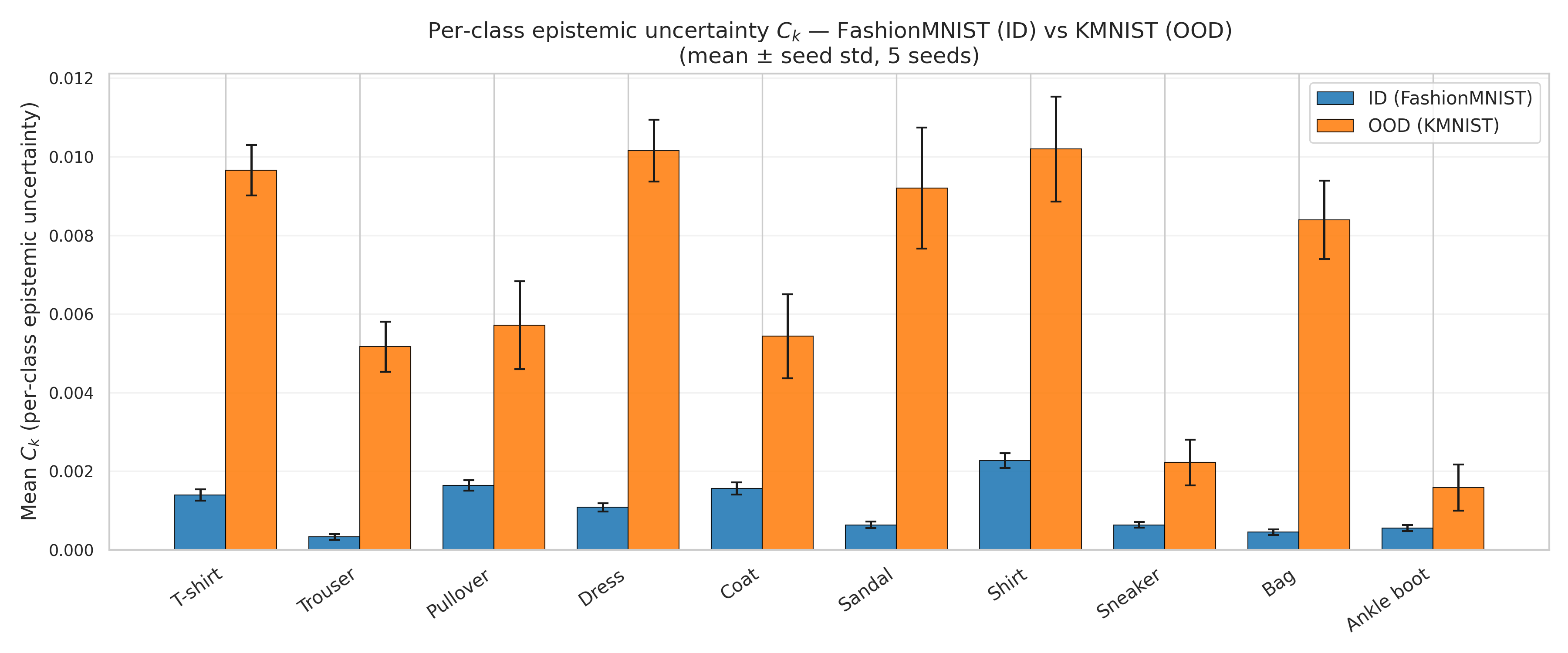}
  \caption{%
    Per-class epistemic uncertainty $C_k$ for each of the ten FashionMNIST
    categories, comparing ID (FashionMNIST test, $N=10{,}000$) vs.\ OoD
    (KMNIST test, $N=10{,}000$) samples.
    Bars: mean over 5 seeds; error bars: seed standard deviation.
     All ten classes show $\bar{C}_k^{\text{OoD}} >
    \bar{C}_k^{\text{ID}}$, though the magnitude varies considerably
    (Bag: $18.7\times$; Ankle boot: $2.8\times$; Table~\ref{tab:ck_fmnist}).
    The OoD signal is \emph{consistent in direction} across all categories
    but not uniform in magnitude, suggesting that KMNIST activates some
    FashionMNIST class representations more than others.
  }
  \label{fig:ck_fmnist}
\end{figure}

Table~\ref{tab:ck_fmnist} reports the numerical values for each class.
\begin{table}[htbp]
  \centering
  \caption{%
    Per-class mean $C_k$ for FashionMNIST (ID) vs.\ KMNIST (OoD),
    averaged over 5 seeds with seed standard deviation.
    Ratio $= \bar{C}_k^{\text{OoD}} / \bar{C}_k^{\text{ID}}$.
  }
  \label{tab:ck_fmnist}
  \small
  \begin{tabular}{lrrrrr}
    \toprule
    Class
      & $\bar{C}_k^{\text{ID}}$ & $\sigma^{\text{ID}}$
      & $\bar{C}_k^{\text{OoD}}$ & $\sigma^{\text{OoD}}$
      & Ratio \\
    \midrule
    T-shirt/top  & 0.00140 & 0.00014 & 0.00966 & 0.00064 &  6.92 \\
    Trouser      & 0.00033 & 0.00007 & 0.00517 & 0.00064 & 15.65 \\
    Pullover     & 0.00164 & 0.00013 & 0.00571 & 0.00112 &  3.49 \\
    Dress        & 0.00108 & 0.00010 & 0.01015 & 0.00079 &  9.37 \\
    Coat         & 0.00156 & 0.00015 & 0.00543 & 0.00106 &  3.48 \\
    Sandal       & 0.00064 & 0.00008 & 0.00920 & 0.00154 & 14.43 \\
    Shirt        & 0.00228 & 0.00019 & 0.01020 & 0.00133 &  4.48 \\
    Sneaker      & 0.00064 & 0.00007 & 0.00222 & 0.00058 &  3.49 \\
    Bag          & 0.00045 & 0.00007 & 0.00840 & 0.00099 & 18.67 \\
    Ankle boot   & 0.00056 & 0.00008 & 0.00158 & 0.00058 &  2.85 \\
    \midrule
    $\sum_k C_k$ (aggregate) & 0.01056 & -  & 0.06773 & - & 6.43 \\
    \bottomrule
  \end{tabular}
\end{table}
We additionally evaluate each individual $C_k(x)$ as a scalar OoD score.
Table~\ref{tab:ck_fmnist_auroc} shows that every class provides non-trivial
ID/OoD separation, with AUROC ranging from $0.537$ for Ankle boot to $0.683$
for Bag. These per-class AUROCs remain below the aggregate $\sum_k C_k$
AUROC of $0.735 \pm 0.009$, confirming that the shift is broad enough that
aggregation is the strongest detector, while the individual components reveal
where the shift is concentrated.

\begin{table}[htbp]
  \centering
  \caption{Per-class OoD detection on FashionMNIST$\to$KMNIST using each
  $C_k(x)$ as an individual OoD score. Ratios and AUROCs are mean $\pm$ std
  over 5 seeds.}
  \label{tab:ck_fmnist_auroc}
  \small
  \begin{tabular}{lcc}
    \toprule
    Class & OoD/ID ratio & AUROC using $C_k$ \\
    \midrule
    T-shirt/top & $6.971 \pm 0.788$ & $0.670 \pm 0.027$ \\
    Trouser     & $16.046 \pm 2.370$ & $0.656 \pm 0.024$ \\
    Pullover    & $3.477 \pm 0.549$ & $0.578 \pm 0.020$ \\
    Dress       & $9.428 \pm 1.005$ & $0.662 \pm 0.020$ \\
    Coat        & $3.481 \pm 0.602$ & $0.577 \pm 0.026$ \\
    Sandal      & $14.729 \pm 3.755$ & $0.650 \pm 0.019$ \\
    Shirt       & $4.467 \pm 0.337$ & $0.615 \pm 0.025$ \\
    Sneaker     & $3.494 \pm 0.906$ & $0.549 \pm 0.054$ \\
    Bag         & $18.854 \pm 2.173$ & $0.683 \pm 0.030$ \\
    Ankle boot  & $2.921 \pm 1.439$ & $0.537 \pm 0.050$ \\
    \bottomrule
  \end{tabular}
\end{table}
For reference, the aggregate $\sum_k C_k$ achieves an OoD/ID mean-score
ratio of $6.43\times$ (vs.\ $5.92\times$ for MI and $5.92\times$ for
$EU_{\text{var}}$), confirming that the $\mu_k$-normalisation in $C_k$
provides a consistent boost over the raw variance sum.

\subsection{\texorpdfstring{Score Distribution Analysis and the $EU_{\text{var}}$ Paradox}{Score Distribution Analysis and the EUvar Paradox}}
\label{app:distributions}

On MIMIC-III, $EU_{\text{var}}$ achieves the \emph{highest} mean OoD/ID
ratio ($1.705\times$) yet the \emph{lowest} AUROC ($0.778$) among the
three primary Bayesian metrics.
Figure~\ref{figures:distributions_mimic} makes this paradox immediately visual.

\begin{figure}[htbp]
  \centering
  \includegraphics[width=\linewidth]{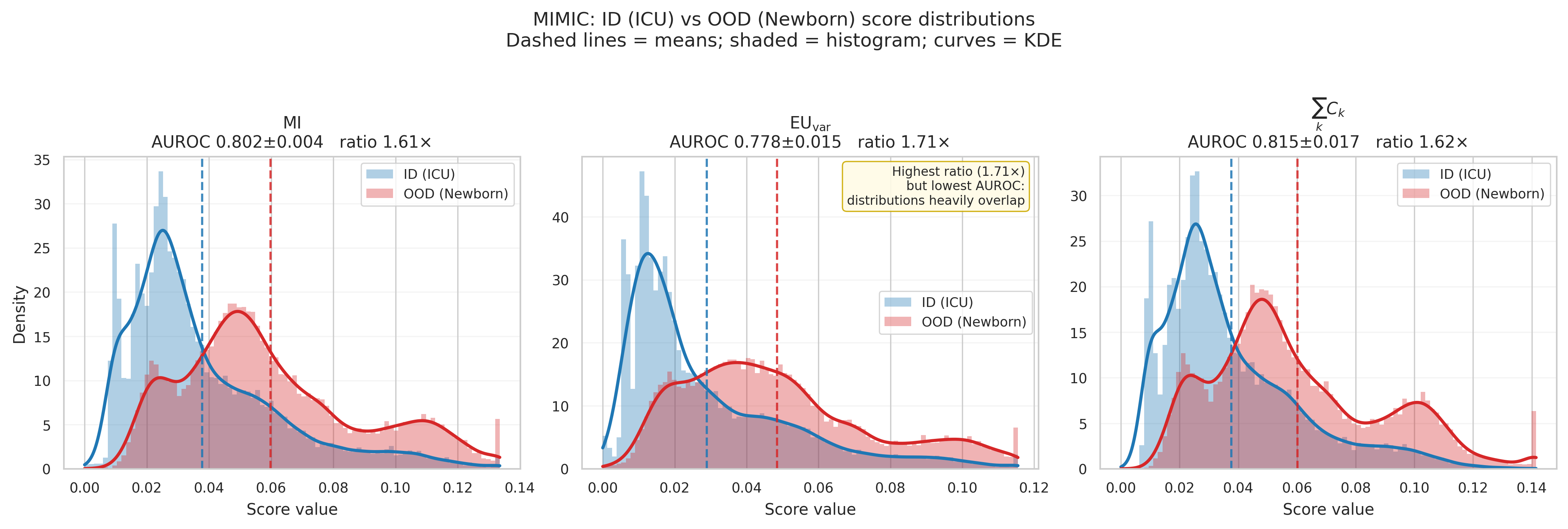}
  \caption{%
    MIMIC ICU (blue) vs.\ Newborn (red) score distributions for the three
    primary Bayesian metrics, pooled across 5 seeds
    (${\approx}22{,}000$ ID samples, ${\approx}27{,}000$ OoD samples).
    Shaded areas: histograms (90 bins, density-normalised).
    Solid curves: kernel density estimates (Scott's rule).
    Dashed verticals: distribution means.
    \textbf{Centre panel:} despite achieving the largest mean separation
    ($1.705\times$), both ID and OoD $EU_{\text{var}}$ distributions
    concentrate near zero with a shared heavy right tail.
    \textbf{Right panel:} $\sum_k C_k$ achieves better tail separation,
    yielding the highest AUROC ($0.815$).
  }
  \label{figures:distributions_mimic}
\end{figure}

\paragraph{Why a high mean ratio does not imply better separability.}
AUROC measures $P(s_{\mathrm{OoD}} > s_{\mathrm{ID}})$ for a randomly
chosen pair drawn independently from the OoD and ID score distributions.
This probability is governed by the \emph{full distributional overlap}, not
merely the ratio of means.
Under a Gaussian approximation of the log-scores, the relationship is
\begin{equation}
  \mathrm{AUROC}
  \;\approx\;
  \Phi\!\left(
    \frac{\mu_{\mathrm{OoD}} - \mu_{\mathrm{ID}}}
         {\sqrt{\sigma^2_{\mathrm{OoD}} + \sigma^2_{\mathrm{ID}}}}
  \right),
  \label{eq:auroc_gaussian_approx}
\end{equation}
where $\Phi$ is the standard-normal CDF.
A high OoD/ID mean ratio inflates the numerator, but if the within-group
standard deviations are also large, the denominator grows proportionally
and the AUROC gain is suppressed, analogous to Cohen's~$d$.

\paragraph{Why EU$_{\mathrm{var}}$ suffers from dynamic range compression.}
EU$_{\mathrm{var}} = \sum_k \mathrm{Var}[p_k]$ is an un-normalised
variance sum.
For a sigmoid binary classifier, predictive variance is
\emph{heteroscedastic}: samples near the decision boundary have high
$\mathrm{Var}[p_k]$ irrespective of ID/OoD status, simultaneously inflating
the right tail of \emph{both} distributions.
Although the OoD mean exceeds the ID mean by a factor of $1.71\times$,
the bulk of both distributions concentrates near zero with only a thin
right tail separating them.
The resulting large within-group spread keeps the denominator
of~\eqref{eq:auroc_gaussian_approx} large, yielding the lowest AUROC
($0.778$) among the three Bayesian metrics.

\paragraph{Why $\sum_k C_k$ achieves better separation with a lower ratio.}
$C_k = \mathrm{Var}[p_k]\,/\,(2\mu_k)$ normalises each variance component
by the corresponding mean probability.
This acts as a variance-stabilising transform: large absolute variances
that arise when $\mu_k$ is also large are down-weighted, while contributions
from classes with small $\mu_k$ are amplified.
The net effect narrows the within-group spread for both ID and OoD
populations, so a smaller mean ratio ($1.62\times$ vs.\ $1.71\times$)
still translates to cleaner tail separation and a higher AUROC ($0.815$).

\paragraph{Why the paradox is absent on FashionMNIST.}
On FashionMNIST$\to$KMNIST, $EU_{\text{var}}$ and MI attain nearly
identical mean ratios ($5.923\times$ vs.\ $5.921\times$), and their
AUROC gap is only $0.014$.
The distinction arises because KMNIST induces a strong, directionally
consistent uncertainty lift across all ten softmax outputs, preventing
the range compression that
afflicts $EU_{\text{var}}$ on MIMIC's binary sigmoid output.

\subsection{Empirical Verification of the Skewness Diagnostic}
\label{app:skewness_ood}

Lemma~\ref{lem:third-order} establishes that the expected entropy satisfies
\begin{equation}
  \mathbb{E}\bigl[\mathcal{H}(\mathbf{p})\bigr]
  \;\approx\;
  \mathcal{H}(\boldsymbol{\mu})
  - \frac{1}{2}\sum_k \frac{\mathrm{Var}[p_k]}{\mu_k}
  + \frac{1}{6}\sum_k \frac{m_{3,k}}{\mu_k^2},
  \label{eq:lem3}
\end{equation}
where $m_{3,k} = \frac{1}{S}\sum_{s=1}^S(p_k^{(s)} - \mu_k)^3$ is
the empirical third central moment.
The reliability of the second-order approximation (i.e.\ $\sum_k C_k$)
is governed by the skewness diagnostic
\begin{equation}
  \rho_k(x)
  = \frac{|m_{3,k}|}{3\,\mu_k \cdot \mathrm{Var}[p_k]}
  = \frac{\bigl|\text{3rd-order correction}\bigr|}
         {\bigl|\text{2nd-order correction}\bigr|},
  \label{eq:rho_app}
\end{equation}
so $\rho_k < 1$ signals a reliable quadratic regime and
$\rho_k \gg 1$ signals regime breakdown.
Note that $3\mu_k \cdot \mathrm{Var}[p_k] = 6\mu_k^2 C_k$, so
$\rho_k = |m_{3,k}| / (6\mu_k^2 C_k)$: the $1/\mu_k^2$ factor amplifies
$\rho_k$ for classes with small predicted probability, which is precisely
the singularity avoided by using $C_k$ rather than $C_k^{(3)}$.

The main text claims OoD inputs have higher $\rho_k$ than ID inputs.
Figures~\ref{fig:rho_mimic} and~\ref{fig:rho_fmnist} provide the first
direct empirical confirmation, with one instructive exception.

\paragraph{MIMIC-III ($K=2$).}

\begin{figure}[htbp]
  \centering
  \includegraphics[width=0.88\linewidth]{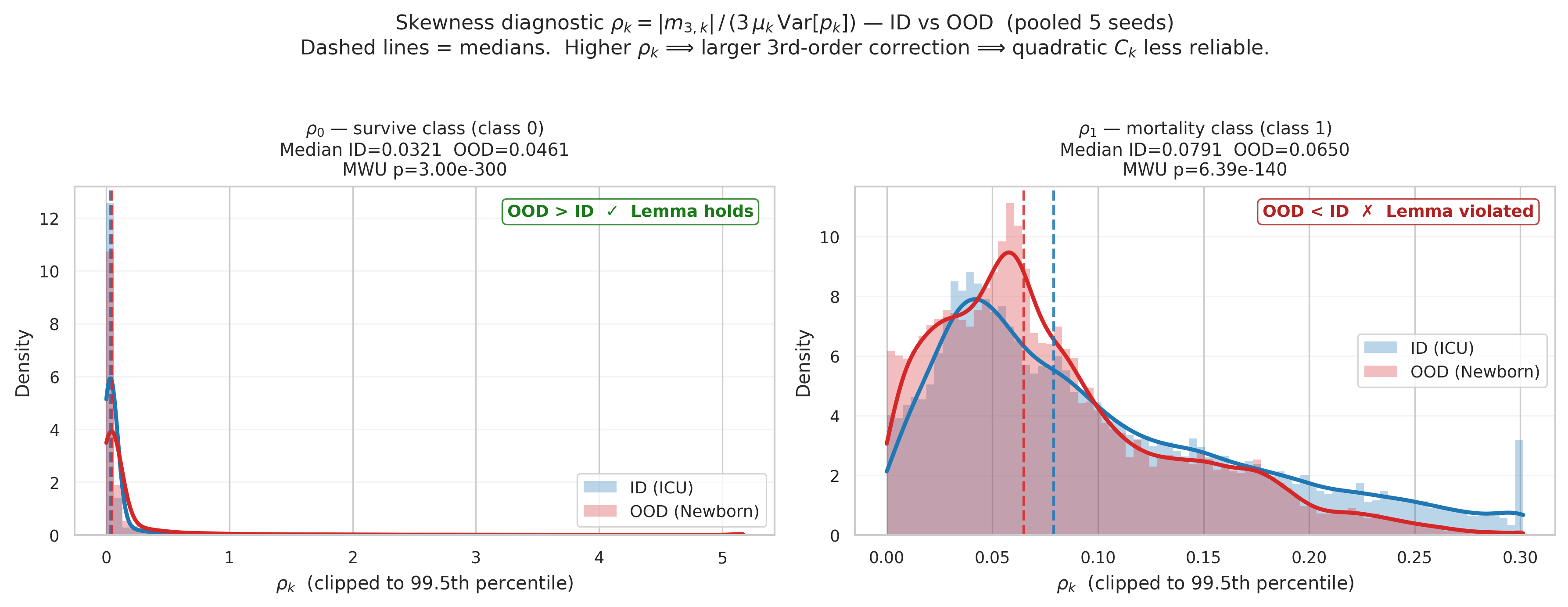}
  \caption{%
    Skewness diagnostic $\rho_k$ for ID (ICU, blue) and
    OoD (Newborn, red) on MIMIC, pooled across 5 seeds.
    Shaded areas: density-normalised histograms clipped to the
    99.5th percentile; solid curves: kernel density estimates;
    dashed verticals: medians.
    \textbf{Left ($\rho_0$, survive):} Both distributions concentrate
    well below $0.3$, confirming that $C_0$ is reliable for both
    populations; the median shift ($0.032 \to 0.046$) is statistically
    significant but practically small.
    \textbf{Right ($\rho_1$, mortality):} The OoD distribution shifts
    \emph{left} (median $0.065 < 0.079$), reversing the pattern
    predicted by Lemma~\ref{lem:third-order}; near-zero $\mu_1$ for
    newborns collapses $\Var[p_1]$ faster than $1/\mu_1$ compensates,
    concentrating the third-order signal in class~0.}
  
  \label{fig:rho_mimic}
\end{figure}

\begin{table}[htbp]
  \centering
  \caption{%
    MIMIC: median and mean of $\rho_k$ for ID (ICU) and OoD (Newborn)
    populations, pooled across 5 seeds.
    Mann--Whitney $U$ test, two-sided.
  }
  \label{tab:rho_mimic}
  \small
  \begin{tabular}{lcccccc}
    \toprule
    & \multicolumn{2}{c}{Median}
    & \multicolumn{2}{c}{Mean}
    & \multirow{2}{*}{MWU $p$}
    & \multirow{2}{*}{Direction} \\
    \cmidrule(lr){2-3}\cmidrule(lr){4-5}
    {Class} & {ID} & {OoD} & {ID} & {OoD} & & \\
    \midrule
    $\rho_0$ (survive)   & 0.032 & 0.046 & 0.142 & 0.222 & $< 10^{-300}$ & OoD $>$ ID\;$\checkmark$ \\
    $\rho_1$ (mortality) & 0.079 & 0.065 & 0.099 & 0.080 & $< 10^{-130}$ & OoD $<$ ID\;$\times$    \\
    \bottomrule
  \end{tabular}
\end{table}

\paragraph{The $\rho_1$ reversal on MIMIC.}
The mortality class ($k{=}1$) shows the \emph{opposite} direction to the
Lemma~\ref{lem:third-order} prediction: OoD newborns have \emph{lower}
$\rho_1$ than ICU patients (median $0.065 < 0.079$, $p < 10^{-130}$).
The mechanism is coherent: the model learns to predict near-zero mortality
probability for newborns, a structurally healthy population.
Consequently, all $S = 512$ MC samples pile up near $p_1 \approx 0$,
producing a distribution over $p_1$ that is \emph{tight} (small
$\mathrm{Var}[p_1]$) and slightly right-skewed, so the third central
moment $|m_{3,1}|$ collapses faster than the $1/\mu_1$ amplification in
the denominator of $\rho_1$ can compensate.
The OoD detection signal for MIMIC lives entirely in class~0 (survive):
$\rho_0$ shifts right by $43\%$ on the median.

Crucially, this does not contradict higher epistemic uncertainty for OoD
inputs.
Because $C_k = \mathrm{Var}[p_k]/(2\mu_k)$, a smaller $\mathrm{Var}[p_1]$
and a simultaneously smaller $\mu_1$ can still yield a larger $C_1$:
for OoD newborns, $\mu_1$ collapses faster than $\mathrm{Var}[p_1]$, so
$C_1^{\text{OoD}} > C_1^{\text{ID}}$ (Table~\ref{tab:mimic-perclass})
even while $\mathrm{Var}[p_1]^{\text{OoD}} < \mathrm{Var}[p_1]^{\text{ID}}$.
The $\mu_k$ normalisation in $C_k$ is precisely what decouples epistemic
uncertainty from MC spread, and it is the MC spread, not $C_k$, that
governs $\rho_k$.

The reversal is not specific to MIMIC or to minority classes.
The correct sufficient condition is: \emph{the MC draws are more
consistent about $p_k$ for OoD samples than for ID samples}, i.e.\
$\mathrm{Var}[p_k]^{\text{OoD}} < \mathrm{Var}[p_k]^{\text{ID}}$.
The $\rho_1$ reversal does not invalidate Lemma~\ref{lem:third-order},
which makes no per-class monotonicity claim, but it highlights that
binary-output models can concentrate their third-order regime into a single
class depending on the OoD population's characteristics.

\paragraph{FashionMNIST ($K=10$).}

\begin{figure}[htbp]
  \centering
  \includegraphics[width=\linewidth]{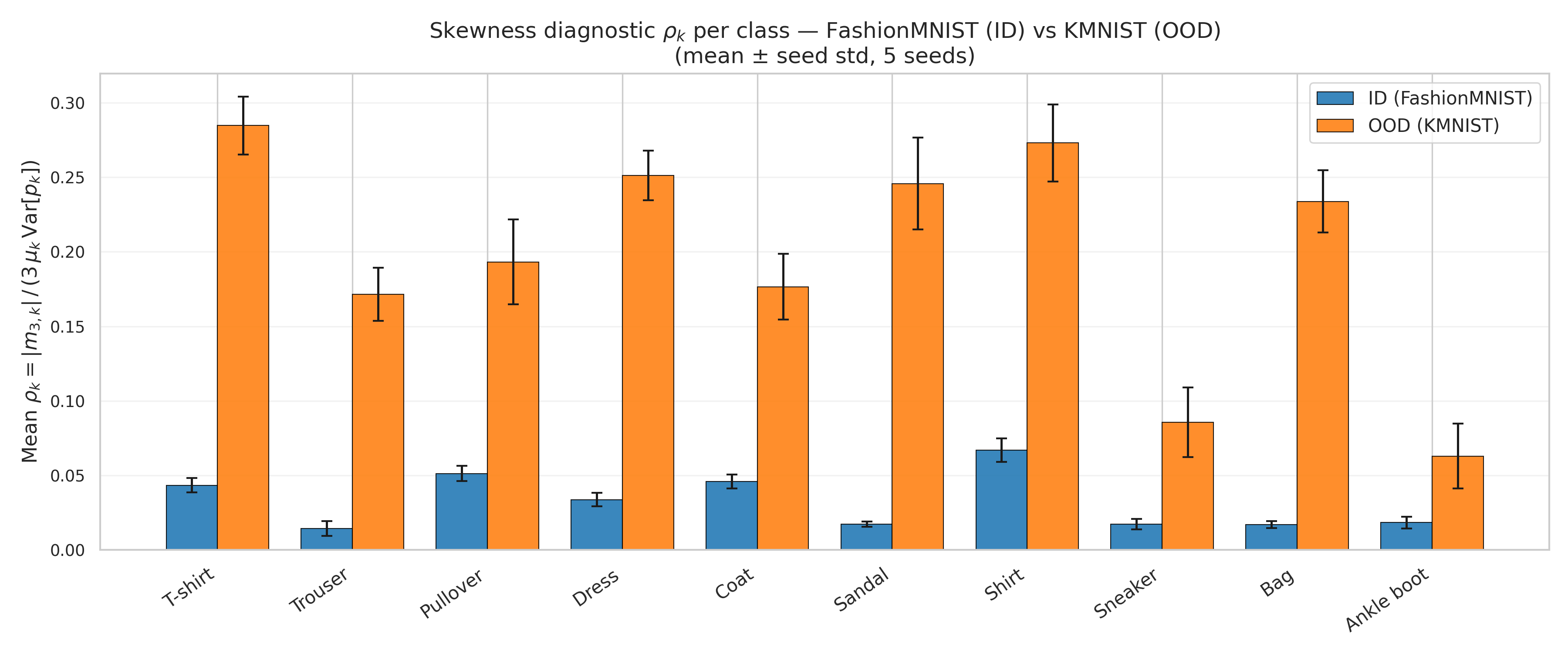}
  \caption{%
    Per-class skewness diagnostic $\rho_k$ for FashionMNIST (ID, blue)
    vs.\ KMNIST (OoD, orange), averaged over 5 seeds.
    Error bars: seed standard deviation.
All ten categories show $\rho_k^{\text{OoD}} > \rho_k^{\text{ID}}$,
    confirming that KMNIST inputs produce systematically more asymmetric
    MC posteriors across every class, with OoD/ID ratios ranging from
    $3.4\times$ (Ankle boot) to $14.2\times$ (Sandal).
    The directional pattern is consistent with the $C_k$ profile
    (Figure~\ref{fig:ck_fmnist}), though the per-class rankings differ,
    as $\rho_k$ depends on third-moment structure rather than variance alone.
  }
  \label{fig:rho_fmnist}
\end{figure}

\begin{table}[htbp]
  \centering
  \caption{%
    FashionMNIST: per-class mean $\rho_k$ for ID and OoD populations,
    averaged over 5 seeds, with Mann--Whitney $U$ $p$-values computed on
    pooled samples ($5 \times 10{,}000$ per split per class).
  }
  \label{tab:rho_fmnist}
  \small
  \begin{tabular}{lrrrrcr}
    \toprule
    Class
      & $\bar\rho_k^{\text{ID}}$ & $\sigma^{\text{ID}}$
      & $\bar\rho_k^{\text{OoD}}$ & $\sigma^{\text{OoD}}$
      & MWU $p$ & OoD/ID \\
    \midrule
    T-shirt/top  & 0.04333 & 0.00476 & 0.28470 & 0.01950 & $\approx 0$ &  6.57 \\
    Trouser      & 0.01426 & 0.00494 & 0.17139 & 0.01775 & $\approx 0$ & 12.02 \\
    Pullover     & 0.05121 & 0.00506 & 0.19319 & 0.02847 & $8 \times 10^{-260}$ &  3.77 \\
    Dress        & 0.03375 & 0.00451 & 0.25126 & 0.01659 & $\approx 0$ &  7.45 \\
    Coat         & 0.04594 & 0.00462 & 0.17652 & 0.02213 & $7 \times 10^{-264}$ &  3.84 \\
    Sandal       & 0.01726 & 0.00182 & 0.24578 & 0.03093 & $\approx 0$ & 14.24 \\
    Shirt        & 0.06693 & 0.00777 & 0.27297 & 0.02592 & $\approx 0$ &  4.08 \\
    Sneaker      & 0.01728 & 0.00344 & 0.08564 & 0.02328 & $3 \times 10^{-56}$  &  4.96 \\
    Bag          & 0.01688 & 0.00236 & 0.23378 & 0.02085 & $\approx 0$ & 13.85 \\
    Ankle boot   & 0.01832 & 0.00397 & 0.06293 & 0.02174 & $5 \times 10^{-13}$  &  3.43 \\
    \bottomrule
  \end{tabular}
\end{table}
\paragraph{Interpretation.}
These figures close the loop between Lemma~\ref{lem:third-order} and
the empirical OoD results.
OoD inputs are not merely \emph{more uncertain} (higher $C_k$) but more
\emph{asymmetrically uncertain} (higher $\rho_k$), at least for classes
where the OoD population has meaningful predicted probability mass.
When $\rho_k \ll 1$ (ID regime), the second-order approximation is
trustworthy and $\sum_k C_k \approx \text{MI}$, so the two metrics converge.
When $\rho_k$ is large (OoD regime), the cubic correction
in~\eqref{eq:lem3} becomes significant; $\sum_k C_k$ captures this
additional signal through its $1/\mu_k$ weighting, while MI averages
over the sampling distribution without per-class amplification.
On FashionMNIST, where KMNIST induces consistently higher uncertainty
across all ten logit outputs, the reversal is absent: every class shows
$\rho_k^{\text{OoD}} > \rho_k^{\text{ID}}$ (Figure~\ref{fig:rho_fmnist}).

\subsection{Deep Ensemble OoD Detection: MIMIC-III}
\label{app:ood-ensemble}

We repeat the MIMIC-III ICU$\to$Newborn OoD detection experiment using
a deep ensemble of five members (one deterministic forward pass per
member) in place of the low-rank Bayesian model.
The ensemble uses the same 44-feature pipeline and identical evaluation
code; uncertainty is derived from disagreement across members rather
than MC sampling.
As the ensemble consists of a single trained instance, no seed standard
deviation is reported.

\begin{table}[!ht]
\centering
\caption{OoD detection on MIMIC-III ICU$\to$Newborn: deep ensemble.
AUROC and OoD/ID mean-score ratio. Best in \textbf{bold}.}
\label{tab:ood-ensemble}
\small
\begin{tabular}{lcc}
\toprule
Method & AUROC$\uparrow$ & Ratio$\uparrow$ \\
\midrule
Neg.\ MSP        & $0.701$ & $1.68$ \\
MI               & $0.752$ & $2.32$ \\
$EU_{\text{var}}$& $0.750$ & $\mathbf{2.72}$ \\
\midrule
$\sum_k C_k$     & $\mathbf{0.753}$ & $2.31$ \\
\bottomrule
\end{tabular}
\end{table}

The ranking $\sum_k C_k > \text{MI} > EU_{\text{var}}$ mirrors the
low-rank Bayesian result (Table~\ref{tab:ood-main}).
Critically, $EU_{\text{var}}$ again achieves the highest OoD/ID ratio
($2.72\times$) yet the lowest AUROC among epistemic measures: without
mean-normalisation, boundary suppression compresses the dynamic range
of scores, so a numerically large mean shift is swamped by
within-group spread, failing to produce separable distributions.
$\sum_k C_k$ corrects this via the $1/\mu_k$ weighting, yielding the
highest AUROC across both inference regimes.
\section{Epistemic Sensitivity to Data Quality: Supplementary Details}
\label{app:disentangle}

\subsection{Model Architecture and Training}\label{app:disentangle-model}

Both end-to-end models use low-rank Gaussian posteriors with the shared specification described in Appendix~\ref{app:ood-model}.

\paragraph{Fashion-MNIST.}
Fully connected Bayesian network, two hidden layers of $400$ units, rank~$15$, ReLU activations.
Trained for $50$ epochs with Adam (lr $= 10^{-3}$), batch size $128$, $S{=}50$ MC~samples at inference.

\paragraph{CIFAR-10 (from scratch).}
CNN with Conv~$32{\to}64{\to}128$, each followed by ReLU and $2{\times}2$ max-pooling, global average pooling, and a low-rank Bayesian head (rank~$32$).
Trained for $100$ epochs with Adam (lr $= 10^{-3}$), batch size $128$, $S{=}50$ MC~samples at inference.

\paragraph{Label noise injection.}
For each noise rate $\alpha \in \{0.1, 0.2, 0.3, 0.4, 0.5\}$, a fraction $\alpha$ of training labels is replaced uniformly at random across all classes.
The test set remains clean throughout.
Models are retrained from scratch at each noise level; results are mean over $5$ seeds.

\subsection{Transfer Learning Experiments}\label{app:disentangle-transfer}

To isolate the effect of end-to-end Bayesian training, we repeat the disentanglement protocol with a frozen ImageNet-pretrained ResNet-50 backbone, replacing only the classifier head with one of three Bayesian variants:
\begin{itemize}
    \item \textbf{Low-rank variational} (rank $= 32$, constrained posterior),
    \item \textbf{Full-rank variational} (unconstrained Bayesian weight distributions),
    \item \textbf{MC~Dropout} ($p{=}0.3$, deterministic weights with dropout-based uncertainty).
\end{itemize}

This design enables a controlled comparison: CIFAR-10 from scratch vs.\ CIFAR-10 transfer uses the \emph{same dataset}, \emph{same $K$}, and \emph{same Bayesian model type} (low-rank), isolating the training regime as the causal variable.

\paragraph{Why a relative ratio.}
The naive absolute ratio $R_{\mathrm{abs}}(\alpha) = (\bar{U}_e(\alpha) - \bar{U}_e(0))/(\bar{U}_a(\alpha) - \bar{U}_a(0))$ is confounded by the scale of epistemic uncertainty.
Low-rank models produce baseline MI an order of magnitude below their full-rank counterparts ($0.007$ vs.\ $0.039$ on CIFAR-10; Table~\ref{tab:baseline_comparison}), so the absolute numerator $\bar{U}_e(\alpha) - \bar{U}_e(0)$ is mechanically small regardless of whether the \emph{proportional} increase is large or small.
A low-rank model and a full-rank model with identical proportional epistemic leakage would report very different $R_{\mathrm{abs}}$ values, making cross-model comparison misleading.
The relative formulation $R_{\mathrm{rel}}$ (Equation~\ref{eq:disentangle-ratio}) normalises each delta by its own baseline, yielding a scale-invariant elasticity: for every $1\%$ relative increase in aleatoric uncertainty, how many percent does epistemic uncertainty increase?
This makes $R_{\mathrm{rel}}$ directly comparable across models with different posterior families, rank constraints, and training regimes.

\paragraph{Baseline inflation.}
Table~\ref{tab:baseline_comparison} reports baseline epistemic uncertainty ($\alpha{=}0$) across all configurations.
End-to-end models show negligible inflation of $\sum_k \Ck$ over MI ($1.0$--$1.02\times$), while transfer-learning models exhibit progressively higher inflation: low-rank ($1.08$--$1.17\times$), full-rank ($1.35$--$1.59\times$), MC~dropout ($1.62$--$1.89\times$).
Training regime matters more than dataset: CIFAR-10 low-rank with transfer learning ($1.08\times$) shows more inflation than CIFAR-10 low-rank from scratch ($1.02\times$) on the \emph{same data}.

\begin{figure}[!ht]
  \centering
  \includegraphics[width=0.48\columnwidth]{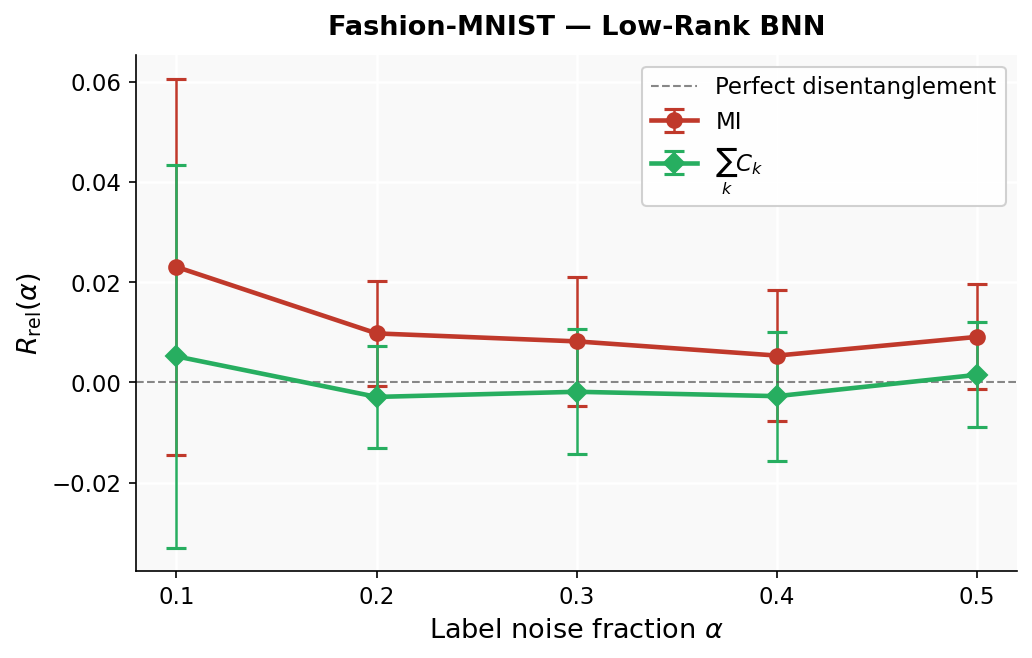}%
  \hfill
  \includegraphics[width=0.48\columnwidth]{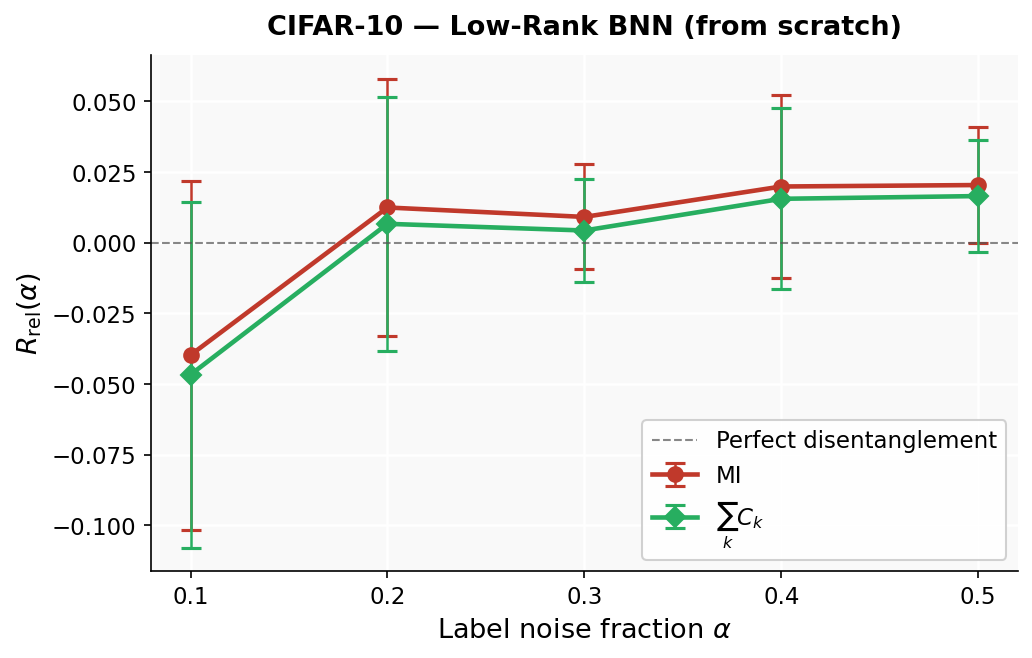}
  \caption{Disentanglement ratios $|R_{\mathrm{rel}}(\alpha)|$ for end-to-end low-rank models. $\sum_k \Ck$ (green) is closer to zero than MI (red) at every noise level.
  \textbf{Left:} Fashion-MNIST. \textbf{Right:} CIFAR-10.}
  \label{fig:disentangle-ratios-e2e}
\end{figure}

\begin{figure}[!ht]
  \centering
  \includegraphics[width=\columnwidth]{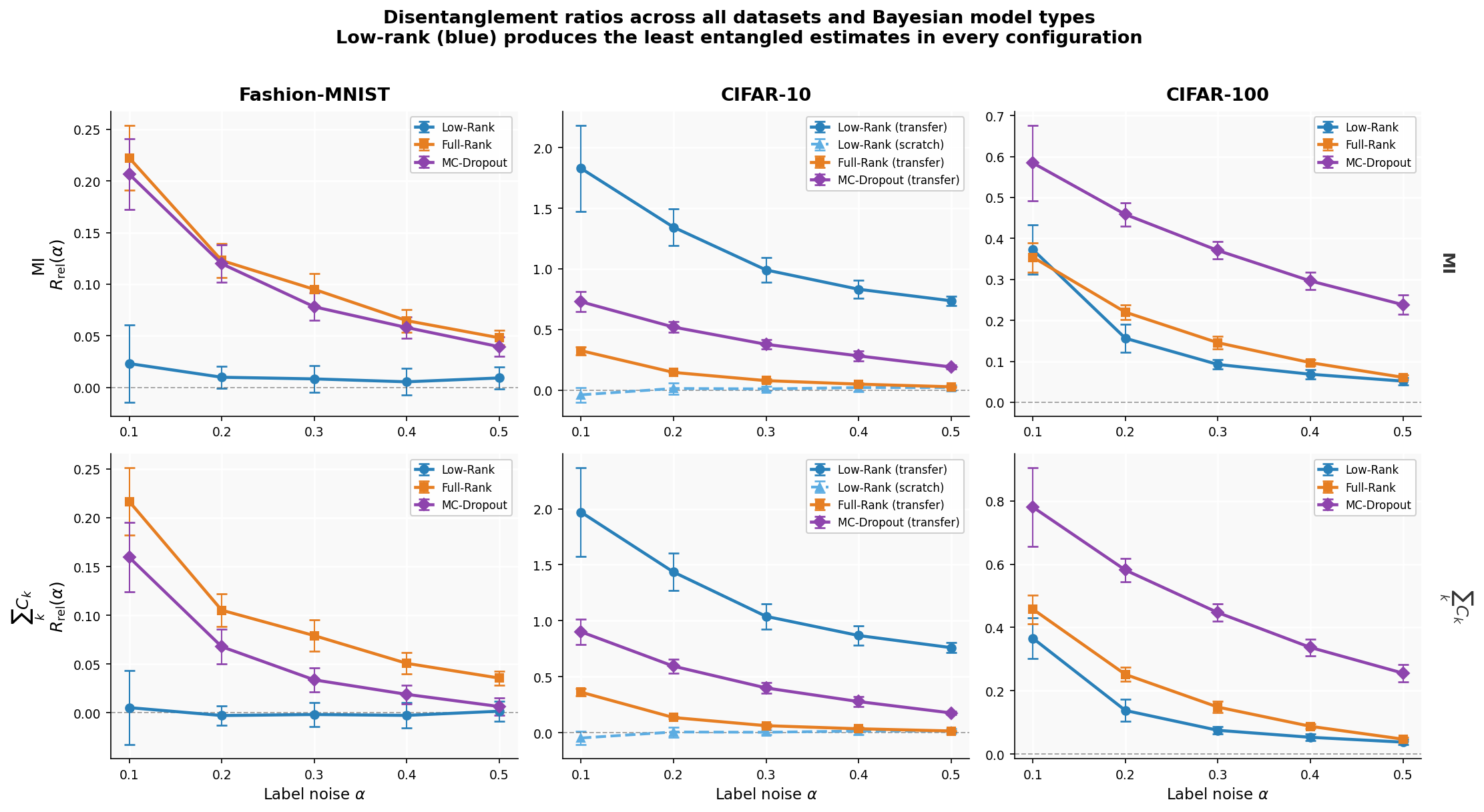}
  \caption{Relative disentanglement ratios $|R_{\mathrm{rel}}(\alpha)|$ across all datasets and Bayesian model types. Top row: MI. Bottom row: $\sum_k \Ck$. Low-rank (blue) produces the least entangled estimates in every configuration.}
  \label{fig:cross-dataset-ratios}
\end{figure}
\begin{table}[!ht]
\centering
\caption{Baseline epistemic uncertainty ($\alpha{=}0$, no label noise). Inflation $= \sum_k \Ck \,/\, \text{MI}$.}
\label{tab:baseline_comparison}
\begin{tabular}{llccccc}
\toprule
Dataset & Model & Training & $K$ & MI & $\sum_k \Ck$ & Inflation \\
\midrule
Fashion-MNIST & Low-rank Bayes & From scratch & 10 & 0.008 & 0.008 & $1.00\times$ \\
CIFAR-10 & Low-rank Bayes & From scratch & 10 & 0.0067 & 0.0069 & $1.02\times$ \\
\midrule
CIFAR-10 & Low-rank Bayes & Transfer & 10 & 0.0045 & 0.0048 & $1.08\times$ \\
CIFAR-10 & Full-rank Bayes & Transfer & 10 & 0.039 & 0.053 & $1.35\times$ \\
CIFAR-10 & MC~Dropout & Transfer & 10 & 0.039 & 0.063 & $1.62\times$ \\
\midrule
CIFAR-100 & Low-rank Bayes & Transfer & 100 & 0.042 & 0.049 & $1.17\times$ \\
CIFAR-100 & Full-rank Bayes & Transfer & 100 & 0.190 & 0.302 & $1.59\times$ \\
CIFAR-100 & MC~Dropout & Transfer & 100 & 0.238 & 0.450 & $1.89\times$ \\
\bottomrule
\end{tabular}
\end{table}

\paragraph{Disentanglement ratios: CIFAR-10 transfer.}
Table~\ref{tab:disentanglement_cifar10} shows that under transfer learning MI outperforms $\sum_k \Ck$ for the low-rank model at all noise levels ($|R_{\mathrm{rel}}| = 1.83$ vs.\ $1.97$ at $\alpha{=}0.1$), consistent with the high inflation of this configuration.
For full-rank models, $\sum_k \Ck$ overtakes MI at $\alpha \geq 0.2$.
MC~dropout shows the worst overall ratios ($0.18$--$0.90$), with $\sum_k \Ck$ recovering at high noise ($\alpha \geq 0.4$).
The model ordering for disentanglement quality is consistent: low-rank best under end-to-end training but worst under transfer, full-rank intermediate, MC~dropout worst overall.

\begin{table}[!ht]
\centering
\caption{$|R_{\mathrm{rel}}(\alpha)|$ for CIFAR-10 ($K{=}10$, transfer learning). Values $< 0.3$ in \textbf{bold}. $\dagger$: $\sum_k \Ck$ outperforms MI.}
\label{tab:disentanglement_cifar10}
\small
\begin{tabular}{lcccccc}
\toprule
& \multicolumn{2}{c}{Low-rank} & \multicolumn{2}{c}{Full-rank} & \multicolumn{2}{c}{MC~Dropout} \\
\cmidrule(lr){2-3} \cmidrule(lr){4-5} \cmidrule(lr){6-7}
$\alpha$ & MI & $\sum_k \Ck$ & MI & $\sum_k \Ck$ & MI & $\sum_k \Ck$ \\
\midrule
0.1 & $1.830$ & $1.970$ & $\mathbf{0.325}$ & $0.362$ & $0.728$ & $0.901$ \\
0.2 & $1.344$ & $1.435$ & $\mathbf{0.145}$ & $\mathbf{0.136}^{\dagger}$ & $0.519$ & $0.594$ \\
0.3 & $0.991$ & $1.039$ & $\mathbf{0.077}$ & $\mathbf{0.063}^{\dagger}$ & $0.377$ & $0.400$ \\
0.4 & $0.832$ & $0.868$ & $\mathbf{0.048}$ & $\mathbf{0.035}^{\dagger}$ & $\mathbf{0.282}$ & $\mathbf{0.278}^{\dagger}$ \\
0.5 & $0.737$ & $0.760$ & $\mathbf{0.026}$ & $\mathbf{0.016}^{\dagger}$ & $\mathbf{0.190}$ & $\mathbf{0.177}^{\dagger}$ \\
\bottomrule
\end{tabular}
\end{table}

\paragraph{Disentanglement ratios: CIFAR-100 transfer.}
Table~\ref{tab:disentanglement_cifar100} extends the analysis to $K{=}100$.
Low-rank achieves strong disentanglement ($|R_{\mathrm{rel}}| < 0.37$) for both metrics at all noise levels, with $\sum_k \Ck$ winning at every $\alpha$.
Full-rank shows MI winning at low noise but $\sum_k \Ck$ recovering at $\alpha \geq 0.4$.
MC~dropout is the worst configuration: MI wins at all noise levels, with ratios exceeding $0.58$.

\begin{table}[!ht]
\centering
\caption{$|R_{\mathrm{rel}}(\alpha)|$ for CIFAR-100 ($K{=}100$, transfer learning). Values $< 0.3$ in \textbf{bold}. $\dagger$: $\sum_k \Ck$ outperforms MI.}
\label{tab:disentanglement_cifar100}
\small
\begin{tabular}{lcccccc}
\toprule
& \multicolumn{2}{c}{Low-rank} & \multicolumn{2}{c}{Full-rank} & \multicolumn{2}{c}{MC~Dropout} \\
\cmidrule(lr){2-3} \cmidrule(lr){4-5} \cmidrule(lr){6-7}
$\alpha$ & MI & $\sum_k \Ck$ & MI & $\sum_k \Ck$ & MI & $\sum_k \Ck$ \\
\midrule
0.1 & $0.373$ & $\mathbf{0.366}^{\dagger}$ & $0.354$ & $0.458$ & $0.584$ & $0.780$ \\
0.2 & $\mathbf{0.156}$ & $\mathbf{0.138}^{\dagger}$ & $\mathbf{0.220}$ & $\mathbf{0.252}$ & $0.459$ & $0.581$ \\
0.3 & $\mathbf{0.092}$ & $\mathbf{0.075}^{\dagger}$ & $\mathbf{0.145}$ & $\mathbf{0.149}$ & $0.371$ & $0.447$ \\
0.4 & $\mathbf{0.069}$ & $\mathbf{0.053}^{\dagger}$ & $\mathbf{0.097}$ & $\mathbf{0.088}^{\dagger}$ & $\mathbf{0.296}$ & $0.337$ \\
0.5 & $\mathbf{0.052}$ & $\mathbf{0.038}^{\dagger}$ & $\mathbf{0.061}$ & $\mathbf{0.047}^{\dagger}$ & $\mathbf{0.238}$ & $\mathbf{0.256}$ \\
\bottomrule
\end{tabular}
\end{table}
\begin{figure}[!ht]
  \centering
  \includegraphics[width=0.37\columnwidth]{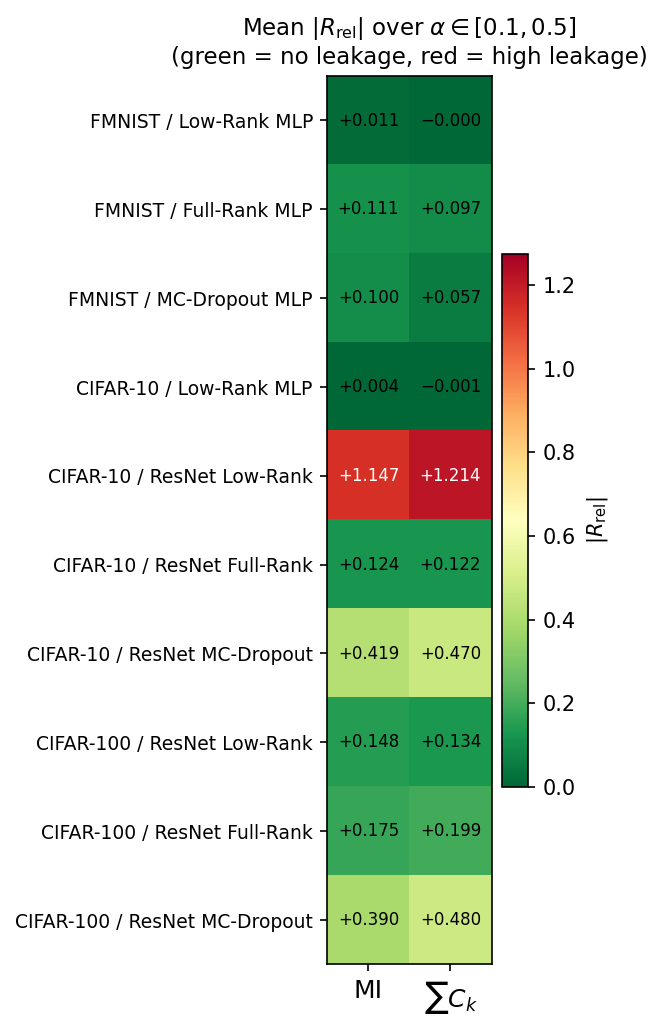}
  \caption{Mean $|R_{\mathrm{rel}}|$ averaged over $\alpha \in \{0.1,\dots,0.5\}$ for all ten model/dataset combinations.
  Green indicates near-zero leakage; red indicates high entanglement.
  End-to-end low-rank models (top rows) achieve the strongest disentanglement for both metrics.}
  \label{fig:disentangle-heatmap}
\end{figure}
\paragraph{Key patterns across transfer-learning experiments.}
The controlled comparison on CIFAR-10 is decisive: using the \emph{same dataset} ($K{=}10$) and \emph{same model type} (low-rank), training from scratch yields $|R_{\mathrm{rel}}| < 0.05$ while transfer learning yields $0.74$--$1.97$.
This directly isolates the training regime as a causal factor.
Within transfer learning, higher baseline inflation ($\sum_k \Ck\,/\,\text{MI}$) correlates with worse $\sum_k \Ck$ performance relative to MI, a pattern that holds for both CIFAR-10 and CIFAR-100.
The implication is that when the entire model participates in the Bayesian posterior, the resulting predictive distributions have a variance structure that is well captured by the $\Ck$ normalisation; when only a classifier head is Bayesian, this structure degrades.

\paragraph{Effect of rank constraint on uncertainty scale.}
The low-rank variational framework of \citet{toure2026singular}
parameterizes each weight matrix as $W = AB^\top$ with independent
mean-field Gaussian posteriors $q(A)q(B)$ on the factors
$A \in \mathbb{R}^{m \times r}$, $B \in \mathbb{R}^{n \times r}$.
The induced posterior on $W$ is \emph{singular} with respect to
Lebesgue measure, concentrating entirely on the manifold
$\mathcal{R}_r$ of rank-$r$ matrices: every sampled weight matrix is
constrained to have rank at most $r$, though the specific column and
row spaces vary across posterior samples.
Under the same transfer-learning protocol on CIFAR-10, the low-rank
model ($r{=}32$) produces baseline MI an order of magnitude below its
full-rank counterpart ($0.0045$ vs.\ $0.039$), a gap that persists
across all noise levels (e.g., at $\alpha{=}0.5$: $0.047$ vs.\
$0.072$).
The from-scratch low-rank model exhibits a similar absolute scale
($\text{MI} = 0.0067$), confirming that the compression is driven
primarily by the rank constraint rather than the training regime.
This compression has two distinct sources.
First, the rank constraint itself restricts the number of
independent directions along which weight matrices can vary; the
resulting predictive distributions inherit this reduced
dimensionality, yielding smaller $\Var[p_k]$ and thus smaller MI.
Second, while the factored posterior introduces structured
correlations between weight entries ($\Covmat(W_{ij}, W_{i'j'}) \neq 0$
whenever the entries share latent factors), it remains mean-field
\emph{within} each factor, which is known to underestimate marginal
variances relative to the true
posterior~\citep{blei2017variational,Turner_Sahani_2011}, further
concentrating the predictive distribution.
Crucially, this compression affects the \emph{absolute scale} of
uncertainty but not the \emph{relative ranking} of inputs or the
\emph{within-model metric comparison}: since $\sum_k \Ck$ and MI
share the same compressed scale by construction
(Equation~\ref{eq:additive}), their relative disentanglement ratios
$R_{\mathrm{rel}}(\alpha)$ remain directly comparable within a given model, and the
finding that $\sum_k \Ck$ leaks less than MI holds independently of
the overall scale.
However, uncertainty thresholds calibrated on one posterior family
cannot be transferred to another without recalibration, and the small
absolute values produced by low-rank models should not be interpreted
as evidence of low true epistemic uncertainty but rather as a
consequence of both the manifold constraint on the posterior support
and the mean-field under-dispersion within the learned factors.

\subsection{Sensitivity to Class Cardinality}\label{app:disentangle-scaling}

The $1/\mu_k$ normalisation that corrects boundary suppression also introduces a scaling effect with the number of classes $K$.
Under the simplex constraint $\sum_k \bar{p}_k = 1$, as $K$ grows the average probability per class decreases.
For a model assigning probability $\alpha$ to the predicted class and distributing the remainder uniformly:
\begin{equation}
  \bar{p}_k \;=\; \frac{1-\alpha}{K-1} \quad \text{for } k \neq \text{pred}.
\end{equation}
Assuming approximately uniform variance $\Var(p_k) \approx \sigma^2$ across classes:
\begin{equation}
  \sum_k \Ck \;\approx\; \frac{\sigma^2}{2\alpha} \;+\; \frac{(K-1)^2\,\sigma^2}{2(1-\alpha)} \;\sim\; \mathcal{O}(K^2),
\end{equation}
whereas MI, measuring entropy differences bounded by $\log K$, does not accumulate linearly over classes.

Empirically, this manifests as higher baseline inflation on CIFAR-100 compared to CIFAR-10 for every model type (Table~\ref{tab:baseline_comparison}): low-rank $1.17\times$ vs.\ $1.08\times$; full-rank $1.59\times$ vs.\ $1.35\times$; MC~dropout $1.89\times$ vs.\ $1.62\times$.
The $K$-dependence interacts with model type: low-rank models remain resilient at $K{=}100$ ($|R_{\mathrm{rel}}| < 0.37$), while MC~dropout fails substantially ($|R_{\mathrm{rel}}|$ up to $0.78$ at $\alpha{=}0.1$).

\paragraph{Mitigation strategies.}
For high-cardinality settings ($K \gtrsim 50$), two modifications can reduce inflation without sacrificing the per-class interpretability of $\Ck$:
(i)~\emph{truncated summation}, restricting $\sum_k \Ck$ to the top-$k$ most probable classes, which eliminates the accumulation of amplified terms from negligible-probability classes;
(ii)~\emph{probability-weighted aggregation}, weighting each $\Ck$ by $\mu_k$ to down-weight low-probability contributions.
We recommend reporting both $\sum_k \Ck$ and MI in high-$K$ settings: the per-class decomposition retains interpretive value even when the aggregate $\sum_k \Ck$ underperforms MI as a scalar summary.

\end{document}